\definecolor{oxfordblue}{rgb}{0.0, 0.13, 0.28} 
\definecolor{carnelian}{rgb}{0.7, 0.11, 0.11}  
\definecolor{darkslategray}{rgb}{0.18, 0.31, 0.31} 
\definecolor{skyblue}{rgb}{0.53, 0.81, 0.92}    
\definecolor{cadetblue}{rgb}{0.37, 0.62, 0.63} 
\definecolor{carnelian}{rgb}{0.7, 0.11, 0.11}  
\newcolumntype{P}[1]{>{\centering\arraybackslash}p{#1}}
\newtheorem{mytheorem}{Theorem}
\newtheorem{mylemma}{Lemma}
\newtheorem{myassumption}{Assumption}
\newtheorem{myprop}{Proposition}
\DeclareMathOperator{\E}{\mathop{\mathbb{E}}}
\DeclareMathOperator*{\argmin}{argmin}
\def\*#1{\mathbf{#1}}
\def\$#1{\mathcal{#1}}
\def\bb#1{\mathbb{#1}}
\def\pinv#1{\*{#1}^{\dagger}}
\newcommand{\blue}[1]{#1}
\newcommand{\inner}[1]{\left\langle #1 \right\rangle}
\newcommand{\cA}{{\mathcal{A}}}
\newcommand{\cE}{{\mathcal{E}}}
\newcommand{\bbP}{{\mathbb{P}}}
\newcommand{\myvec}{{\rm vec}}
\newcommand{\abbr}{{Opt-Laws}}
\newcommand{\ito}{{It\^{o}}}
\newcommand{\floor}[1]{\lfloor#1\rfloor}
\newcommand*{\addFileDependency}[1]{
  \typeout{(#1)}
  \@addtofilelist{#1}
  \IfFileExists{#1}{}{\typeout{No file #1.}}
}
\newcommand{\algorithmfootnote}[2][\footnotesize]{%
  \let\old@algocf@finish\@algocf@finish
  \def\@algocf@finish{\old@algocf@finish
    \leavevmode\rlap{\begin{minipage}{\linewidth}
    #1#2
    \end{minipage}}%
  }%
}
\begin{document}

\title{Optimization Hyper-parameter Laws for Large Language Models}

\author{\name Xingyu Xie \email xyxie@nus.edu.sg \\
        \addr Department of Mathematics,
       National University of Singapore, Singapore.
       \AND
       \name Kuangyu Ding \email kuangyud@u.nus.edu \\
       \addr Department of Mathematics,
       National University of
Singapore, Singapore.
\AND
Shuicheng Yan \email shuicheng.yan@gmail.com \\
\addr School of Computing, National University of Singapore.
\AND
Kim-Chuan Toh \email mattohkc@nus.edu.sg \\
\addr Department of Mathematics and Institute of Operations Research and Analytics, 
National University of
Singapore, Singapore.
\AND
Tianwen Wei \email wei.lille1@gmail.com \\
\addr Skywork AI, Beijing.
}

\editor{editor}

\maketitle

\begin{abstract}
Large Language Models have driven significant AI advancements, yet their training is resource-intensive and highly sensitive to hyper-parameter selection. While scaling laws provide valuable guidance on model size and data requirements, they fall short in choosing dynamic hyper-parameters, such as learning-rate (LR) schedules, that evolve during training. To bridge this gap, we present Optimization Hyper-parameter Laws (\abbr{}), 
\blue{a framework that predicts final training loss as a function of LR schedule, model size, and data size. Grounded in SDE-based convergence and escape analyses, \abbr{} yield interpretable convergence and escape features that predict final training loss across model scales, enabling schedule pre-selection from small-scale experiments. Empirically, \abbr{} achieve a 94\% Top-2 hit rate for identifying near-optimal schedule candidates on held-out configurations, correctly identify the best-performing schedule family in all five evaluated out-of-family settings, and detect training divergence with F1\,=\,0.92.}
\end{abstract}

\begin{keywords}
  LLM Efficient Training, Optimization Analysis, Scaling Laws, Convergence Guarantee, Escaping Probability.  
\end{keywords}

\section{Introduction}

Large Language Models (LLMs) have emerged as a leading paradigm in artificial intelligence, yet their training processes impose significant demands on computational resources and energy~\citep{achiam2023gpt,dubey2024llama,deepseekai2024deepseekv2,xie2026slow}. Given the immense scale and associated costs, training these models is typically a one-off endeavor, where crucial hyper-parameters, such as peak learning rate (LR), warmup steps, and LR schedules, must be predetermined and remain fixed throughout the training process. However, determining the optimal values for these hyper-parameters prior to experimentation is often challenging. Inadequate hyper-parameter selection can severely compromise the training process, potentially leading to failure and the consequent waste of computational resources and financial investment.

To address the complexities of hyper-parameter selection in large-scale models, researchers have increasingly relied on \textbf{Scaling Laws}~\citep{kaplan2020scaling,hoffmann2022training,achiam2023gpt}. These scaling laws, derived from extensive empirical studies on small-scale models and datasets, provide a practical framework for predicting the relationships between certain hyper-parameters, such as model size and data volume, and training outcomes. By fitting these observed relationships to the power-law formula, scaling laws offer a heuristic approach for extrapolating the expected performance of large-scale models. This enables the selection of model size and data requirements prior to full-scale training, thereby reducing the likelihood of inefficient resource utilization.

While existing research on scaling laws focuses on the relationship between model scale, dataset size and model performance, few studies have explored the scaling law on training hyper-parameters, particularly time-dependent ones like the learning rate schedule. These hyper-parameters, including peak learning rate, are crucial for optimal model performance. Consequently, practitioners often rely on heuristic methods that frequently fall short in addressing the complex challenges of diverse training scenarios.

This challenge extends beyond pre-training. Fine-tuning or continual training of advanced foundation models, such as LLaMA3~\citep{dubey2024llama} or DeepSeek-V2~\citep{deepseekai2024deepseekv2}, encounters similar difficulties. Although fine-tuning is less computationally expensive than pre-training, the distribution shift between the fine-tuning data and the original training data complicates the selection of optimal hyper-parameters. Furthermore, since the model size is fixed during fine-tuning, scaling laws are not easily applicable. Consequently, the identification of optimal training hyper-parameters, particularly peak LR and warmup steps, typically requires iterative cycles of fine-tuning and evaluation. These hyper-parameters play a crucial role in balancing the retention of the foundation model's inherent capabilities with its adaptation to new data~\citep{ibrahim2024simple}.

To address the challenge of hyper-parameter selection in large-scale LLM training, we propose a novel approach termed Optimization Hyper-parameter Laws, \textbf{\abbr{}}. \abbr{} exploits data gathered from small-scale models and datasets to establish mathematical relationships between training hyper-parameters and final training loss. This enables the pre-selection of suitable hyper-parameter configurations, such as the LR schedule, warmup steps, and peak LR, before commencing large-scale model training or fine-tuning.
We first apply stochastic differential equations (SDEs) to model the training dynamics of prevalent first-order optimization algorithms, such as SGD and Adam~\citep{kingma2014adam}, within a continuous-time framework, followed by a detailed analysis of convergence rates and the probability of escaping local minima. This analysis yields \blue{a statistical vector of 15 feature terms (plus one intercept)} encapsulating key training hyper-parameters, which is subsequently used in a linear regression model to obtain \abbr{}. Notably, to the best of our knowledge, this is among the first works to utilize SDEs in establishing convergence rates of gradient-based methods for general non-convex optimization and to apply time-inhomogeneous SDEs for evaluating escape probabilities in non-convex settings.

\blue{In addition to loss prediction, the law provides qualitative interpretations of observed training patterns, such as the diminishing effect of schedule choice at large data scales and the dependence of optimal peak LR on model size. We validate \abbr{} on MoE models with up to 4 billion trainable parameters and over 450 billion training tokens, covering pre-training, continual training, and fine-tuning.} The key contributions are summarized as follows:
\begin{itemize}
    \item \blue{We propose \abbr{}, a framework fitted on small-scale experiments that predicts final training loss for large-scale LLM training across model sizes, data budgets, and LR schedules. This allows pre-selection of LR schedules for pre-training, continual training, and fine-tuning before committing to full-scale runs.}
    \item \blue{We derive SDE-based convergence and escape bounds for both SGD and Adam under general time-varying learning rates in non-convex settings. Both optimizers yield similar functional forms, motivating a unified feature construction that decomposes the schedule's effect into convergence speed and escape dynamics.}
    \item \blue{We conduct extensive experiments on held-out prediction, divergence detection, out-of-family schedule generalization, and LR schedule ranking in continual training and fine-tuning. The generalized \abbr{} selects a near-optimal schedule in 94\% of held-out evaluations, detects divergence with F1${=}$0.92, and correctly identifies the best schedule family in all five out-of-family groups.}
\end{itemize}

\blue{\paragraph{Roadmap.} Sec.~2 reviews related work. Sec.~3 introduces the simplified \abbr{} at a fixed model size. Sec.~4 extends it to the generalized \abbr{} across model scales and reports the primary held-out benchmark. Sec.~5 evaluates cross-scale extrapolation, out-of-family schedule generalization, and ranking in continual training and fine-tuning. Sec.~6 provides the SDE-based theoretical justification for the feature construction.}

\section{Related Work}
\subsection{Scaling Laws}
The study of scaling laws in LLMs has been essential for understanding how model performance scales with increases in model size, data volume, and computational resources. While early research attempted to model multilayer neural network performance using power laws~\citep{rosenfeld2020a}, \cite{kaplan2020scaling} were the first to apply this approach systematically to LLM training. This foundational work had led to several variations, including the Chinchilla law~\citep{hoffmann2022training,besiroglu2024chinchilla}, Mosaic law~\citep{sardana2024beyond}, and models from DeepSeek~\citep{bi2024deepseek} and MiniCPM~\citep{hu2024minicpm}, all of which primarily utilize an empirical power law that defines the relationship between training loss, model size $N$, and data size $D$ in LLMs:
\[
\text{Loss} = \underbrace{\frac{A_1}{N^{\kappa_1}}}_{\text{Model-size dependent}} + \underbrace{\frac{A_2}{D^{\kappa_2}}}_{\text{Data-size dependent}} + \underbrace{A_3}_{\text{Irreducible}},
\]
where $\kappa_1, \kappa_2 > 0$ represent the scaling exponents, $A_1, A_2 > 0$ are coefficients, and $A_3 \in \mathbb{R}$ denotes the irreducible loss component.
Recent studies~\citep{muennighoff2024scaling,goyal2024scaling} have highlighted the impact of data repetition and quality on scaling behavior, suggesting the need for more frequent updates to existing scaling laws. 
Some researchers have also questioned the sufficiency of power law models, advocating for more complex parameterizations to better capture the relationship between model size and data volume~\citep{hernandez2021scaling,caballero2023broken}. Other studies have extended power laws to model the relationship between training loss and individual hyper-parameters, such as batch size~\citep{deepseekai2024deepseekv2}. 
Additionally, scaling laws have been applied in downstream task losses~\citep{dubey2024llama} or later stages of model alignment, such as fine-tuning~\citep{isik2024scaling} and RLHF~\citep{gao2023scaling}.
\blue{Recent works have begun incorporating LR schedules into scaling-law-style predictors. \citet{tissue2025scaling} combine cumulative learning-rate and annealing-area features with a model-size interaction to predict per-step loss, while \citet{luo2025multipower} develop a multi-power-law approach for the same task. \abbr{} differs in both construction and scope: the feature family is derived from SDE-based convergence and escape analyses rather than empirical curve fitting, the prediction target is the final training loss across model scales rather than the per-step loss curve, and the framework extends to continual training and fine-tuning through history-aware modifications.}

\subsection{Convergence Analysis via Dynamical Systems}
Dynamical systems are powerful tools in the convergence analysis of optimization algorithms. Over the past decade, numerous studies such as \cite{su2014differential,may2017asymptotic,attouch2018fast,muehlebach2019dynamical,attouch2024fast} have utilized ordinary differential equations (ODEs) to analyze algorithmic convergence properties, particularly employing high-order (with order $\geq2$) ODEs to intuitively interpret the mystery behind momentum acceleration, such as Polyak's heavy ball method \citep{polyak1987introduction} and Nesterov's acceleration \citep{nesterov1983method} in smooth convex optimization problems. In the realm of non-smooth non-convex optimization, differential inclusions, which are more general than ODEs, have recently been applied to establish the convergence of subgradient methods \citep{duchi2018stochastic,davis2020stochastic,xiao2024adam,ding2024stochastic}. In addition to ODE-based methods, recent studies~\citep{gess2023convergence,maulen2022sde,maulen2024sde} have started using SDEs to derive convergence rates for stochastic gradient methods in convex optimization problems or non-convex optimization problem under Polyak-\L{}ojasiewicz (PL) condition. However, to the best of our knowledge, no SDE-based approach has been established for determining convergence rates in the general non-convex settings. \emph{To the best of our knowledge, our work is among the first to apply SDEs in establishing convergence rates for general non-convex problems, with our derived bounds applicable to any general learning rate policy, without relying on specific patterns such as constant, sublinearly decaying, or linearly decaying learning rates.}
\blue{On the optimization side, \citet{li2021expcos} and \citet{wang2021stepdecay} are relevant to our schedule-dependent convergence discussion, while \citet{bergsma2025straightzero} is particularly relevant to our discussion of schedules that decay to zero and to the limitations of schedules with non-zero final learning rates.}

\subsection{Escaping analysis via SDEs}
A closely related research area involves analyzing SGD's escape dynamics using SDEs~\citep{nguyen2019first,xie2020diffusion,mori2022power,ibayashi2023does,battash2024revisiting}. These work focus on estimating the exit time, which measures how quickly an SGD sequence can move from a sharp local minimum to a flatter region, potentially improving generalization performance~\citep{keskar2016large,foret2020sharpness}. The classic Eyring-Kramers law~\citep{kramers1940brownian,berglund2013kramers}, based on large deviation theory, is typically used for such exit time estimations. However, this approach is not directly applicable in our case due to the time-inhomogeneity of the SDEs, which arises from the time-dependent learning rate schedule. To address this challenge, we employ a Gaussian approximation to make the escape analysis more tractable under mild conditions.

\section{Optimization Hyper-parameter Laws}\label{sec:opt-laws-main}
\blue{This section introduces the simplified \abbr{} at a fixed model size, capturing the relationship between LR schedule and final training loss without cross-scale variation. The generalized multi-scale extension and held-out evaluation are presented in Sec.~\ref{sec:general}.}
\subsection{\abbr{} with Fixed Model Size}\label{sec:opt-law1}
We \blue{propose} the \abbr{} in Eqn.~\eqref{eq:optlaw} to predict the training loss for LLMs. \blue{Because the training epoch for both pre-training and continual training is typically set to one, the training loss closely approximates the validation loss.}
In this section, we fix the model size and consider the LR schedule $\eta(t) \geq 0$ for all $t \in [0, S]$, where $\eta_{\operatorname{max}}$ denotes the peak LR, and $S$ represents the total training steps (proportional to data size $D$ such that $D = S \times \text{Batch size} \times \text{Token length}$). The function $\eta(t)$ is defined as $\eta(\Delta_t k) = \eta_k$, where $\eta_k$ is the LR used at the $k$-th training step, and $\Delta_t$ is a sufficiently small time step. This mapping effectively links each step-index to its corresponding LR.

For simplicity, we set the initial and final values of $\eta_t$ to zero, i.e., $\eta(0) = \eta(S) = 0$, though these values can be adjusted based on practical requirements. Under these conditions, the training loss and training hyper-parameters adhere closely to the following relationship:
\begin{equation}\label{eq:optlaw}
\begin{aligned}
   \log\qty(\text{Loss})  =   &\underbrace{c_1\qty(\int_0^a \!\!\!\eta(s) \dd s)^{-\alpha_1} \!\!\!\! + c_2\qty(\int_a^S \!\!\! \eta(s) \dd s)^{-\alpha_2}}_{\text{convergence speed}} + \underbrace{\frac{c_3}{S} + b}_{\text{bias}} + \\
   &\underbrace{c_4 \qty(\int_0^a \!\! \qty(\eta^\prime(s))^2 \dd s)^{\alpha_3} + c_5 \qty(\int_a^S \!\! \qty(\eta^\prime(s))^2 \dd s)^{\alpha_4}}_{\text{ease of escaping local region}} \\
\end{aligned}
\end{equation}
\blue{where $a$ denotes the warm-up duration (denoted by $a_1$ in the later generalized multi-phase notation),} $\eta^\prime(s)$ is the time derivative of $\eta(s)$, the constants $c_1, c_2, c_3, c_4, c_5, \alpha_1, \alpha_2, \alpha_3, \alpha_4>0$, and $b\in \bb R$ are all dependents on the model and training data. 
Analogous to scaling laws, \abbr{} is first parameterized by fitting its constants and power terms on a small-scale model with a similar network architecture and a small subset of the dataset. The parameterized \abbr{} can then be applied to choosing hyper-parameters in large-scale training. The details on the parameters fitting are clarified in Secs.~\ref{sec:fit1} and~\ref{sec:apply-opt-laws}.

The first component in Eqn.~\eqref{eq:optlaw} addresses the convergence speed, elucidating the impact of the LR schedule $\eta(t)$ on optimization convergence. Given the same computational budget, training dynamics with faster convergence tend to achieve lower training loss, thereby enhancing overall efficiency. Notably, the loss is inversely proportional to $\int_a^S \eta_t$, suggesting that, within certain bounds, a higher peak LR can be beneficial~\citep{xie2024adan}. However, an excessively high peak LR can also increase the integral $\int_a^S {\eta_t^\prime}^2$, indicating the need to carefully balance the benefits of a high peak LR against the potential drawbacks.
{Note that for convenience, we use notation such as $\int_a^S {\eta_t^\prime}^2$
to denote $\int_a^S (\eta^\prime(t))^2dt$ in the previous sentence and other parts of this paper.}

The escape terms in Eqn.~\eqref{eq:optlaw} reflect the schedule's influence on basin selection. A larger $\int (\eta'_t)^2$ increases the trapping probability near the current basin, so overly rapid warmup or cooldown can lock the optimizer into a suboptimal region. This is consistent with LLM training practice, where excessively aggressive schedule transitions degrade performance~\citep{hu2024minicpm}.
\begin{figure}[t]
    \centering
    \begin{minipage}{\textwidth}
        \subfigure[]{
            \includegraphics[width=0.48\linewidth]{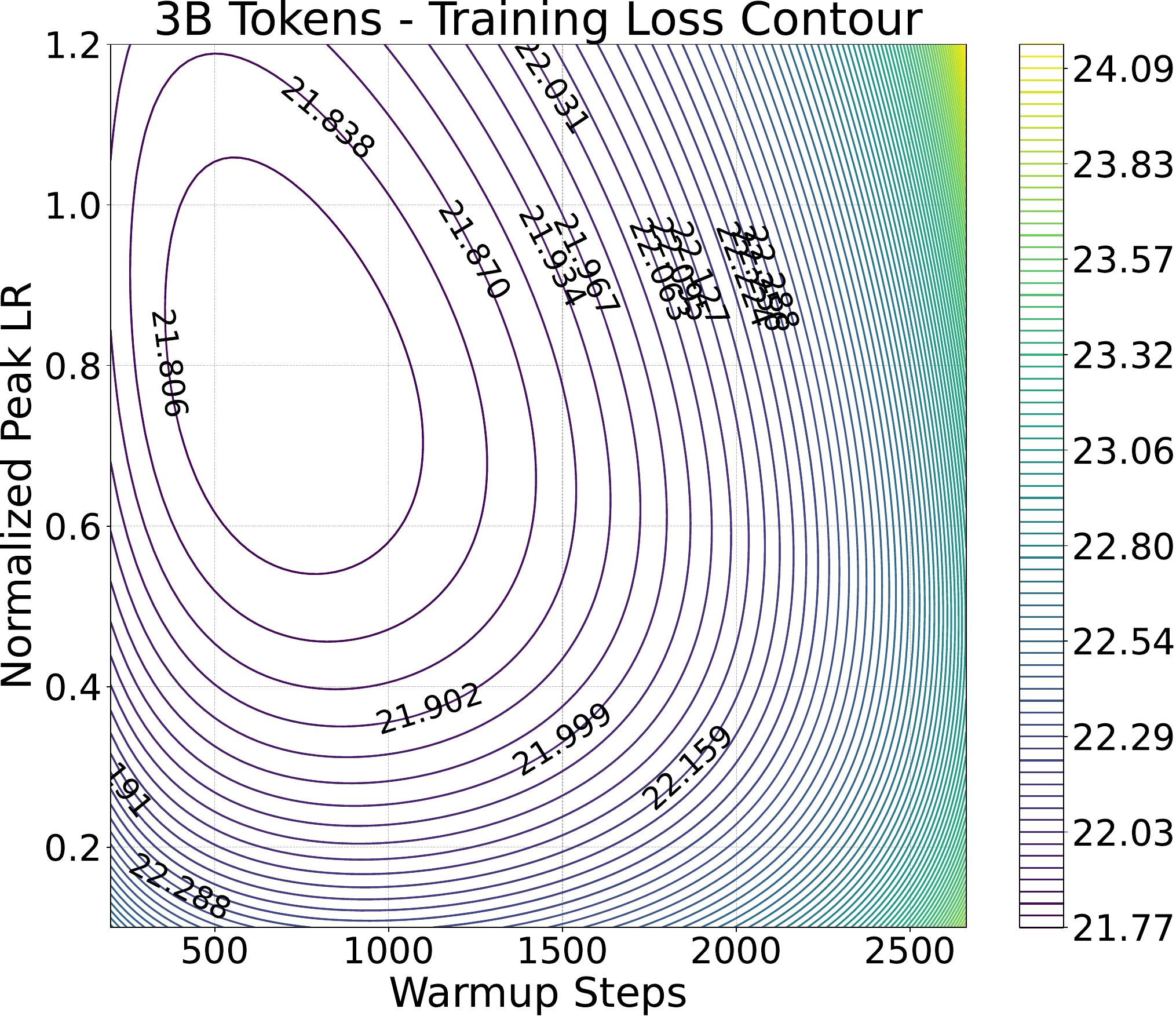}
        }
        \subfigure[]{
            \includegraphics[width=0.48\linewidth]{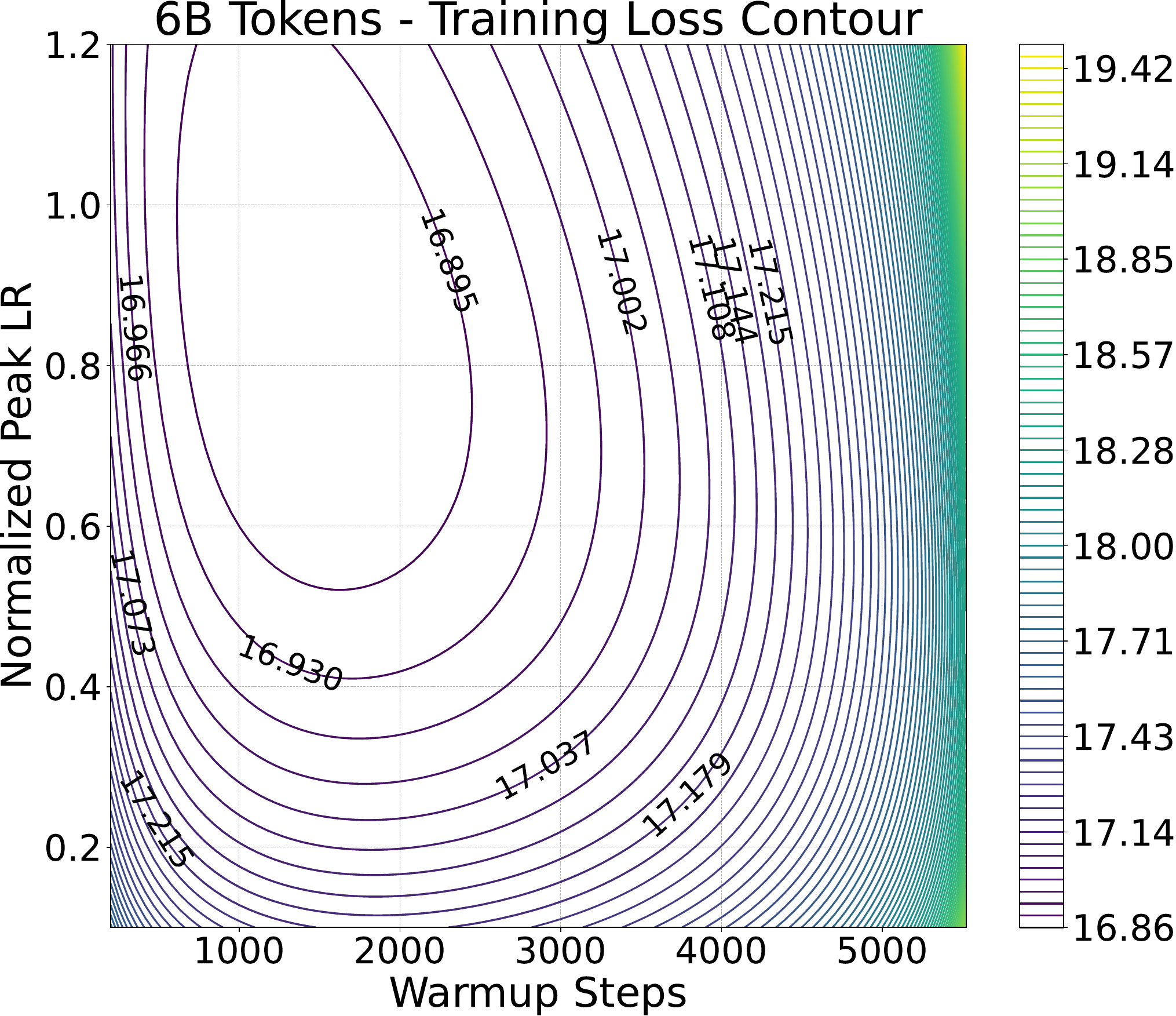}
        }
    \end{minipage}
    \vspace{-4mm}  
    \begin{minipage}{\textwidth}
        \subfigure[]{
            \includegraphics[width=0.48\linewidth]{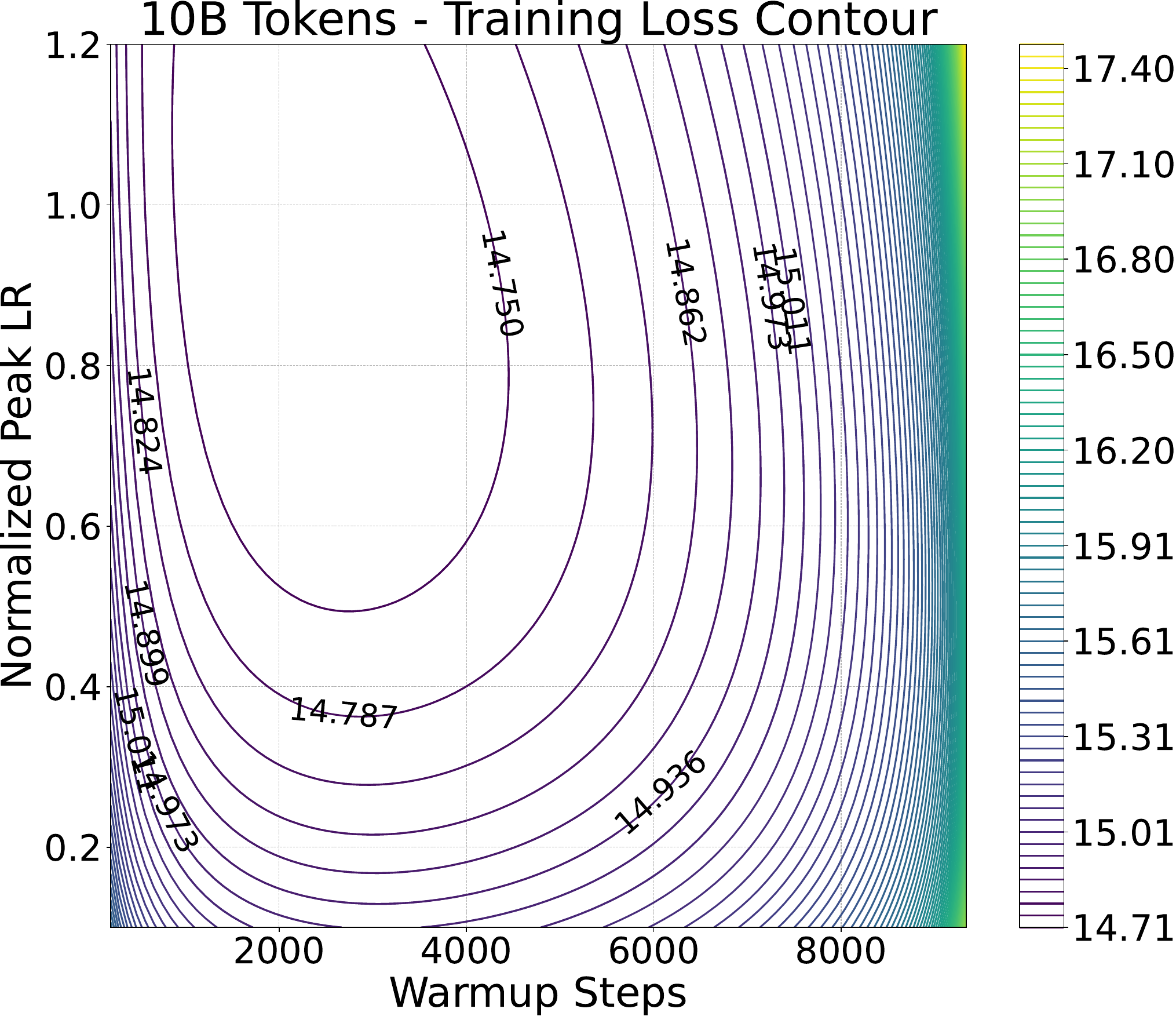}
        }
        \subfigure[]{
            \includegraphics[width=0.48\linewidth]{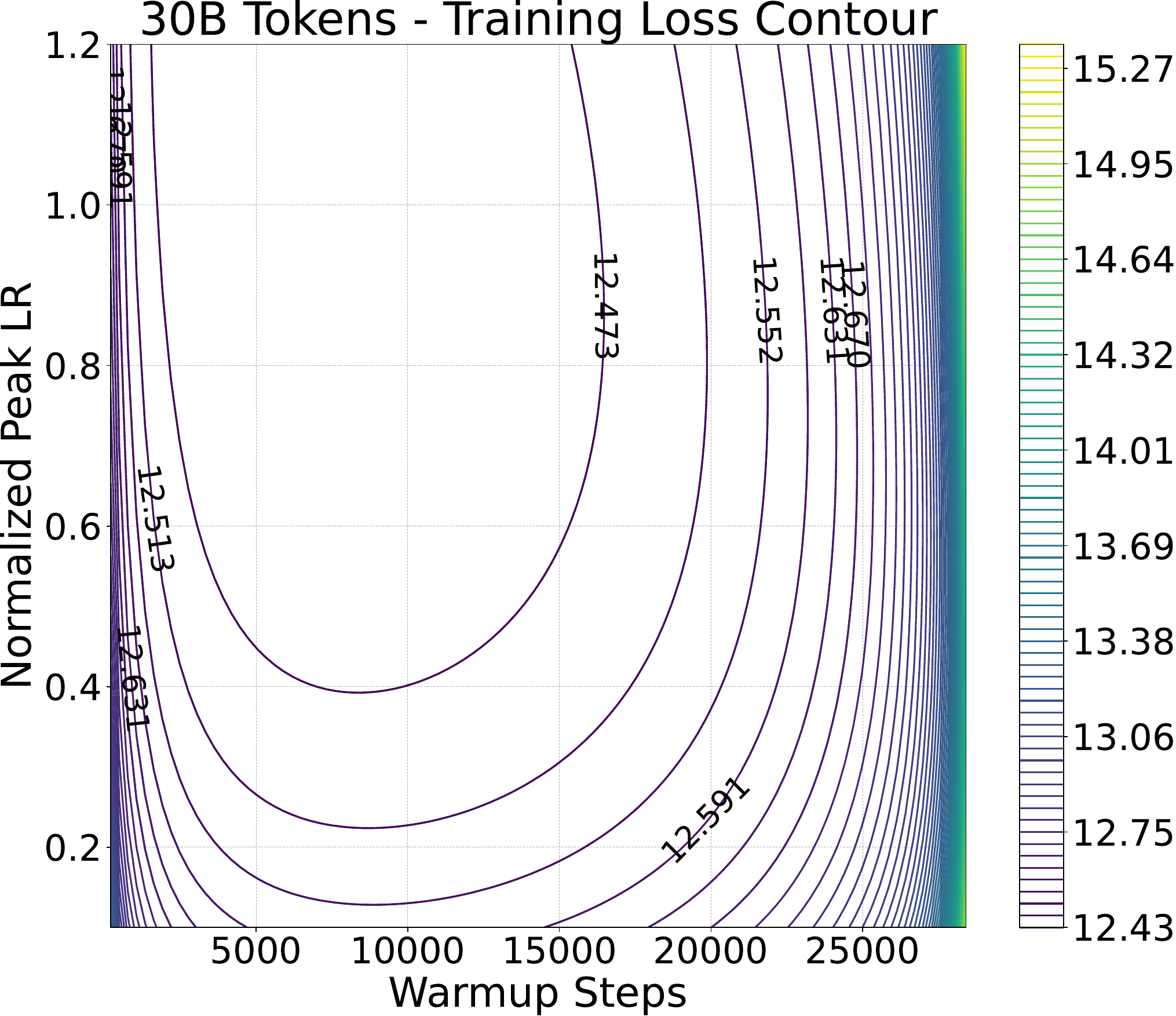}
        }
    \end{minipage}
   \caption{\blue{\abbr{}-predicted perplexity contours as a function of warmup steps and normalized peak LR for the $8{\times}0.1$B MoE model at four token budgets (3B, 6B, 10B, 30B). Darker 
   regions indicate lower predicted loss. As the token budget grows, the low-loss region expands and becomes less sensitive to warmup configuration.}}
    \label{fig:toyexample}
    \vspace{-4mm}
\end{figure}
\subsection{\blue{Illustrative Fixed-Model Fit}}\label{sec:fit1}
\blue{This subsection provides a qualitative fixed-model illustration of the simplified \abbr{}. The main predictive evaluation of the paper is deferred to the held-out and extrapolation benchmarks in Secs.~4 and~5.}

We apply \abbr{} in Eqn.~\eqref{eq:optlaw} to fit the training loss of an $8\times0.1$B MoE model~\citep{zhao2024longskywork,wei2024skywork}. The decision to use an MoE model over a dense model is based on two key factors: (1) many state-of-the-art LLMs~\citep{achiam2023gpt,deepseekai2024deepseekv2,reid2024gemini,yang2024qwen2,bai2023qwen} employ MoE architectures, reflecting current trends, and (2) MoE models are computationally more efficient, allowing faster experimentation within our computational constraints. In this experiment, we fix the model and batch sizes (both with a token length and batch size of 1024) while varying the token quantities from the RedPajama-v2 dataset~\citep{together2023redpajama}: 3B, 6B, 10B, and 30B tokens. We adjust the warmup steps and peak LRs to analyze their impact on final loss. The LR schedule $\eta(t)$ linearly increases to a peak LR $\eta_{\operatorname{max}}$ during warmup, followed by a linear decay to zero in the cooldown phase. The warmup steps are denoted by $a$.
\blue{Here and below, the normalized peak LR means the peak LR divided by the reference scale used in our fitting procedure (see Appendix~\ref{sec:fitted-coeff}).}

\blue{We use the fitted \abbr{} to predict final training loss across the warmup-step and peak-LR grid for each token budget. The predicted perplexity contours are shown in Fig.~\ref{fig:toyexample}, with a regression error on convergent runs below 0.5\%. Several patterns are visible: as training data increases, the strong-performance region expands, indicating greater robustness to hyper-parameter variation. The influence of warmup steps weakens at larger token budgets, consistent with prior observations~\citep{gupta2023continual,ibrahim2024simple}, though excessively short warmup remains harmful (left side of Fig.~\ref{fig:toyexample}). These patterns are interpreted through the \abbr{} formulation in the next subsection. Divergent runs are excluded from the regression error and analyzed in Sec.~\ref{sec:divergence}.}

\subsection{\blue{Qualitative Predictions of the Simplified \abbr{}}}
\blue{This subsection interprets several observed LLM training phenomena through the simplified \abbr{}. By analyzing the convergence and escape terms in Eqn.~\eqref{eq:optlaw}, we derive qualitative predictions about how warmup duration and LR schedule choice affect final loss.}

\subsubsection{Influence of Warmup Steps on Training Loss}
Recent research has demonstrated that the number of warmup steps \blue{has little effect on the final loss during continual training}. \blue{For instance, \citet{ibrahim2024simple,gupta2023continual} report negligible loss variation across warmup configurations for 0.5B-parameter dense models trained on 100B+ tokens.} This finding is consistent with our results in Fig.~\ref{fig:toyexample} (d). When the token count is sufficiently large and surpasses the Chinchilla Scaling Law (25.6 tokens per parameter)~\citep{hoffmann2022training,besiroglu2024chinchilla}, the range of effective warmup steps broadens significantly, \blue{making the final loss insensitive to changes in warmup duration.}

This observation does not contradict other studies~\citep{lv2023full,jin2023rethinking} that emphasize the need for careful tuning of warmup steps. Such sensitivity is primarily seen during fine-tuning, where the token-to-parameter ratio is much lower than the value suggested by the Chinchilla Scaling Law. This aligns with our findings in Fig.~\ref{fig:toyexample}~(a) and Fig.~\ref{fig:toyexample}~(b), which show a narrower optimal range for warmup steps. Consequently, fluctuations in loss due to variations in the warmup steps are more likely to occur during fine-tuning.

\subsubsection{Insights into LR schedule Effects through Opt-Laws}\label{sec:effect}
OpenAI researchers previously observed that the LR schedule $\eta(t)$ has a negligible impact on the final training loss~\citep{kaplan2020scaling}. However, recent studies on pre-training and continual training of LLMs have discovered new LR schedules that achieve lower training losses compared to the widely-used cosine decay schedule~\citep{ibrahim2024simple,hagele2024scaling}. \blue{\abbr{} offers a unified interpretation of these seemingly contradictory observations.}

\abbr{} reveal that the model loss has an asymptotic lower bound related to the model capability. When the dataset is sufficiently large, the training loss approaches this asymptotic bound, rendering it independent of the LR schedule. However, the rate at which the loss approaches this bound is influenced by the LR schedule $\eta(t)$. This explains why, during fine-tuning or continual training, it is possible to find some LR schedules that outperform cosine decay: these newly discovered schedules enable the model to reach its performance limits more rapidly. Yet, as the dataset size increases, the differences introduced by varying LR schedules diminish.

To further elucidate this asymptotic behavior, we will now consider two specific scenarios to derive the precise expressions of the \abbr{}.
The first is the classical linear warmup followed by cosine decay. In this schedule, the LR is linearly increased to $\eta_{\operatorname{max}}$ over $a$ steps, and then decays to zero in a cosine fashion.
\begin{equation}\label{eq:cosine}
\eta_{\operatorname{cos}}(t) = 
\left\{
\begin{array}{lll}
    \eta_{\operatorname{max}}\cdot \frac{t}{a} & \qquad t \in [0, a]  & \qquad \text{(warmup)}, \\
    \frac{\eta_{\operatorname{max}}}{2} \cdot \qty(\cos\qty(\pi\cdot\frac{t-a}{S-a})+1) & \qquad  t \in [a, S] & \qquad \text{(cooldown)},
\end{array}
\right.
\end{equation}
where $a$ is the linear warmup steps, and $S$ represents the total training steps.
The second schedule has gained popularity in the recent continual training of LLMs, as evidenced in the work~\citep{hu2024minicpm,hagele2024scaling}. In this strategy, the LR is linearly increased to $\eta_{\operatorname{max}}$, held constant for a period, and then rapidly decayed to the minimum value. For simplicity, we model the decay as a linear decrease to zero. 
\begin{equation}\label{eq:minicpm}
\eta_{\operatorname{const}}(t) = 
\left\{
\begin{array}{lll}
    \eta_{\operatorname{max}}\cdot \frac{t}{a} & \qquad t \in [0, a]  & \qquad \text{(warmup)}, \\
    \eta_{\operatorname{max}} & \qquad t \in [a, a_c]  & \qquad \text{(constant)}, \\
    \eta_{\operatorname{max}} \cdot \qty(1-\frac{t-a_c}{S-a_c}) & \qquad  t \in [a_c, S] & \qquad \text{(cooldown)}
\end{array}.
\right.
\end{equation}
Here, $a_c-a$ represents the duration of the constant LR phase, and $S-a_c$ is the period over which the LR linearly decays to zero. In \abbr{} Eqn.~\eqref{eq:optlaw}, we incorporate the constant LR phase into the cooldown phase for simplification. For more complex LR schedules, the distinction between the warmup phase and other phases is elaborated in Sec.~\ref{sec:apply-opt-laws}. 
We can derive the following proposition, which demonstrates that as the data size increases, the difference between $\eta_{\operatorname{cos}}$ and $\eta_{\operatorname{const}}$ diminishes under $\operatorname{\abbr{}}\qty(\eta(\cdot))$, which we define as $\log({\rm Loss})$ in Eqn.~\eqref{eq:optlaw}.
\begin{myprop}\label{prop:effect}
Let $a=r_a S$ and $a_c=r_{a_c}S$. For any $r_a>0$ and $r_{a_c}>0$ such that $0<r_a\leq r_{a_c}<1$, it holds that 
\[
\lim_{S\rightarrow\infty}\abs{\operatorname{\abbr{}}\qty(\eta_{\operatorname{cos}}(\cdot))-\operatorname{\abbr{}}\qty(\eta_{\operatorname{const}}(\cdot))}=0.
\]
\end{myprop}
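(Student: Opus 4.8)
The plan is to evaluate, for each of the two schedules, every term appearing in $\operatorname{\abbr{}}\qty(\eta(\cdot))=\log(\mathrm{Loss})$ from Eqn.~\eqref{eq:optlaw}, and to show that the difference of corresponding terms vanishes as $S\to\infty$. The first observation is that both $\eta_{\operatorname{cos}}$ and $\eta_{\operatorname{const}}$ use \emph{exactly} the same linear warmup $\eta_{\operatorname{max}}t/a$ on $[0,a]$, so that $\int_0^a\eta_{\operatorname{cos}}(s)\,ds=\int_0^a\eta_{\operatorname{const}}(s)\,ds=\eta_{\operatorname{max}}a/2$ and $\int_0^a(\eta_{\operatorname{cos}}'(s))^2\,ds=\int_0^a(\eta_{\operatorname{const}}'(s))^2\,ds=\eta_{\operatorname{max}}^2/a$. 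Consequently the two warmup-dependent terms of \eqref{eq:optlaw} — the $c_1$- and $c_4$-terms, involving $\int_0^a$ — together with the bias contributions $c_3/S$ and $b$, are identical for the two schedules and cancel exactly in the difference. It therefore remains only to control the two ``cooldown'' terms $c_2\qty(\int_a^S\eta(s)\,ds)^{-\alpha_2}$ and $c_5\qty(\int_a^S(\eta'(s))^2\,ds)^{\alpha_4}$.

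Next I would compute these cooldown integrals in closed form using $a=r_aS$, $a_c=r_{a_c}S$. The substitution $u=(s-a)/(S-a)$ together with $\int_0^1(\cos\pi u+1)\,du=1$ gives $\int_a^S\eta_{\operatorname{cos}}(s)\,ds=\tfrac{\eta_{\operatorname{max}}}{2}(S-a)=\tfrac{\eta_{\operatorname{max}}}{2}(1-r_a)S$, while splitting at $a_c$ and using $\int_0^1(1-u)\,du=\tfrac12$ gives $\int_a^S\eta_{\operatorname{const}}(s)\,ds=\eta_{\operatorname{max}}(a_c-a)+\tfrac{\eta_{\operatorname{max}}}{2}(S-a_c)=\eta_{\operatorname{max}}\big[(r_{a_c}-r_a)+\tfrac12(1-r_{a_c})\big]S$. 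Since $0<r_a\le r_{a_c}<1$, the bracketed coefficient of the latter is at least $\tfrac12(1-r_{a_c})>0$, so \emph{both} integrals equal a strictly positive constant times $S$ and hence diverge; raised to the negative power $-\alpha_2$, both $c_2$-terms therefore tend to $0$. For the derivative integrals, $\eta_{\operatorname{cos}}'(s)=-\tfrac{\pi\eta_{\operatorname{max}}}{2(S-a)}\sin\!\big(\pi\tfrac{s-a}{S-a}\big)$ and $\int_0^1\sin^2(\pi u)\,du=\tfrac12$ give $\int_a^S(\eta_{\operatorname{cos}}'(s))^2\,ds=\tfrac{\pi^2\eta_{\operatorname{max}}^2}{8(1-r_a)S}$, whereas $\eta_{\operatorname{const}}'$ is $0$ on $[a,a_c]$ and $-\eta_{\operatorname{max}}/(S-a_c)$ on $[a_c,S]$, so $\int_a^S(\eta_{\operatorname{const}}'(s))^2\,ds=\eta_{\operatorname{max}}^2/((1-r_{a_c})S)$. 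Both are $\Theta(1/S)$ because $r_a,r_{a_c}<1$, so raised to the positive powers $\alpha_3,\alpha_4$ both $c_5$-terms tend to $0$.

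Putting the pieces together, every term of $\operatorname{\abbr{}}\qty(\eta_{\operatorname{cos}}(\cdot))-\operatorname{\abbr{}}\qty(\eta_{\operatorname{const}}(\cdot))$ either cancels exactly (the warmup and bias terms) or is a difference of two quantities each converging to $0$ (the cooldown terms); in fact $\operatorname{\abbr{}}\qty(\eta_{\operatorname{cos}}(\cdot))\to b$ and $\operatorname{\abbr{}}\qty(\eta_{\operatorname{const}}(\cdot))\to b$, which yields the claim. I do not anticipate any serious obstacle, as the argument is a direct computation. The only points needing care are (i) recognizing that the warmup-dependent terms are \emph{literally} identical for the two schedules, so one never has to compare two distinct divergent quantities — only the cooldown terms, and those individually vanish; and (ii) checking that the coefficients of $S$ in the integrals $\int_a^S\eta$ are strictly positive under the hypothesis $0<r_a\le r_{a_c}<1$ — the inequality $r_a\le r_{a_c}$ is exactly what guarantees positivity for $\eta_{\operatorname{const}}$, ensuring these integrals genuinely diverge rather than merely stay bounded.
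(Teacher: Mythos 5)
Your proposal is correct and follows essentially the same route as the paper's proof: compute each integral in Eqn.~\eqref{eq:optlaw} in closed form for both schedules and observe that, with $a=r_aS$ and $a_c=r_{a_c}S$, every term other than $b$ vanishes as $S\to\infty$, so both $\operatorname{\abbr{}}$ values converge to $b$. Your computations agree with the paper's (your explicit check that the coefficient of $S$ in $\int_a^S\eta_{\operatorname{const}}$ is strictly positive, and your note that the warmup terms cancel identically, are minor refinements of the same argument).
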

\blue{If we set $a = 0.01S$ and $a_c = 0.85S$ (as recommended by~\cite{hu2024minicpm}), when $S$ is relatively small, $\eta_{\operatorname{const}}$ yields a lower predicted loss because $\int_a^S \eta_{\operatorname{const}}(s) \dd s$ is larger, consistent with \citet{hu2024minicpm,hagele2024scaling}. However, when $a = a_c$ (no constant phase), the ordering reverses, matching our experimental results in Sec.~\ref{sec:exp}.}

As $S$ becomes sufficiently large, the \abbr{} for both LR schedules asymptotically converge to a fixed value. This value depends solely on the model capacity and data property, and is independent of the specific LR schedule. \blue{This explains the finding of \citet{kaplan2020scaling}, where small models (3M parameters) trained on large datasets showed negligible schedule sensitivity, as the loss was already near its asymptotic limit.}

\blue{These observations motivate the generalized construction in Sec.~\ref{sec:general}, which introduces cross-scale variation, divergence modeling, and interaction terms, with empirical evaluation in Sec.~\ref{sec:exp}.}

\section{\blue{Generalized Opt-Law across Scales}}\label{sec:general}
\blue{This section extends the simplified \abbr{} to the multi-scale setting by incorporating model size, divergence boundaries, and interaction terms.}
We standardize the optimizer and LR schedule (linear warmup followed by linear cooldown) and perform grid experiments across a range of model sizes, LRs, warmup steps, and data sizes. Specific details on the MoE configurations are provided in Sec.~\ref{sec:exp-supplement}. The results are summarized in Fig.~\ref{fig:MoE-Loss}, where each grid point represents the smoothed final training loss; divergent runs are assigned a loss of $7$.
\blue{As Fig.~\ref{fig:MoE-Loss} shows, incorporating model size reveals non-monotonic local structure and divergence boundaries that the simplified unimodal form in Sec.~3 cannot capture. To address these challenges, we first define a divergence prediction criterion (Sec.~\ref{sec:divergence}), then construct the generalized feature family (Sec.~\ref{sec:generalized}), and evaluate the resulting law via a held-out benchmark on the training grid (Sec.~\ref{sec:apply-opt-laws}).}

\begin{figure}[t]
    \centering
    \includegraphics[width=\linewidth]{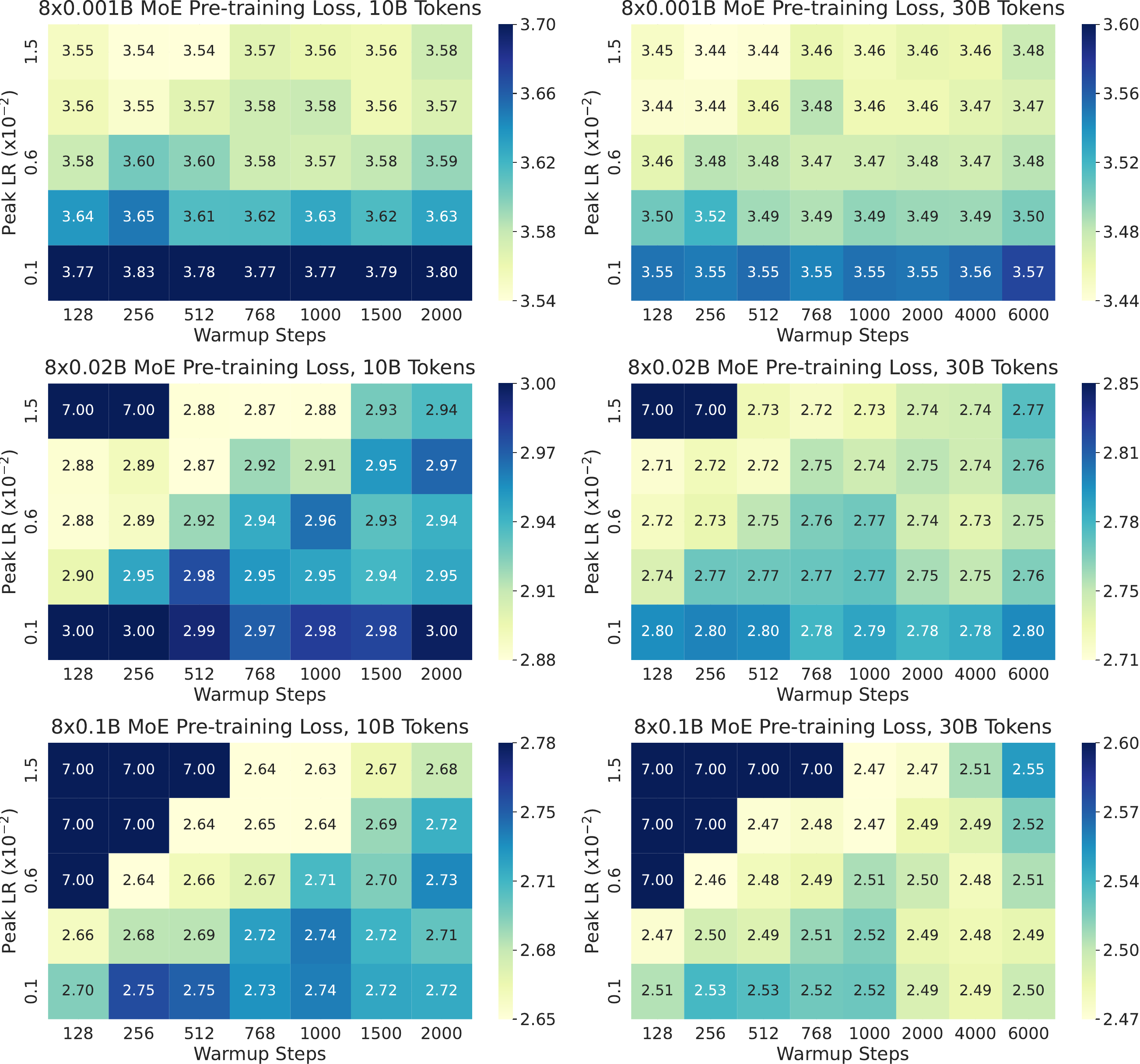}
    \caption{Smoothed final training loss across various combinations of training parameters, including model sizes from $8 \times 0.001$B to $8 \times 0.3$B MoEs, peak LRs from 1e-3 to 1.5e-2, warmup steps from 128 to 6000, and data sizes of 10B and 30B tokens. Each grid point represents the loss for a specific parameter set. Divergent training runs were assigned a loss of 7, reflecting the typical plateau observed in practice.}
    \label{fig:MoE-Loss}
    \vspace{-4mm}
\end{figure}

\subsection{Predicting Training Divergence}\label{sec:divergence}
We first discuss the selection of hyper-parameters to prevent training divergence. Fig.~\ref{fig:MoE-Loss} provides two key insights: (1) Excessively high peak LRs can cause divergence, but increasing the warmup duration can prevent this, to make training stable. (2) The hyper-parameter combinations that lead to divergence vary, depending on the model and data size.

For general non-convex optimization problems $\min_{\*x} f(\bm{\*x})$, optimization theory typically dictates that the peak LR $\eta_{\max}$ should not exceed $2/L$, where $L$ is the Lipschitz constant of the function gradient $\nabla f(\cdot)$, regardless of the LR schedule~\citep{arjevani2019lower,xie2024adan, rotaru2024exact}. However, estimating the Lipschitz constant for neural networks remains an open challenge~\citep{kim2021lipschitz,khromov2024some}. Moreover, recent studies indicate that surpassing certain theoretical thresholds for peak LRs does not necessarily lead to divergence; rather, when appropriately managed, peak LRs above $2/L$ can even enhance convergence~\citep{grimmer2024accelerated}.

From these observations, we hypothesize that training divergence is not merely a consequence of an excessively large $\eta_{\max}$. Instead, it is influenced by the duration that the LR remains above a critical threshold $\eta_L$ and the length of the warmup phase. Ideally, $\eta_L$ would correspond to $2/L$, where $L$ is the Lipschitz constant of the function gradient. For neural networks, $L$ tends to increase with model size~\citep{khromov2024some}, suggesting that $\eta_L$ should decrease as the model size grows. However, with larger datasets $\eta_L$ can increase. A larger data volume gives the training dynamics more iterations to recover from an excessively high peak LR, as illustrated in Fig.~\ref{fig:MoE-Loss}. Based on this intuition, we let $\eta_L = \order{S^{\hat{\alpha}_1}/N^{\hat{\alpha}_2}}$, where $\hat{\alpha}_1, \hat{\alpha}_2>0$ are data-driven constants, $S$ is the number of iterations (proportional to data size), and $N$ is the model size (i.e., the number of learnable parameters).
\blue{Building on these insights, we define a divergence criterion by comparing the below-threshold warmup area against the above-threshold area in the LR schedule (Fig.~\ref{fig:criterion}). When above-threshold exposure dominates, training is expected to diverge. We parameterize this comparison as:}\vspace{-4mm}
\begin{equation}\label{eq:criterion}
R(\eta_{\max}, a_1, N, S) \coloneqq \frac{S \qty(\eta_{\max} - \eta_L )^2}{\hat{c}_3 a_1\eta_L^2}, \quad \eta_L \coloneqq \min(\eta_{\max}, \frac{\hat{c}_1 S^{\hat{\alpha}_1}}{\hat{c}_2 N^{\hat{\alpha}_2}}),
\vspace{-2mm}
\end{equation}
where $S$ is the iteration number, $N$ is the model size, $a_1$ is the number of warmup steps \blue{(corresponding to the warm-up duration $a$ in the simplified \abbr{} above)}, and $\hat{c}_1 > 0$, $\hat{c}_2 > 0$, $\hat{c}_3 > 0$, $\hat{\alpha}_1 > 0$, and $\hat{\alpha}_2 > 0$ are data-driven parameters. Similar to the \abbr{} approach, these parameters are estimated by fitting the real data presented in Fig.~\ref{fig:MoE-Loss}, with the fitted values provided in Appendix Sec.~\ref{sec:exp-supplement}.
\begin{figure}[t]
  \centering
  \includegraphics[width=0.4\linewidth]{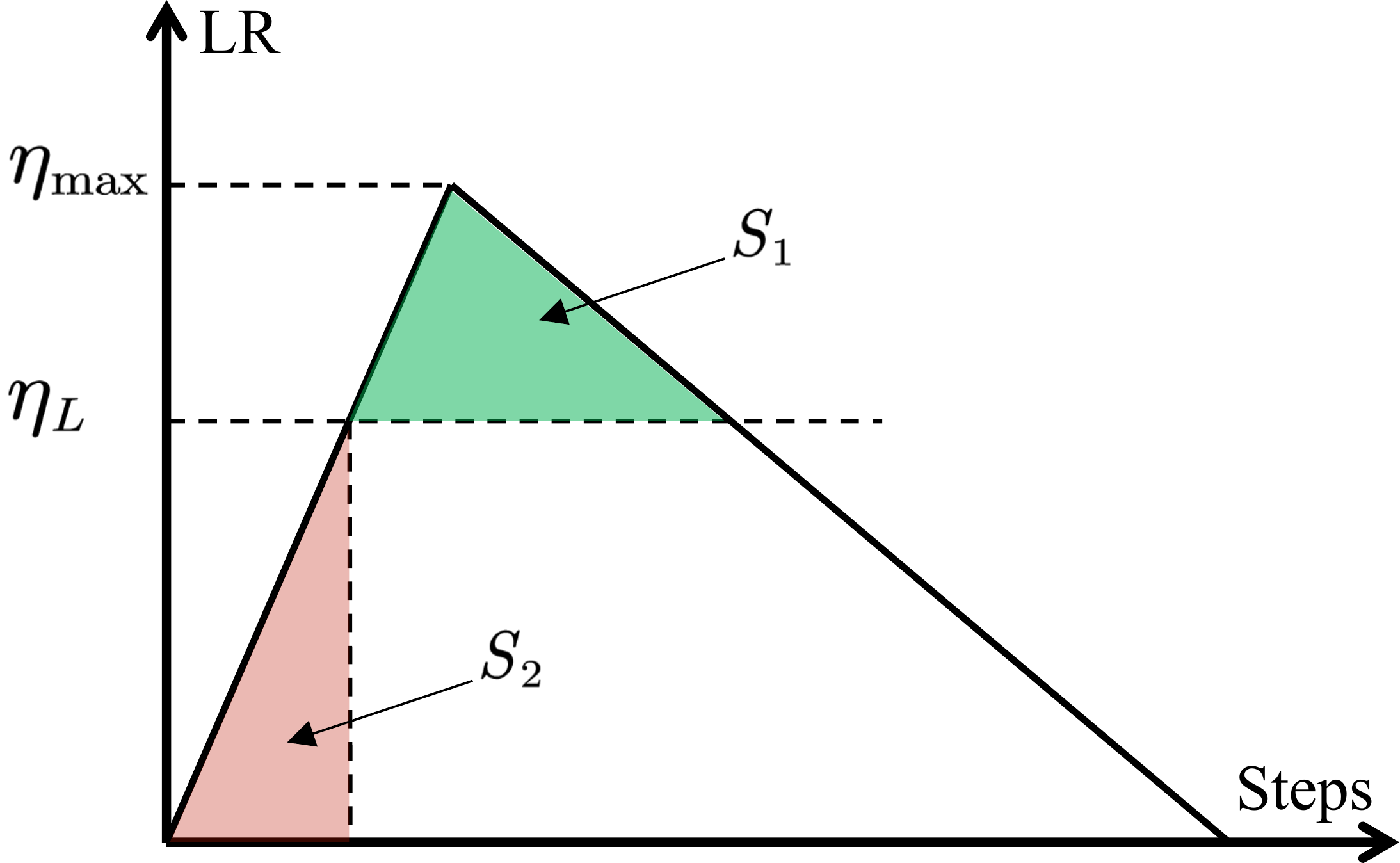}
  \caption{\blue{Divergence prediction criterion. The shaded regions denote the below-threshold warmup area (LR $< \eta_L$) and the above-threshold area (LR $\geq \eta_L$). The ratio of these two areas motivates the divergence discriminant $R$ in Eqn.~\eqref{eq:criterion}.}}
  \label{fig:criterion}
  \vspace{-4mm}
\end{figure}
\blue{The area comparison motivates the functional form of $R$, while the coefficient $\hat{c}_3$ calibrates the decision boundary to $R = 1$. Empirically, warmup stabilizes training more effectively than above-threshold exposure destabilizes it, so the fitted boundary lies far from the equal-area point.}
We find that if $R(\eta_{\max}, a_1, N, S) > 1$, indicating that the time spent above $\eta_L$ is too long relative to the effective warmup duration, training is likely to fail. Conversely, if $R(\eta_{\max}, a_1, N, S) < 1$, the chosen hyper-parameters are unlikely to cause divergence. In Eqn.~\eqref{eq:criterion}, $\eta_L$ is inversely related to model size and directly proportional to data size, implying that larger models require lower peak LRs, while larger datasets may permit a higher peak LR. \blue{This is consistent with} established practices in training modern LLMs, such as LLaMAs~\citep{dubey2024llama}.

As shown in Fig.~\ref{fig:loss-fit}, the $R(\eta_{\max}, a_1, N, S)$ metric effectively predicts training outcomes across various data sizes, model sizes, and hyper-parameter combinations. In the figure, for scenarios where $R > 1$, we assign a fixed sentinel value of 7 to mark expected divergence.

\subsection{Generalized \abbr{}}\label{sec:generalized}

\blue{Building on the patterns in Fig.~\ref{fig:MoE-Loss}, we construct the feature family in three stages.}
First, we define schedule-dependent basis terms motivated by the convergence and escape analyses, which summarize how the schedule affects optimization progress and schedule-induced exploration. Second, we introduce scale-dependent terms for model size and training horizon, in order to capture cross-scale variation that is absent from the simplified law. Third, we include a restricted set of interaction terms between these components, so that the effect of a schedule can vary with model and data scale.

The goal of this construction is not to enumerate all possible monomials, but to retain a compact and interpretable theory-motivated basis that is expressive enough for the observed loss grids while avoiding unnecessary over-parameterization.
Specifically, given a LR schedule $\eta(t)$, \blue{SDE-based convergence analysis (Sec.~\ref{sec:sde}, Theorems~\ref{thm:covergence-SGD} and~\ref{thm:covergence-adam}) shows that the averaged squared gradient norm for both SGD and Adam is bounded in terms of the integrated learning rate $\int_0^t \eta(s)\,ds$. This integral therefore serves as a natural surrogate for optimization progress. The resulting convergence-related basis terms take the form}
\begin{equation}\label{eq:cov-basis}
    \text{Convergence Bound} \propto \qty [\frac{N}{\int_0^{a_{c_1}} \eta(t) \dd t}, \frac{N}{\int_{a_{c_2}}^S \eta(t) \dd t}],
\end{equation}
where the convergence bound represents the average gradient norm, usually serving as an indicator of how near the current loss is to a local minimum in continuous optimization, $S$ represents the number of iterations, $N$ is the model size, and $a_{c_1} \leq a_{c_2}$ are specific points within the interval $[0, S]$ that define the divisions in the LR schedule. 
The ``proportional to'' $\propto$ indicates that the convergence bound is influenced by these basis functions in Eqn.~\eqref{eq:cov-basis}. Specifically, when the values of these functions are smaller for a given  $\eta(t)$, the optimization algorithm tends to converge more quickly to a stationary point.

\blue{Typically, $a_{c_1}$ marks the end of the warmup phase, and $a_{c_2}$ indicates the start of the cooldown phase. \cite{ibrahim2024simple} observed that LR variations during intermediate phases have limited effect on the final loss when warmup and cooldown are properly configured. We therefore retain only the warmup and cooldown segments in the convergence features. The specific instantiation and a supporting ablation are given in Sec.~\ref{sec:apply-opt-laws} and Appendix Sec.~\ref{sec:ablation}.}

\blue{Complementarily, SDE-based escape analysis (Sec.~\ref{sec:sde}, Theorems~\ref{thm:SGD-prob} and~\ref{thm:Adam-prob}) shows that the probability of remaining trapped near a local minimum grows with the integrated squared learning-rate derivative, motivating the following escape-related basis terms:}
\begin{equation}\label{eq:escap-basis}
    \text{Trapping Probability} \propto \qty [\frac{1}{SN}, \int_0^{a_{e_1}} {\eta^\prime}(t)^2 \dd t,\int_{a_{e_2}}^S {\eta^\prime}(t)^2 \dd t],
\end{equation}
where $N$ is the model size, and $a_{e_1} \leq a_{e_2}$ are values between $0$ and $S$. Similar to the previous Eqn.~\eqref{eq:cov-basis}, smaller values of these basis functions in Eqn.~\eqref{eq:escap-basis} indicate a higher probability of escaping a local region under the chosen $\eta(t)$.

To extend the \abbr{}, we consider pairwise combinations of the basis functions from Eqn.~\eqref{eq:cov-basis} and Eqn.~\eqref{eq:escap-basis}. We introduce three sets of terms: the convergence term $C(\eta_t,\*a_c,S,N,\bm{\alpha})$:
\[
C(\eta_t,\*a_c,S,N,\bm{\alpha}) \coloneqq \qty[\frac{1}{\int_0^{a_{c_1}} \eta_t}, \frac{1}{\int_{a_{c_2}}^S \eta_t},\qty(\frac{N}{\int_{a_{c_2}}^S \eta_t})^{\alpha_1}, \qty(\frac{1}{\int_0^{a_{c_1}} \eta_t\int_{a_{c_2}}^S \eta_t})^{\alpha_2}],
\]
where $\bm{\alpha}$ is a predefined vector representing the powers of certain basis functions; the escaping local region term $E(\eta_t,a_e,S,N,\bm{\alpha})$:
\[
E(\eta_t,a_e,S,N,\bm{\alpha}) \coloneqq \qty [\int_{a_{e_2}}^S {\eta^\prime_t}^2, \qty(\int_0^{a_{e_1}} {\eta^\prime_t}^2)^{\alpha_3}, \qty(\int_{a_{e_2}}^S {\eta^\prime_t}^2)^{\alpha_4}, \qty(\frac{1}{SN})^{\alpha_5}],
\]
And the mixed term $M(\eta_t,\*a_c, \*a_e,S,N,\bm{\alpha})$:
\[
M(\eta_t,\*a_c, \*a_e,S,N,\bm{\alpha}) \coloneqq \!\!\!\qty [\qty(\frac{\int_{a_{e_2}}^S {\eta^\prime_t}^2}{\int_0^{a_{c_1}} \eta_t})^{\alpha_6}, \qty(\frac{\int_{a_{e_2}}^S {\eta^\prime_t}^2}{\int_{a_{c_2}}^S \eta_t})^{\alpha_7},
\qty(\frac{N\int_{a_{e_2}}^S {\eta^\prime_t}^2}{\int_0^{a_{c_1}} \eta_t})^{\alpha_8}, \qty(\frac{N\int_{a_{e_2}}^S {\eta^\prime_t}^2}{\int_{a_{c_2}}^S \eta_t})^{\alpha_9}].
\]
We then combine these terms to define the optimization-feature vector for \abbr{}:
\begin{equation}\label{eq:feature}
  F(\eta_t,\*a_c,\*a_e,S,N,\bm{\alpha}) \coloneqq \qty[C(\eta_t), E(\eta_t), M(\eta_t), N^{-\alpha_{10}}, S^{-\alpha_{11}}, \eta_{\max}^{\alpha_{12}}, 1],
\end{equation}
where, for the sake of notation, we omit certain arguments of $C(\cdot)$, $E(\cdot)$, and $M(\cdot)$.
\blue{For readability, the main text groups the terms into convergence, escape, mixed, and scale components; the corresponding fitted coefficients and closed-form expressions for two representative schedule families are given in Appendix Table~\ref{tab:opt-law-coeff}.}
\paragraph{Discussion} \blue{The terms in $F(\cdot)$ are derived from the SDE convergence bound $O(N/\int\eta)$ (Theorems~\ref{thm:covergence-SGD}--\ref{thm:covergence-adam}) and the escape bound involving $\int{\eta'_t}^2$ and $1/SN$ (Theorems~\ref{thm:SGD-prob}--\ref{thm:Adam-prob}). Splitting the schedule into warmup and cooldown segments yields the per-segment convergence terms $1/\int\eta_w$ and $1/\int\eta_c$, and the model-size-scaled term $(N/\int\eta_c)^{\alpha}$ that retains $N$ from the bound. The warmup-cooldown product $(1/(\int\eta_w \cdot \int\eta_c))^{\alpha}$ captures their joint contribution. The escape block follows the same splitting logic. The mixed terms multiply the convergence basis $1/\int\eta$ with the escape basis $\int{\eta'_t}^2$, producing $\int{\eta'_t}^2/\int\eta$ and $N \cdot \int{\eta'_t}^2/\int\eta$ with warmup and cooldown variants. Combinations not present in the bounds (e.g., $\int\eta \times S$) are not included. This yields 15 terms plus one intercept.}

\blue{The three terms ($1/\int\eta_{w}$, $1/\int\eta_{c}$, $\int{\eta'_t}^2$) 
preserve the closed-form expressions from the SDE analysis and carry no fitted exponent. Scale terms carry fitted exponents following standard power-law parameterization. Interaction terms require fitted exponents because the convergence and escape bounds are derived independently and do not predict their coupling. The escape exponents $\alpha_3, \alpha_4$ are fitted rather than fixed at the theoretical value, as the underlying bounds rely on approximations that may not fully capture real training dynamics.}

\blue{The feature construction reveals a natural convergence-escape trade-off.} The convergence terms $C(\cdot)$ suggest that maintaining a higher LR allows for a larger $\int \eta_t$, which accelerates the convergence. However, an excessively high LR can lead to increased $(\eta_t^\prime)^2$ during warmup and cooldown, causing the escape terms $E(\cdot)$ to rise and potentially trapping the model in poor local minima. Unlike in Sec.~\ref{sec:opt-law1}, where model size was not factored in, introducing model size reveals a trade-off: larger models slow convergence (as $N/\int \eta_t$ increases) but reduce the escape terms (as $1/SN$ decreases), highlighting the need for balance.
Additionally, the mixed terms provide insights into the interaction between model size and peak LR. For example, the term ${N\int (\eta^\prime_t)^2}/{\int \eta_t}$ indicates that when data size increases modestly while model size $N$ grows significantly, reducing the peak LR becomes necessary to keep this term small. This aligns with empirical observations that the optimal peak LR tends to decrease as model size increases~\citep{dubey2024llama}.

\blue{The feature vector $F(\cdot)$ is designed to capture the dominant factors relating training hyper-parameters to final loss,} enabling us to derive generalized \abbr{} through linear regression.
\begin{equation}\label{eq:opt-law2}
    \log\qty(\text{Loss})  = \*c^\top F(\eta_t,\*a_c,\*a_e,S,N,\bm{\alpha}),    \tag{{\abbr{}}}
\end{equation}
where $(\eta_t,\*a_c,\*a_e,S,N)$ is the combination of training parameters, $\bm{\alpha}$ represents the powers of the basis functions, and $\*c$ is the solution to the following linear regression problem:
\begin{equation}\label{eq:LSR}
    \*c = \argmin_{\*c} \qty{\sum_i \qty(c^\top F(\eta_t^i,\*a_c^i,\*a_e^i,S^i,N^i,\bm{\alpha}) - \log\qty(\text{Loss}_i))^2},
\end{equation}
where $(\eta_t^i, \*a_c^i, \*a_e^i, S^i, N^i)$ represents the different hyper-parameter combinations shown in Fig.~\ref{fig:MoE-Loss}, and $\text{Loss}_i$ denotes the final training loss associated with each combination (excluding divergent losses during regression). \blue{We adopt a staged procedure: the exponent vector $\bm{\alpha}$ is selected from a discrete candidate set using only the training data, and the linear coefficients $\mathbf{c}$ are then fitted on the training split. The held-out set is reserved for final evaluation only (details in Appendix~Sec.~\ref{sec:alpha-protocol}).}
\begin{figure}[t]
  \centering
  \includegraphics[width=0.35\linewidth]{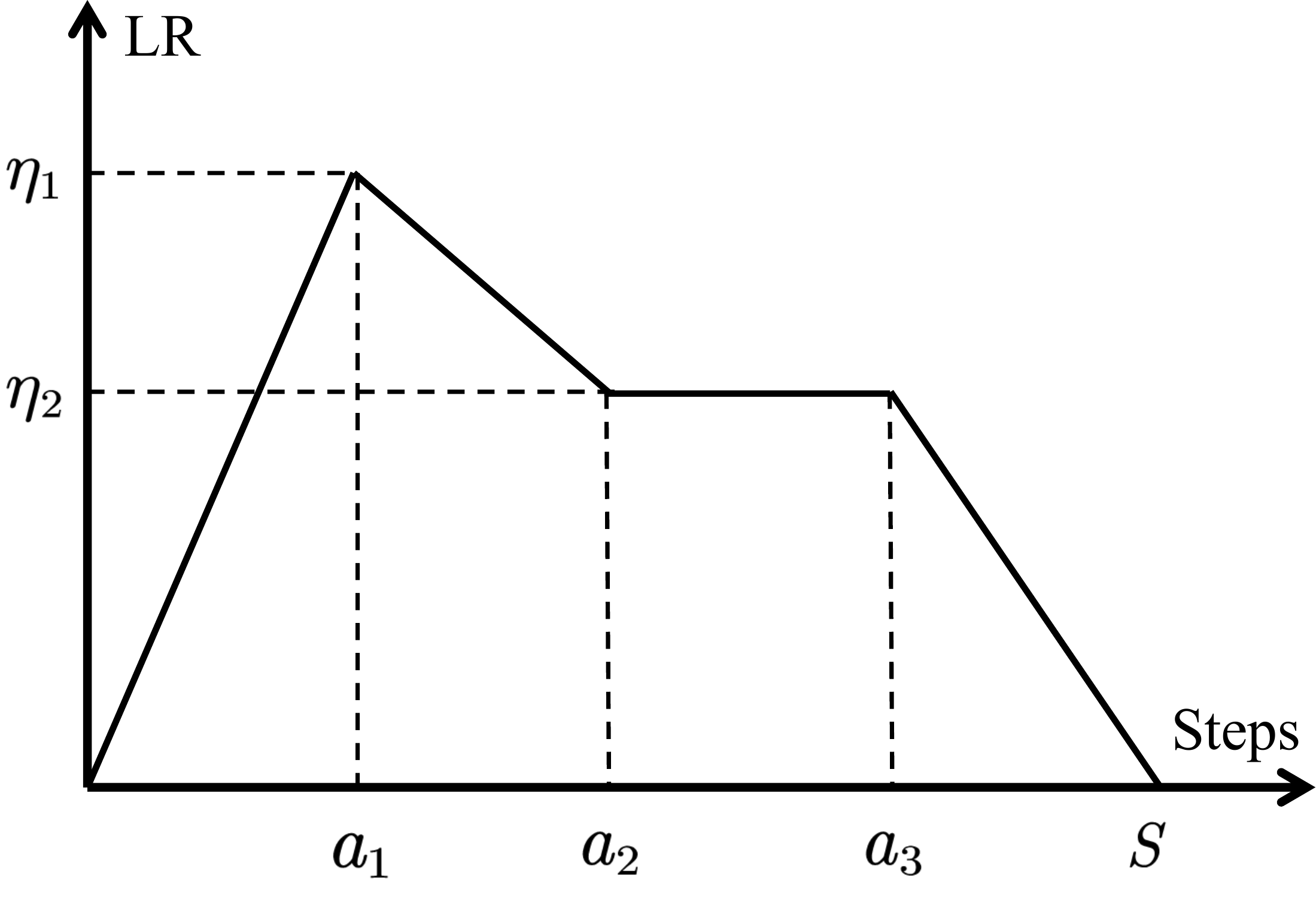}
  \caption{A four-phase LR schedule template illustrating the selection of $\mathbf{a}_c$ and $\mathbf{a}_e$ in \ref{eq:opt-law2}.}
  \label{fig:lr_illustration}
  \vspace{-4mm}
\end{figure}
The proposed \ref{eq:opt-law2} can, under certain conditions, recover classical scaling laws. For instance, when the LR schedule is fixed and model size $N$ is the only variable, normalizing $N$ to a range much smaller than 1 allows \ref{eq:opt-law2} to approximate the broken neural scaling laws (BNSL)~\citep{caballero2023broken}. Specifically, BNSL is expressed as $\log(\text{Loss}) = \bar{b} + \sum_i \bar{c}_i \log(1+N^{\bar{\alpha}_i}) \approx \bar{b} + \sum_i\bar{c}_i N^{\bar{\alpha}_i}$, where $\bar{\alpha}_i$ and $\bar{b}$ are data-driven parameters and the approximation represents \ref{eq:opt-law2}. BNSL has been shown to outperform the classical Kaplan scaling laws across both upstream and downstream tasks. Furthermore, by applying a logarithmic transformation to $N$ during normalization and selecting $\alpha_1=\alpha_8=\alpha_9=\alpha_{10}=1$, \ref{eq:opt-law2} can closely approximate the classical Kaplan and Chinchilla scaling laws~\citep{kaplan2020scaling,hoffmann2022training}.

\subsection{\blue{Evaluation on the Training Grid}}\label{sec:apply-opt-laws}
\blue{This subsection evaluates the generalized \ref{eq:opt-law2} on the training grid via held-out benchmarking. Fig.~\ref{fig:lr_illustration} depicts the piecewise-linear schedule template used throughout, parameterized by transition points $a_1, a_2, a_3$. Standard families arise as special cases: linear warmup plus linear cooldown ($a_1{=}a_2{=}a_3$), constant-rate-plus-cooldown~\citep{hagele2024scaling} ($a_1{=}a_2{<}a_3$), and multi-phase polygon schedules~\citep{ibrahim2024simple} ($a_1{<}a_2{<}a_3$).}

\blue{To evaluate \ref{eq:opt-law2} on this template, we map the abstract partition points $\mathbf{a}_c$ and $\mathbf{a}_e$ to the transition points $a_1, a_2, a_3$ that define the phase boundaries of the schedule. For the convergence terms, we set $a_{c_1}=a_1$ and $a_{c_2}=a_2$. If a constant phase precedes the cooldown at the same LR, it is included in $\int_{a_{c_2}}^S\eta\,dt$, as it involves no LR change and shares the same LR as the cooldown onset. The decay phase $[a_1, a_2]$ is excluded because LR variations during intermediate phases have limited effect on the final loss~\citep{ibrahim2024simple}. This information is instead captured by the escape feature $\int_0^{a_2}{\eta'_t}^2$, and the ablation in Appendix Sec.~\ref{sec:ablation} confirms that additionally including the decay $[a_1, a_2]$ in the convergence integral does not improve prediction. For the escape terms, we set $a_{e_1}=a_{e_2}=a_2$, so that the first escape integral covers warmup and decay jointly, while the second covers cooldown (the intervening plateau has $\eta'=0$).}

\blue{\paragraph{Evaluation protocol.}
The loss grids in Fig.~\ref{fig:MoE-Loss} contain multiple (model size, token budget) blocks, each comprising several hyper-parameter configurations (peak LR, warmup steps). We perform 5-fold cross-validation by partitioning the configurations within each block into five folds, so that every fold contains held-out samples from all blocks. The exponent vector $\bm{\alpha}$ and linear coefficients $\mathbf{c}$ are fitted on the training folds only. Divergent runs are identified separately by the criterion in Eqn.~\eqref{eq:criterion} and excluded from the loss regression, but retained for fitting and evaluating the divergence criterion itself.}

\blue{Fig.~\ref{fig:loss-fit} compares the actual and predicted loss grids for one representative fold, where dashed-border cells denote held-out configurations. Within each block, the predicted surface correctly identifies the low-loss region and reproduces the sensitivity to peak LR and warmup steps. The predicted loss for held-out cells closely matches the actual loss, confirming that the law generalizes to unseen configurations. By construction, the predicted surface is smooth and does not reproduce local irregularities in the empirical grid, but this property is desirable for schedule selection as it filters out per-run noise and yields a stable ranking. Divergent configurations, flagged by the $R$ criterion with a sentinel value of 7, align well with the observed divergence boundaries. Quantitative metrics aggregated across all five folds are reported in Table~\ref{tab:main-heldout-benchmark}.}

\begin{figure}[t!]
    \centering
    \includegraphics[width=\linewidth]{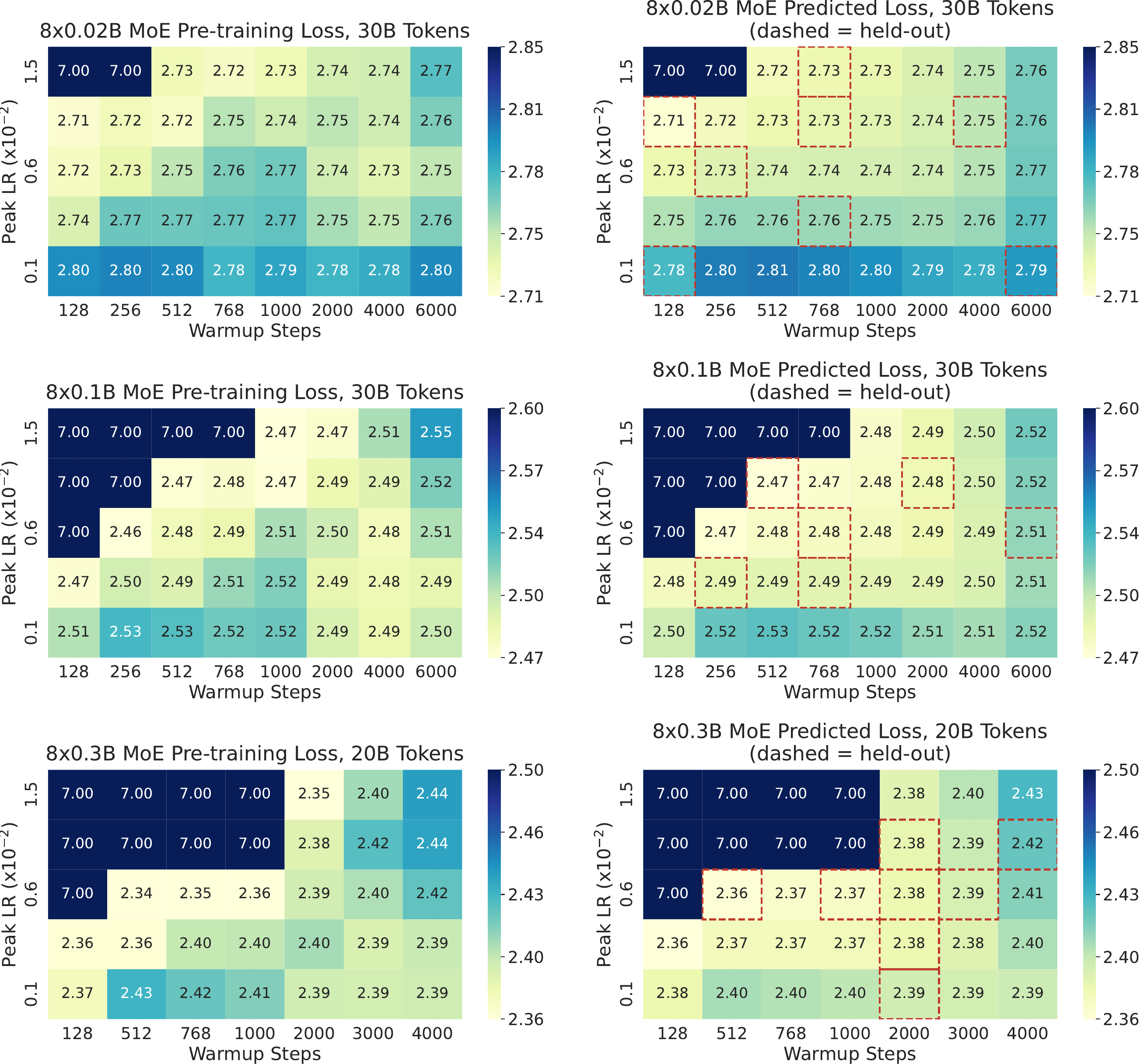}
    \vspace{-6mm}
    \caption{\blue{Actual training loss (left) versus predictions from generalized \ref{eq:opt-law2} (right) across model sizes and token budgets. Dashed-border cells are held out in the displayed fold. Divergent runs (sentinel value 7) are flagged by the $R$ criterion. The predicted surface correctly ranks configurations within each block, with held-out cells closely matching their actual values.}}
    \label{fig:loss-fit}
    \vspace{-6mm}
\end{figure}

\begin{table}[t]
\centering
\small
\setlength{\tabcolsep}{5pt}
\renewcommand{\arraystretch}{1.2}
\caption{Held-out benchmark (5-fold configuration-level cross-validation).  All five baselines and the proposed method are evaluated under the same outer-fold protocol. Higher $R^2$, Spearman, and Top-2 are better; lower Rel.Err.\ and Regret are better. Metric definitions (Top-$k$ Hit Rate, Regret) are given in Appendix Sec.~\ref{sec:metric-def}. \textbf{\#p} counts fitted coefficients, including the intercept.}
\label{tab:main-heldout-benchmark}
\begin{tabular}{@{}l c rcccc@{}}
\toprule
\textbf{Method} & \textbf{\#p} & \textbf{$R^2$} & \textbf{Rel.Err.(\%)} & \textbf{Spearman} & \textbf{Top-2 (\%)} & \textbf{Regret} \\
\midrule
Constant (block mean) & -- & 0.991 & 1.08 & $-$0.29 & 3 & 0.103 \\
Chinchilla ($A/N^\alpha\!+\!B/D^\beta\!+\!E$) & 5 & 0.988 & 1.28 & $-$0.16 & 3 & 0.103 \\
Linear regression & 4 & 0.556 & 9.51 & 0.38 & 54 & 0.021 \\
Tissue & 7 & 0.973 & 1.81 & 0.61 & 74 & 0.012 \\
Simplified + Chin.\ scale & 11 & 0.992 & 1.01 & 0.68 & 77 & 0.008 \\
\midrule
\textbf{Generalized Opt-Law} & \textbf{16} & \textbf{0.998} & \textbf{0.50} & \textbf{0.84} & \textbf{94} & \textbf{0.003} \\
\bottomrule
\end{tabular}
\end{table}

\blue{We compare against five baselines, ranging from a parameter-free constant predictor to a schedule-aware cross-scale model. The \emph{Constant} baseline predicts the per-block mean loss (high $R^2$ but no within-block ranking). The \emph{Chinchilla} baseline~\citep{hoffmann2022training} uses the standard power-law form without schedule-dependent features. \emph{Linear regression} uses normalized $(\eta_{\max}, a_1, N, S)$ as inputs. The \emph{Tissue} baseline uses the cross-scale formula of \citet{tissue2025scaling}, which combines cumulative learning-rate and annealing-area features with a model-size interaction term. Because the between-block variation in loss (driven by model size and token budget) dominates $R^2$, a high $R^2$ does not guarantee accurate within-block ranking of hyper-parameter configurations. The selection-oriented metrics (Spearman, Top-2, Regret) directly measure this ranking quality and are therefore more informative for practical schedule selection.

On all five performance metrics, the generalized \abbr{} improves upon all baselines. The Constant and Chinchilla baselines model only scale factors ($N$, $S$) without schedule-dependent features. Despite achieving $R^2 > 0.988$, their Spearman correlation is near zero, as they lack the capacity to rank configurations within the same (model size, token budget) block. Linear regression incorporates schedule variables ($\eta_{\max}$, $a_1$) but uses a linear combination that cannot capture the nonlinear trade-off between convergence and escape. The Tissue baseline introduces cumulative learning-rate and annealing-area features, yielding moderate ranking ability (Top-2 74\%), but does not include escape or convergence-escape interaction terms. The generalized \abbr{}, whose features are derived from the SDE convergence and escape bounds, achieves Top-2 94\% and Regret 0.003, meaning that the selected configuration is the best or second-best in nearly all blocks and the expected sub-optimality in loss is below 0.3\%. Progressive ablation (Appendix Sec.~\ref{sec:ablation}) corroborates this result, showing that each feature block contributes independent predictive signal.}

\blue{As a separate classification task, divergence prediction is evaluated on all 260 training-grid configurations. The $R$ criterion achieves precision${=}0.96$, recall${=}0.88$, F1${=}0.92$, and balanced accuracy${=}0.94$, substantially outperforming a peak-LR threshold (F1${=}0.38$) and logistic regression (F1${=}0.54$). The full comparison is reported in Appendix Table~\ref{tab:divergence-benchmark}.}

\section{\blue{Evaluation beyond the Training Grid}}\label{sec:exp}
This section \blue{evaluates the generalized \abbr{} beyond the training grid used for fitting.}
In all experiments, we \blue{used} an $8\times0.6$B MoE model~\citep{zhao2024longskywork,wei2024skywork} with approximately 4B trainable parameters. Additionally, $8\times0.1$B and $8\times0.3$B MoE models, containing 0.5B and 2B learnable parameters, respectively, were employed in the pre-training experiments. All experiments were conducted with consistent token lengths and a batch size of 2048, using pre-training data sourced from the RedPajama-v2 dataset~\citep{together2023redpajama}. For continual training, over 100B tokens from Chinese Common Crawl data were incorporated, while fine-tuning involved sampling an additional 60B+ tokens from the Stack-Repo Java code dataset~\citep{shrivastava2023repofusion}.
\blue{Model configurations and fitted coefficients are provided in Appendix Sec.~\ref{sec:alpha-protocol}.
}

\blue{\paragraph{Roadmap.} The empirical results are organized in three parts.  We first study actual-loss behavior under alternative schedule families (Sec.~\ref{sec:exp:eta-effect}).  We then evaluate cross-scale extrapolation in large-scale pre-training (Sec.~\ref{sec:pretrain-extrap}).  Finally, we present downstream ranking studies in continual training (Sec.~\ref{sec:ct-ranking}) and fine-tuning (Sec.~\ref{sec:exp-ft}).  Each subsection below explicitly separates the description of observed loss behavior from the quantitative evaluation of prediction and selection quality.}

\subsection{Effects of Learning Rate Schedules}\label{sec:exp:eta-effect}
\begin{figure}[t]
    \centering
    \begin{minipage}{\textwidth}
        \subfigure[]{
            \includegraphics[width=0.48\linewidth]{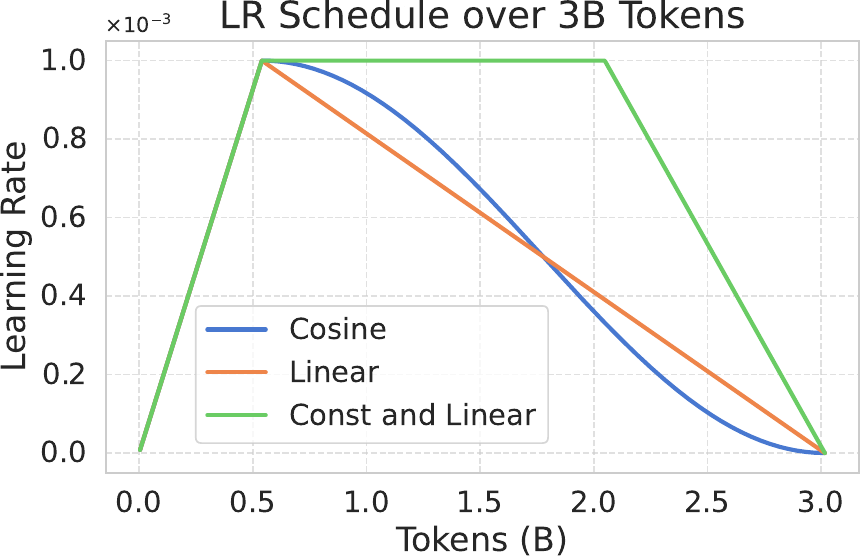}
        }
        \subfigure[]{
            \includegraphics[width=0.48\linewidth]{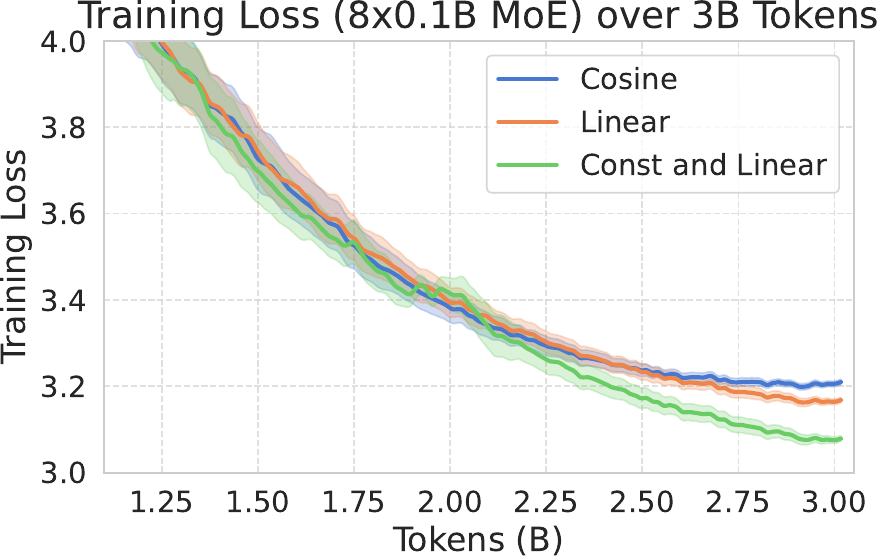}
        }
    \end{minipage}
    \vspace{-3mm}  
    \begin{minipage}{\textwidth}
        \subfigure[]{
        \includegraphics[width=0.48\linewidth]{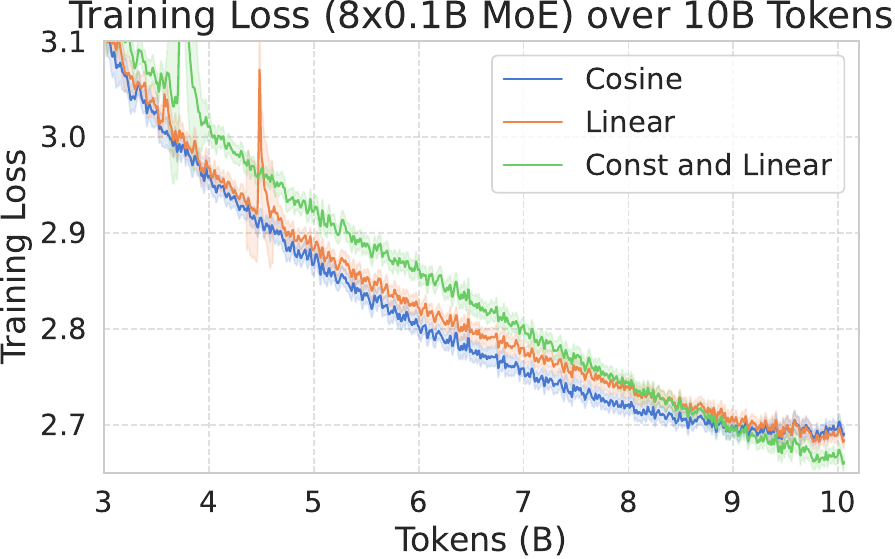}
        }
        \subfigure[]{
        \includegraphics[width=0.48\linewidth]{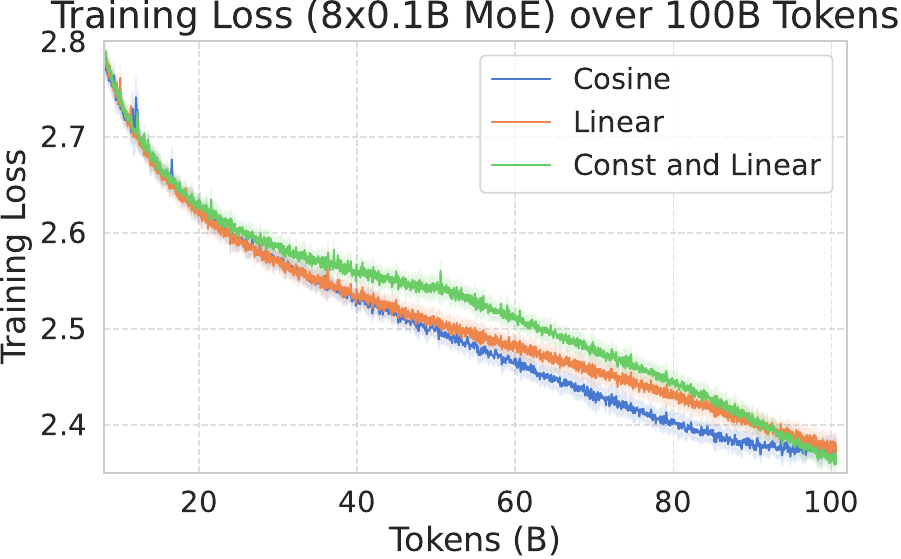}
        }
    \end{minipage}
    \vspace{0mm}  
    \begin{minipage}{\textwidth}
        \subfigure[]{
        \includegraphics[width=0.48\linewidth]{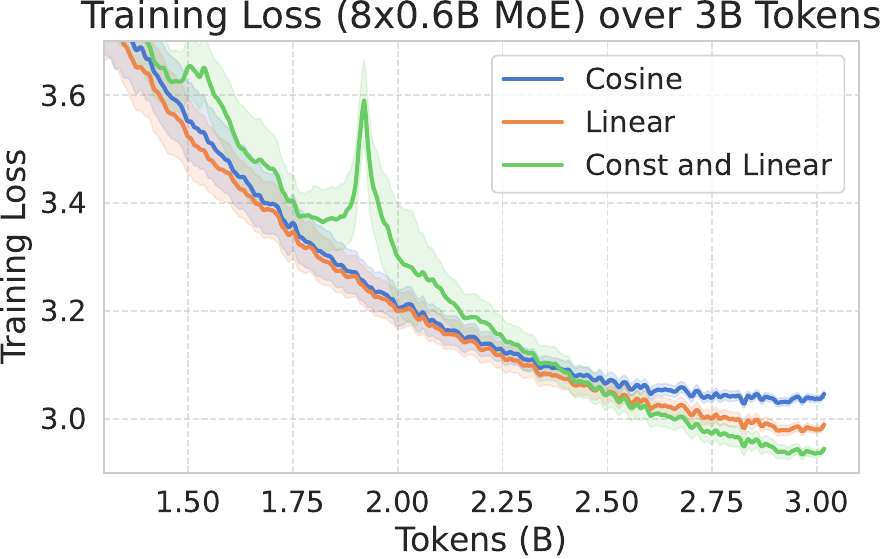}
        }
        \subfigure[]{
        \includegraphics[width=0.48\linewidth]{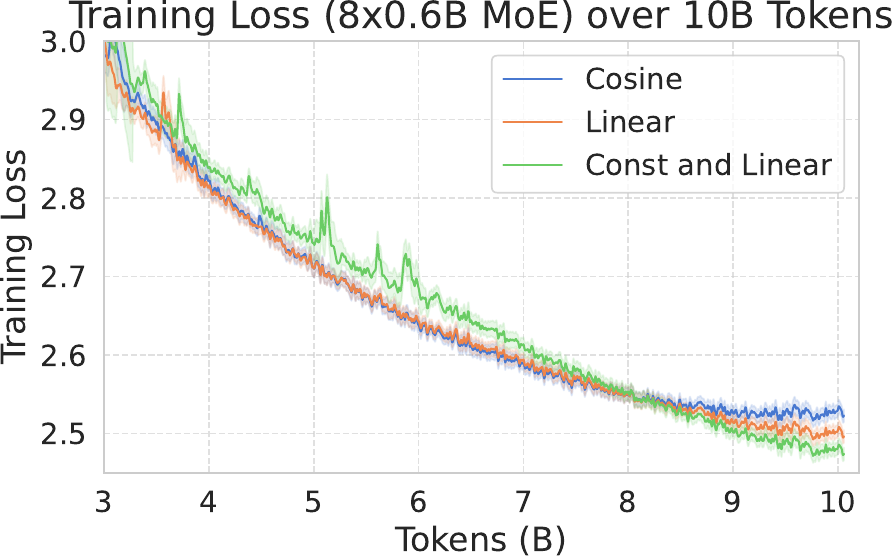}
        }
    \end{minipage}
    \caption{Training loss comparison for $8 \times 0.1$B and $8 \times 0.6$B MoEs under three LR schedules (linear decay, cosine decay, and constant followed by linear decay) across pre-training scales of 3B, 10B, and 100B tokens. \blue{The loss gap across schedule families is larger at smaller token budgets and closes more slowly for the larger model.}}
    \label{fig:effect}
    \vspace{-3mm}
\end{figure}
\blue{This subsection compares three LR schedule families (linear decay, cosine decay, and constant-plus-linear decay) across data scales of 3B, 10B, and 100B tokens. The corresponding cross-family prediction accuracy is evaluated quantitatively at the end of Sec.~\ref{sec:pretrain-extrap}.} Each schedule began with a linear warmup phase, gradually increasing from zero to a peak LR, $\eta_{\operatorname{max}}$. The specific mathematical formulations are provided in Eqn.~\eqref{eq:cosine} and Eqn.~\eqref{eq:minicpm}.

\blue{As shown in Fig.~\ref{fig:effect}, the three schedule families produce notably different final losses at the 3B token scale, where the constant-plus-linear schedule achieves the lowest loss. As the training data increases to 10B and 100B tokens, the gap across families narrows, consistent with the analysis in Sec.~\ref{sec:effect}: a larger token budget increases $\int\eta\,dt$ for all families, making the convergence terms dominant and reducing the relative contribution of the escape terms that differentiate the schedules.

However, this narrowing is slower for larger models. Comparing the $8{\times}0.1$B results (Fig.~\ref{fig:effect}(b,c)) with the $8{\times}0.6$B results (Fig.~\ref{fig:effect}(e,f)) at matched token budgets, the larger model retains a wider loss spread across schedule families. For instance, at 10B tokens the cosine and linear schedules nearly converge for the $8{\times}0.1$B model, but still differ visibly for the $8{\times}0.6$B model. This is expected from the \abbr{} perspective: the convergence terms scale with $N/\int\eta$, so a larger $N$ amplifies the schedule sensitivity at the same token budget.

Among the families, linear decay consistently achieves a slightly lower final loss than cosine decay at equal token budgets (Fig.~\ref{fig:effect}(b,c)). The two schedules share the same convergence integral $\int\eta\,dt$ but differ in the escape term $\int{\eta'_t}^2$: the linear schedule has a smaller derivative integral, corresponding to a lower trapping probability under the escape analysis (Sec.~\ref{sec:effect}). The observed loss difference is consistent with this prediction.

These results confirm that the convergence-escape decomposition in \abbr{} captures the key factors governing schedule selection: data volume controls the convergence terms, model scale modulates their sensitivity, and the escape terms account for residual differences among families with matched convergence budgets.}
\subsection{\blue{Pre-training Extrapolation}}\label{sec:pretrain-extrap}

\blue{In this subsection, we evaluate generalized \abbr{} as an extrapolation predictor for large-scale pre-training. The law is fitted on the small-scale piecewise-linear grid (Sec.~\ref{sec:apply-opt-laws}) and applied without re-fitting to larger models, larger token budgets, and more complex schedule shapes.}

\blue{We consider an $8\times 0.6$B and an $8\times 0.3$B MoE model, pre-trained on up to 300B tokens. Table~\ref{tab:pre-train} reports the actual and predicted final losses across three groups of configurations with varied LRs $\eta_1$, $\eta_2$ and schedule parameters $a_1$, $a_2$, $a_3$ (Fig.~\ref{fig:lr_illustration}). The fitted exponents and coefficients are listed in Appendix Sec.~\ref{sec:alpha-protocol}.
}

\begin{table}[t]
\centering
\caption{\blue{Actual versus predicted pre-training loss for the $8{\times}0.6$B and $8{\times}0.3$B MoE models at 100B--300B tokens. Three groups are tested: single-phase schedules at 300B tokens, multi-phase schedules at 100B tokens, and loss-equivalent schedules identified by inverse design. All prediction errors are below 0.5\%.}}
\label{tab:pre-train}
\vspace{2pt}
\small
\setlength{\tabcolsep}{5pt}
\renewcommand{\arraystretch}{1.2}
\begin{tabular}{@{}ll  cc  rrr  rrr@{}}
\toprule
\multicolumn{2}{c}{\textbf{Configuration}}
  & \multicolumn{5}{c}{\textbf{LR Schedule Parameters}}
  & \multicolumn{3}{c}{\textbf{Loss}} \\
\cmidrule(lr){1-2}\cmidrule(lr){3-7}\cmidrule(l){8-10}
\textbf{Model} & \textbf{Tokens}
  & $\eta_1$ & $\eta_2$ & $a_1$ & $a_2$ & $a_3$
  & \textbf{Actual} & \textbf{Predicted} & \textbf{Err.\,(\%)} \\
\midrule
\multicolumn{10}{l}{\textit{\small 300B tokens} \textemdash{} \textit{\small single-phase schedules}} \\[1pt]
$8{\times}$0.6B & 300B & $10^{-3}$          & $10^{-3}$          & 500  & 500   & 500   & 1.985 & 1.984 & 0.04 \\
$8{\times}$0.6B & 300B & $6{\times}10^{-3}$ & $6{\times}10^{-3}$ & 2000 & 2000  & 2000  & 1.996 & 1.995 & 0.05 \\
$8{\times}$0.3B & 300B & $6{\times}10^{-3}$ & $6{\times}10^{-3}$ & 500  & 500   & 500   & 2.073 & 2.073 & 0.00 \\
\midrule
\multicolumn{10}{l}{\textit{\small 100B tokens} \textemdash{} \textit{\small multi-phase schedules}} \\[1pt]
$8{\times}$0.6B & 100B & $6{\times}10^{-4}$   & $6{\times}10^{-4}$ & 1200 & 1200  & 10000 & 2.076 & 2.075 & 0.04 \\
$8{\times}$0.6B & 100B & $1.2{\times}10^{-3}$ & $6{\times}10^{-4}$ & 1200 & 7000  & 13000 & 2.098 & 2.095 & 0.13 \\
$8{\times}$0.6B & 100B & $1.2{\times}10^{-3}$ & $6{\times}10^{-4}$ & 1200 & 5000  & 11500 & 2.097 & 2.095 & 0.08 \\
$8{\times}$0.6B & 100B & $10^{-3}$            & $5{\times}10^{-4}$ & 5000 & 10000 & 15000 & 2.057 & 2.064 & 0.35 \\
\midrule
\multicolumn{10}{l}{\textit{\small 100B tokens} \textemdash{} \textit{\small loss-equivalent schedules}} \\[1pt]
$8{\times}$0.6B & 100B & $10^{-3}$          & $10^{-3}$          & 2000 & 2000  & 2000  & 2.077 & 2.078 & 0.05 \\
$8{\times}$0.6B & 100B & $10^{-3}$          & $5{\times}10^{-4}$ & 2450 & 7000  & 12000 & 2.079 & 2.080 & 0.05 \\
$8{\times}$0.6B & 100B & $5{\times}10^{-5}$ & $5{\times}10^{-4}$ & 1000 & 1000  & 9500  & 2.077 & 2.078 & 0.05 \\
\bottomrule
\end{tabular}
\end{table}

\begin{table}[t]
\centering
\small
\setlength{\tabcolsep}{7pt}
\renewcommand{\arraystretch}{1.2}
\caption{\blue{Baseline comparison for pre-training extrapolation. All methods are fitted on the small-scale training grid (Table~\ref{tab:data-inventory}) and evaluated on the configurations in Table~\ref{tab:pre-train} without re-fitting. Mean relative error~(\%) is reported per group.}}
\label{tab:pt-baseline}
\vspace{2pt}
\begin{tabular}{@{}l cccc@{}}
\toprule
& \multicolumn{3}{c}{\textbf{Per-Group Rel.\ Err.\ (\%)}} & \\
\cmidrule(lr){2-4}
\textbf{Method}
  & \textbf{300B-SP\,(3)} & \textbf{100B-MP\,(4)} & \textbf{100B-LE\,(3)}
  & \textbf{All\,(10)} \\
\midrule
Linear regression & 63.8 & 44.8 & 44.7 & 50.8 \\
Tissue cross-scale & 11.3 & 9.7 & 10.2 & 10.3 \\
Simplified + Chin.\ scale & 8.9 & 6.9 & 7.2 & 7.6 \\
\midrule
\textbf{Generalized Opt-Law} & \textbf{0.1} & \textbf{0.2} & \textbf{0.2} & \textbf{0.2} \\
\bottomrule
\multicolumn{5}{@{}l}{\footnotesize SP: single-phase\,; MP: multi-phase\,; LE: loss-equivalent.} \\
\end{tabular}
\end{table}

\begin{figure}[t]
    \centering
    \begin{minipage}{\textwidth}
        \subfigure[]{
            \includegraphics[width=0.48\linewidth]{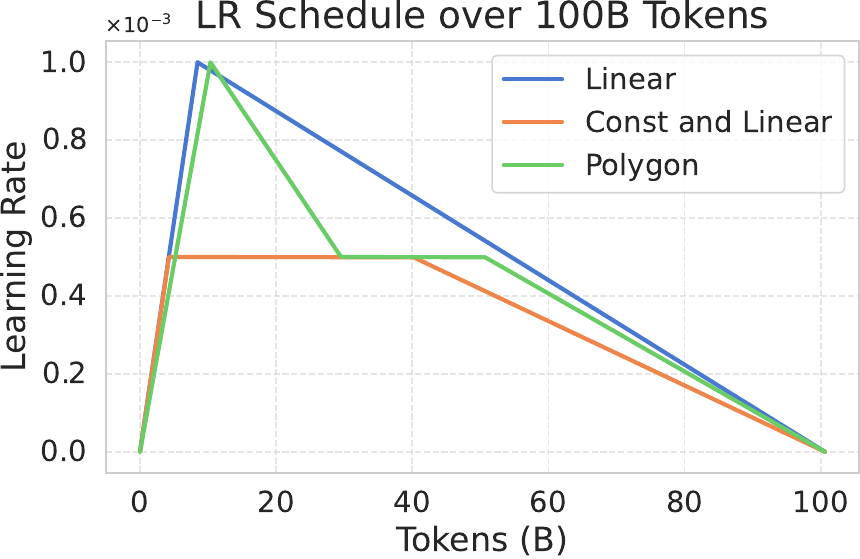}
        }
        \subfigure[]{
            \includegraphics[width=0.48\linewidth]{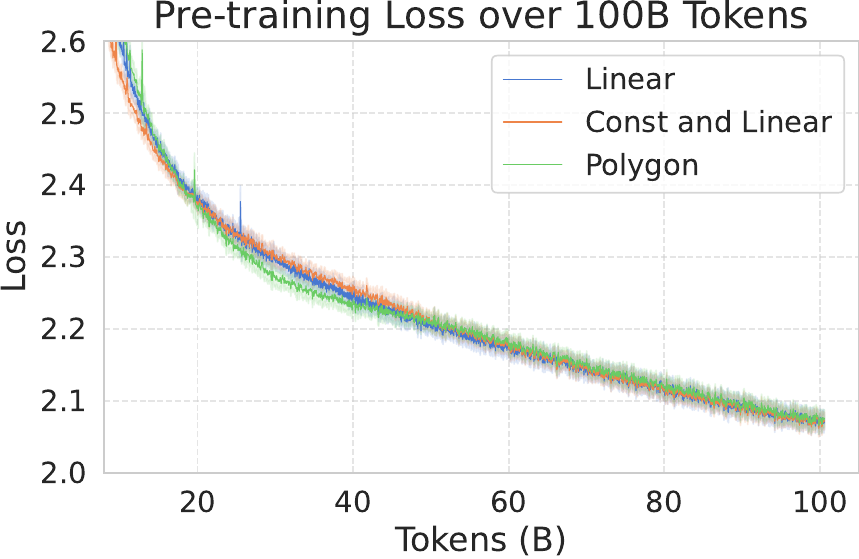}
        }
    \end{minipage}
    \caption{\blue{(a) Three LR schedules for the $8{\times}0.6$B model on 100B tokens, selected by inverse design to yield the same predicted loss. (b) The corresponding training loss curves. The final losses converge to nearly the same value, confirming that \abbr{} correctly identifies loss-equivalent configurations.}}
    \label{fig:sameloss}
    \vspace{-4mm}
\end{figure}

\blue{The first group in Table~\ref{tab:pre-train} evaluates predictions at 300B tokens under simple warmup-plus-cooldown schedules. Across different model sizes and peak LRs, prediction errors remain below 0.2\%. The second group tests multi-phase piecewise-linear schedules at 100B tokens. Despite the increased schedule complexity, all errors stay below 0.5\%. These results are obtained without re-fitting, confirming that the small-scale calibration transfers to configurations well beyond the training grid.}

\blue{As a further test,} we \blue{used} generalized \abbr{} \blue{to} infer specific combinations of $\eta_1$, $\eta_2$, $a_1$, $a_2$, and $a_3$ (last three rows of Table~\ref{tab:pre-train}) that would yield the same predicted loss. We then pre-trained the $8\times 0.6$B MoE on 100B tokens using these schedules. Fig.~\ref{fig:sameloss} shows that the training losses converge to nearly the same final value across all three schedules, \blue{confirming that the law correctly identifies loss-equivalent configurations even when the schedules differ substantially in shape.}

\blue{Table~\ref{tab:pt-baseline} compares the generalized \abbr{} against three schedule-aware baselines on the same Table~\ref{tab:pre-train} configurations. All methods are fitted on the small-scale training grid, which contains only linear warmup-plus-cooldown schedules, and are evaluated without re-fitting. Linear regression uses $(\eta_{\max}, a_1, N, S)$ as inputs and produces mean errors above 44\%, as the linear form cannot capture the nonlinear relationship between schedule features and loss at new scales. The Tissue cross-scale baseline incorporates cumulative learning rate and annealing features, reducing errors to approximately 10\%, but its functional form does not separate convergence and escape contributions, limiting extrapolation precision. The simplified Opt-Law with Chinchilla-style scale terms achieves 7--9\% errors. By contrast, the generalized \abbr{} maintains errors below 0.5\% on all three groups. The gap widens at 300B tokens (0.1\% vs.\ 8.9--63.8\%), where the extrapolation distance from the training grid is largest, suggesting that the SDE-derived convergence-escape decomposition provides a more robust basis for cross-scale transfer than the features used by the baselines.}

\blue{\paragraph{Schedule-family generalization.}
All results above involve piecewise-linear schedules. The training grid contains only linear warmup-plus-cooldown schedules, and the Table~\ref{tab:pre-train} configurations extend this to multi-phase piecewise-linear variants. To test generalization to entirely different schedule shapes, we evaluate the generalized \abbr{} on cosine and constant-plus-linear schedules. The generalized \abbr{} correctly identifies the optimal schedule family in all five evaluated groups, with prediction errors below 2\%. The detailed per-group results are reported in Appendix Table~\ref{tab:schedule-transfer}. This cross-family transfer reflects the design of the feature construction: the convergence and escape features capture the mechanisms through which the schedule affects training, rather than the parametric form of the schedule itself. Specifically, $\int\eta$ measures how much total learning the optimizer accumulates, and $\int{\eta'_t}^2$ measures how aggressively the learning rate changes during warmup and cooldown. Different families that deliver similar values of these quantities produce similar predicted losses. We do not claim universal validity for arbitrary schedules, but these results provide evidence that the SDE-derived features are not restricted to the family used for fitting.}

\subsection{\blue{Continual-Training Ranking}}\label{sec:ct-ranking}
\blue{This subsection evaluates whether the generalized \abbr{} can rank candidate LR schedules for continual training, where a pre-trained model is further trained on new data with a re-warmed learning rate.}

\blue{We use} an $8\times0.6$B MoE model pre-trained on 300B English tokens from the RedPajama-v2 dataset (Table~\ref{tab:pre-train}, row one). We evaluate two scenarios: strong distribution shift (100B Chinese tokens) and weak distribution shift (100B English tokens).

\begin{figure}[t]
    \centering
    \begin{minipage}{\textwidth}
        \subfigure[]{
            \includegraphics[width=0.48\linewidth]{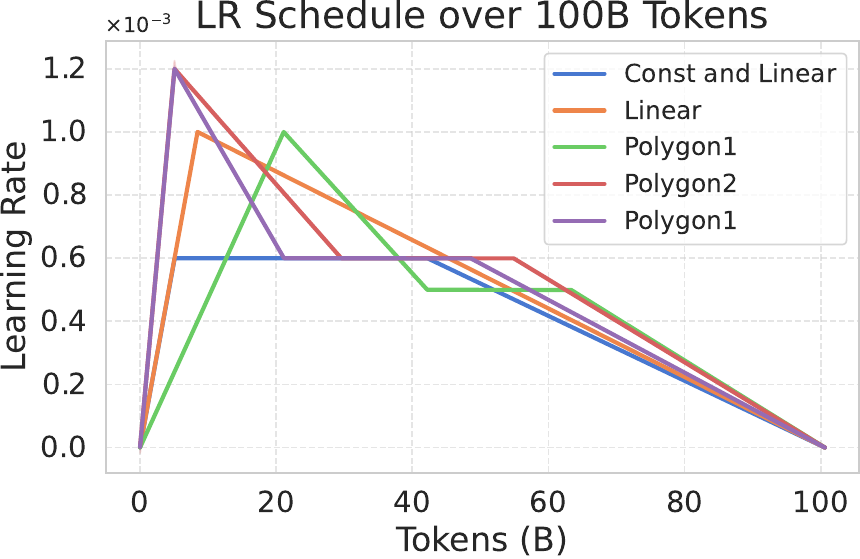}
        }
        \subfigure[]{
            \includegraphics[width=0.48\linewidth]{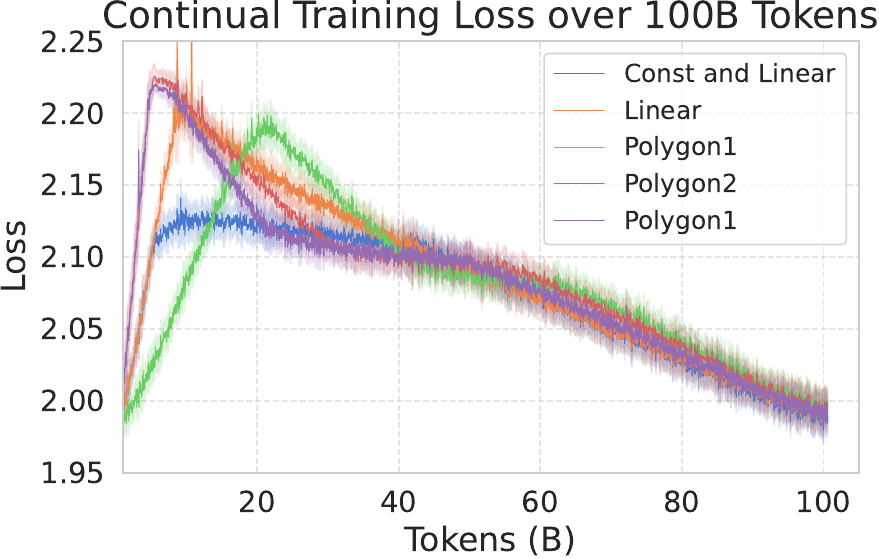}
        }
    \end{minipage}
    \caption{\blue{(a) Five LR schedules for continual training of the $8{\times}0.6$B model on 100B English tokens (weak distribution shift). (b) The corresponding loss curves. Final losses are closely clustered despite substantial schedule variation.}}
    \label{fig:c-EN-loss}
    \vspace{-4mm}
\end{figure}

\begin{table}[t]
\centering
\caption{\blue{Ranking evaluation for continual training of the $8{\times}0.6$B model on 100B tokens under strong shift (Chinese) and weak shift (English). Rank (Act./Pred.): actual versus predicted rank. Pred.\ Score is used for ranking only; lower score indicates lower predicted loss.}}
\label{tab:continual-train}
\vspace{2pt}
\small
\setlength{\tabcolsep}{3pt}
\renewcommand{\arraystretch}{1.2}
\begin{tabular}{@{}c l cc rrr rr cc@{}}
\toprule
& \textbf{Schedule}
  & \multicolumn{5}{c}{\textbf{LR Parameters}}
  & \textbf{Obs.} & \textbf{Pred.}
  & \multicolumn{2}{c}{\textbf{Rank}} \\
\cmidrule(lr){3-7}\cmidrule(lr){8-8}\cmidrule(lr){9-9}\cmidrule(l){10-11}
& & $\eta_1$ & $\eta_2$ & $a_1$ & $a_2$ & $a_3$
  & \textbf{Loss} & \textbf{Score}
  & \textbf{Act.} & \textbf{Pred.} \\
\midrule
\multicolumn{11}{@{}l}{\textit{Strong distribution shift ($8{\times}$0.6B, 100B Chinese tokens)}} \\[1pt]
\ding{172} & Polygon1 & $10^{-3}$            & $5{\times}10^{-4}$ & 5000  & 10000 & 15000 & 1.826 & 2.058 & 3 & 3 \\
\ding{173} & Polygon2 & $1.2{\times}10^{-3}$ & $6{\times}10^{-4}$ & 1200  & 7000  & 13000 & 1.821 & 2.014 & 2 & 2 \\
\ding{174} & Linear   & $10^{-3}$            & $10^{-3}$          & 2000  & 2000  & 2000  & 1.818 & 1.975 & 1 & 1 \\
\midrule
\multicolumn{11}{@{}l}{\textit{Weak distribution shift ($8{\times}$0.6B, 100B English tokens; near-tie)}} \\[1pt]
\ding{175} & Const+Linear & $6{\times}10^{-4}$   & $6{\times}10^{-4}$   & 1200 & 1200  & 10000 & 1.991 & 1.995 & 1 & 2 \\
\ding{176} & Linear       & $10^{-3}$            & ---                  & 2000 & 2000  & 2000  & 1.992 & 1.975 & 2 & 1 \\
\ding{177} & Polygon1v2   & $1.2{\times}10^{-3}$ & $6{\times}10^{-4}$   & 1200 & 5000  & 11500 & 1.993 & 2.010 & 3 & 3 \\
\ding{178} & Polygon2     & $1.2{\times}10^{-3}$ & $6{\times}10^{-4}$   & 1200 & 7000  & 13000 & 1.995 & 2.017 & 4 & 4 \\
\ding{179} & Polygon1     & $10^{-3}$            & $5{\times}10^{-4}$   & 5000 & 10000 & 15000 & 1.996 & 2.059 & 5 & 5 \\
\bottomrule
\end{tabular}
\end{table}

\begin{table}[t]
\centering
\caption{\blue{Fine-tuning ranking evaluation for the $8{\times}0.6$B model at 3B, 30B, and 60B tokens. Rank (Act./Pred.) denotes actual versus predicted rank out of 3 candidates per token budget. Lower Pred.\ Score indicates a schedule predicted to achieve lower loss. All runs use warmup-and-decay schedules except \ding{174} in the 60B group (two-phase). The predicted ranking matches the actual ranking in all nine configurations.}}
\label{tab:finetune}
\vspace{2pt}
\small
\setlength{\tabcolsep}{4pt}
\renewcommand{\arraystretch}{1.2}
\begin{tabular}{@{}c l r c r rr cc@{}}
\toprule
& & \multicolumn{3}{c}{\textbf{LR Parameters}}
  & \textbf{Observed} & \textbf{Predicted}
  & \multicolumn{2}{c}{\textbf{Rank}} \\
\cmidrule(lr){3-5}\cmidrule(lr){6-6}\cmidrule(lr){7-7}\cmidrule(l){8-9}
& \textbf{Schedule} & $\eta_{\max}$ & $a_1$ & $a_3$
  & \textbf{Loss} & \textbf{Score}
  & \textbf{Act.} & \textbf{Pred.} \\
\midrule
\multicolumn{9}{@{}l}{\textit{3B tokens --- small-scale fine-tuning}} \\[1pt]
\ding{172} & Linear-S & $5{\times}10^{-4}$   & 128 & 128  & 0.920 & 2.302 & 3 & 3 \\
\ding{173} & Linear-L & $10^{-3}$            & 512 & 512  & 0.909 & 2.158 & 2 & 2 \\
\ding{174} & Linear-H & $1.5{\times}10^{-3}$ & 128 & 128  & 0.904 & 2.102 & 1 & 1 \\
\midrule
\multicolumn{9}{@{}l}{\textit{30B tokens --- medium-scale fine-tuning}} \\[1pt]
\ding{172} & Linear-S & $5{\times}10^{-4}$   & 128 & 128  & 0.857 & 2.116 & 3 & 3 \\
\ding{173} & Linear-L & $10^{-3}$            & 512 & 512  & 0.849 & 2.028 & 2 & 2 \\
\ding{174} & Linear-H & $1.5{\times}10^{-3}$ & 128 & 128  & 0.847 & 2.004 & 1 & 1 \\
\midrule
\multicolumn{9}{@{}l}{\textit{60B tokens --- large-scale fine-tuning (mixed schedules)}} \\[1pt]
\ding{172} & Linear-S & $5{\times}10^{-4}$   & 128 & 128  & 0.822 & 2.052 & 2 & 2 \\
\ding{173} & Linear-H & $2{\times}10^{-3}$   & 512 & 512  & 0.817 & 1.972 & 1 & 1 \\
\ding{174} & Two-phase & $3{\times}10^{-4}$   & 128 & 7000 & 0.826 & 2.178 & 3 & 3 \\
\bottomrule
\end{tabular}
\end{table}

\blue{Since the model starts from a pre-trained checkpoint rather than from scratch, the generalized \abbr{} must account for the optimization history. For the convergence term, the cumulative optimization budget should reflect both stages: if pre-training already accumulated a large $\int\eta^{pre}$, the model is closer to a local minimum and the marginal effect of the continual-training schedule differs from that of training from scratch. We therefore define the history-aware convergence integral as}
\[
  \int_0^{a_{c_1}^{ct}} \eta_t \, dt = \int_0^{S^{pre}} \eta_t^{pre} \, dt + \int_0^{a_{c_1}^{ct}} \eta_t^{ct} \, dt.
\]
\blue{For the escape term, the pre-training schedule is already finished and contributes no derivatives during continual training, so only the current-stage schedule shape enters. In the pre-training feature construction, the effect of $\eta_{\text{max}}$ on the escape term is implicitly absorbed by the fitted coefficients together with the scale feature $\eta_{\text{max}}^{\alpha_{12}}$. In continual training, however, the coefficients are not re-fitted, and the re-warmed peak LR may lie outside the pre-training range. To ensure the escape feature remains comparable under different LR scales, we use the normalized form $\int_{a_{e_2}}^S {{\eta^\prime_t}^2}/{\eta^4_{\text{max}}}$ directly, following the ratio that appears in the escape bound (Theorems~\ref{thm:SGD-prob} and~\ref{thm:Adam-prob}).}

In the strong-shift scenario, we used 100B tokens of Chinese common crawl data, mixed with $5\%$ of the original English data as replay~\citep{parmar2024reuse,guo2024efficient,ke2023continual}. \blue{The observed losses are well separated (spread $>$0.4\%), and the modified \abbr{} recovers the exact 1--2--3 ranking (Table~\ref{tab:continual-train}, top). The linear schedule achieves the lowest loss, followed by Polygon2 and Polygon1. Notably, this ranking differs from the pre-training ranking under the same schedules, confirming that the history-aware modification captures the changed optimization dynamics rather than simply reproducing the pre-training ordering.}

In the weak-shift scenario, we sampled 100B English tokens and tested five LR schedules (Fig.~\ref{fig:c-EN-loss}). \blue{The final losses differ by only 0.005 (0.25\% relative), consistent with prior observations that schedule choice has limited impact under small distribution shift~\citep{ibrahim2024simple}. The predicted scores correctly recover positions 3--5, while the top two schedules are separated by only 0.001 in observed loss (Table~\ref{tab:continual-train}, bottom). This near-tie setting illustrates the resolution limit of the fitted law.}

\blue{In both scenarios, the history-aware \abbr{} produces correct rankings wherever the observed loss differences are resolvable, providing reliable schedule selection for continual training.}

\subsection{\blue{Fine-Tuning Ranking}}\label{sec:exp-ft}

\blue{This subsection extends the ranking evaluation to fine-tuning, the final stage of a multi-stage training pipeline. In practice, production LLMs are typically not fine-tuned directly from a base pre-training checkpoint~\citep{dubey2024llama}. Continual training on domain-specific corpora usually precedes task-level fine-tuning, so we test the history-aware \abbr{} on the full three-stage pipeline (pre-training $\rightarrow$ continual pre-training $\rightarrow$ fine-tuning), evaluating whether the law remains effective after multiple training stages and data shifts.}

In our experiments, we fine-tuned an $8\times 0.6$B MoE model on the Stack-Repo Java code dataset. Prior to fine-tuning, the model had been pre-trained on 300B English tokens and subsequently underwent continual training on 100B Chinese tokens. We conducted fine-tuning with three token budgets (3B, 30B, and 60B) across three LR schedules per budget, as detailed in Table~\ref{tab:finetune}.

\blue{Following the same history-aware approach as in Sec.~\ref{sec:ct-ranking}, the convergence integral accumulates the optimization budget from all preceding stages: $\int_0^{a_{c_1}^{ft}} \eta_t \, dt = \int_0^{S^{pre}} \eta_t^{pre} \, dt + \int_0^{S^{ct}} \eta_t^{ct} \, dt + \int_0^{a_{c_1}^{ft}} \eta_t^{ft} \, dt$. The escape term uses the current-stage schedule only, with the same $\eta^4_{\text{max}}$ normalization as in Sec.~\ref{sec:ct-ranking}.}

\blue{At 3B and 30B tokens, the predicted scores correctly rank all three candidates (Table~\ref{tab:finetune}). The predicted score gap between the best and worst schedule narrows from 0.20 at 3B to 0.11 at 30B, mirroring the observed loss gap (0.016 at 3B versus 0.010 at 30B). This parallel narrowing indicates that the law not only preserves the ranking but also tracks the diminishing sensitivity of loss to schedule choice as the fine-tuning budget grows. At 60B tokens, one candidate (\ding{174}) uses a two-phase schedule rather than simple warmup-and-decay. The law still correctly identifies the best and worst candidates, indicating that the history-aware construction generalizes across schedule types even after three training stages. Across all nine fine-tuning configurations, the predicted and actual rankings agree perfectly.}

\blue{These results demonstrate that the history-aware \abbr{} retains ranking utility across a realistic multi-stage pipeline spanning pre-training, continual training, and fine-tuning.}

\section{\blue{SDE Analysis behind Opt-Laws}}\label{sec:sde}
\blue{This section presents the SDE-based analysis that motivates the feature construction in Secs.~3--4.} We analyze the training dynamics of non-convex smooth optimization problems for both SGD and Adam~\citep{kingma2014adam}, modeling them as SDEs to capture the effects of time-varying learning rates.

Our analysis examines the influence of key hyper-parameters, including the LR schedule $\eta(t)$, data size $D$, and model size $N$, on two critical aspects: convergence speed and the ability to escape suboptimal local minima. Notably, we observe that both SGD and Adam yield similar forms in their bounds related to these hyper-parameters. This similarity allows us to generalize \abbr{}, encapsulating the relationship between these factors and the final training loss, thereby offering deeper insights into the optimization process.

While existing research frequently uses SDEs to model optimization processes and analyze convergence~\citep{li2017stochastic,soto2022sde,dambrine2024stochastic}, or to study escape time from local minima (e.g., the Eyring--Kramers law~\citep{berglund2013sharp,bovier2015metastability}), these studies are predominantly restricted to convex problems, quadratic objectives, or fixed learning rates. In contrast, our work extends to general non-convex smooth problems with time-varying LRs, thus leading to a nonlinear, time-inhomogeneous SDE. This added complexity precludes the direct application of existing results. Our contributions address these new theoretical challenges, providing novel insights into the optimization of LLMs under more general conditions.

\subsection{Optimization Methods and SDEs}
We study the following  non-convex optimization problem: 
\begin{equation}\label{eq:problem}
\min_{\*x}   f(\bm{\*x}) \coloneqq  \bb E_{\bm{\zeta}\sim \$D}\qty[F(\bm{\*x},\bm{\zeta})],  
\end{equation}
where the objective function $F(\cdot,\cdot)$ is differentiable and possibly non-convex, data $\bm{\zeta}$ is drawn from an unknown data distribution $\$D$, $\bm{\*x}$ is the learnable parameters.  
The formulation Eqn.~\eqref{eq:problem} encapsulates a large body of machine learning problems, e.g., LLM training problems, and least square regression.  
For SGD, the update scheme is as follows:
\begin{equation}\label{eq:sgd-adam}
    \quad \*x_{k+1} =    \*x_{k}  - \eta_0\eta_k {\tilde{\*g}}_k,
    \tag{SGD}
\end{equation}
where gradient estimate $\tilde{\*g}_k$ is defined as $\tilde{\*g}_k \coloneqq \nabla f(\*x_k) + \*z_k$, with a LR of $\eta_0\eta_k$, where $\eta_0$ is a small rescaling parameter and $\eta_k$ is the normalized LR. This choice of LR facilitates the derivation of the corresponding SDE. Following previous work~\citep{zhu2019anisotropic,malladi2022sdes,xie2020diffusion,zhou2024towards}, we assume $\*z_k \sim \mathcal{N}\qty(0,\bm{\Sigma}\qty(\*x_k))$, where
\begin{equation}\label{eq:sigma} \bm{\Sigma}\qty(\*x_k) \coloneqq \frac{1}{B}\qty(\frac{1}{D}\sum_{i=1}^D \qty(\nabla F(\*x_{k},\bm{\zeta}_i) - \nabla f(\*x_k))\qty(\nabla F(\*x_{k},\bm{\zeta}_i) - \nabla f(\*x_k))^\top). 
\end{equation}
We should mention that the exact characterization of the noise $\*z_k$ in stochastic optimization remains unresolved~\citep{simsekli2019tail,haochen2021shape}. For analytical convenience, here we adopt a Gaussian noise assumption, acknowledging that determining the precise stochastic process of the noise during LLM training is beyond the scope of this work.

Under mild conditions, SGD can be approximated by an \ito-SDE, where the noise is modeled by Brownian motion, as derived via the Euler-Maruyama method in Sec.~\ref{sec:sgd-sde-derivation}:
\begin{equation}
\dd \*X_t = - \eta(t)\nabla f(\*X_t) \dd t + \sqrt{\eta_0}\eta(t)\sigma(\*X_t)   \dd \*W_t.
\label{eq:SGD-SDE}
\tag{SGD-SDE}
\end{equation}
where $\*W_t$ is a Wiener process, $\sigma: \mathbb{R}^N \to \mathbb{R}^{N \times N}$ is defined as $\sigma(\*x) \coloneqq \sqrt{\bm{\Sigma}(\*x)}$, $\eta(t)$ is the normalized LR schedule, and $\eta_0$ is a small rescaling parameter. Specifically, $\eta_0 \cdot \eta(t)$ corresponds to the original LR schedule.

Similarly, the dynamics of the Adam  can be approximated using a lifted \ito-SDE. The algorithmic steps for Adam are presented as follows:
\begin{equation}\label{eq:adam}
    \begin{cases}
        \*x_{k+1} = \*x_k - \eta_0\eta_k {\*m_k} \odot \qty({{\*v_k+\epsilon}})^{-\frac{1}{2}},\\
    	\*m_{k+1} =  (1-\beta_{1,k})  \*m_{k} +  \beta_{1,k} \tilde{\*g}_{k+1}, \\
     \*v_{k+1} = (1-\beta_{2,k}) \*v_{k} + \beta_{2,k} \tilde{\*g}_{k+1}^2, 
    \end{cases}
    \tag{Adam}
\end{equation}
 where $\odot$ denotes the element-wise product, with $\beta_{1,k}, \beta_{2,k} \in (0,1)$ and initial conditions $\*m_0 = \*v_0 = \bm{0}$. The parameter $\epsilon$ prevents degeneracy. 
 In practice, both $\beta_{1,k}$ and $\beta_{2,k}$ are typically close to 1. To facilitate SDE-based analysis, we use the parameters $\beta_{1,k} = 1 - \hat{c}_1\eta_k$ and $\beta_{2,k} = 1 - \hat{c}_2\eta_k$, following a single-time scale scheme~\citep{ding2023adam,shen2022single,xiao2023convergence}, where $\hat{c}_1$ and $\hat{c}_2$ are small constants. The resulting \ito-SDE for  Adam's dynamics is:
\begin{equation}
\left\{ 
\begin{aligned}
\dd \*X_t &= - \eta(t)\*m_t\odot\qty(\*v_t+\epsilon)^{-\frac{1}{2}}\dd t,\\
\dd\*m_t&=-c_1\eta(t)(\*m_t-\nabla f(\*X_t))\dd t+c_1'\eta(t)\sigma(\*X_t)\dd \*W_t,\\
\dd \*v_t&=-c_2\eta(t)(\*v_t-{\rm diag}\qty(\*\Sigma(\*X_t)))\dd t,
\end{aligned}\right.
\label{eq:Adam-SDE}
\tag{Adam-SDE}
\end{equation}
where $c_1$, $c_1'$, and $c_2$ are constants. The notation ${\rm diag}(\*M)$ refers to the vector formed by the diagonal elements of matrix $\*M$, while ${\rm Diag}(\*v)$ denotes the diagonal matrix with the entries of $\*v$ on its diagonal. Given the initial condition $\*v_0 = \bm{0}$, the solution for $\*v_t$ is:
\[
\*v_t=\exp\qty(-c_2\int_0^t\eta(s)\dd s)\qty[\int_0^t\exp\qty(c_2\int_0^s\eta(\tau)\dd\tau)c_2\eta(s)\*d_s\dd s],
\]
where $\*d_t:={\rm diag}\qty(\*\Sigma(\*X_t))$. This formulation ensures that $\*v_t$ remains nonnegative throughout the process.

\subsection{Inspiration from Convergence Guarantee}
In this subsection, we analyze the convergence properties of \eqref{eq:SGD-SDE} and \eqref{eq:Adam-SDE} in the context of non-convex smooth problems, deriving the mathematical formula for the convergence component of \abbr{}. 
To present these results rigorously, we first provide several mild assumptions. In the following discussion, the notation $\norm{\cdot}$ denotes the Euclidean norm when applied to vectors and the Frobenius norm when applied to matrices. While $\norm{\cdot}_{\rm op}$ denotes the operator norm for matrix.

\begin{myassumption}[$L$-smoothness]\label{asm:L-smoothh}
The function $f(\cdot)$ is $L$-smooth with respect to the parameters, namely,
\[
\norm{\nabla f(\*x)-\nabla f(\*y)}\leq L \norm{\*x-\*y}, \quad \forall \*x,~\*y \in  \bb R^{N}.
\]
\end{myassumption}

\begin{myassumption}[Unbiased Estimator]\label{asm:zero-mean}
Given any $\*x \in \bb R^N$, we assume that the entries of $\nabla F(\*x,\bm{\zeta}_i) - \nabla f(\*x)$ are i.i.d. Gaussians $\mathcal{N}\qty(0,\*\Sigma_g)$ for all $i\in [D]$, where $\*\Sigma_g$ is given.
\end{myassumption}

Assumption \ref{asm:L-smoothh} asserts that the objective function of our optimization problem is $L$-smooth. This is a standard and mild assumption in the stochastic optimization literature, which facilitates the determination of specific convergence rates for optimization algorithms, a task that is otherwise challenging \citep{arjevani2019lower,guo2021novel,li2022restarted,xie2022optimization, xie2024adan,xie2024loco, zhou2022win}. 
Assumption~\ref{asm:zero-mean} stipulates that the noise introduced by different data points during the estimation of the gradient is unbiased and independent, given the weights of LLMs. This assumption is also common, especially when the Langevin diffusion is employed to analyze the behavior of SGD~ \citep{jastrzkebski2017three,he2019control,xie2020diffusion}. Notably, we do not assume that the gradient estimation noise originates from an oracle distribution $\mathcal{N}\qty(0,\*\Sigma)$. As indicated in Eqn.~\eqref{eq:sigma}, the covariance matrix encountered during actual training is not $\E_{\bm{\zeta}\sim \$D}[\qty(\nabla F(\*x,\bm{\zeta}_i) - \nabla f(\*x))\qty(\nabla F(\*x,\bm{\zeta}_j) - \nabla f(\*x))^\top]$. 

Under these assumptions, we derive a concentration inequality for the trace of the covariance matrix $\*\Sigma(\*x_k)$ defined in Eqn.~\eqref{eq:sigma}, which is helpful for the following analysis. In this work, the batch size $B$ is fixed. For simplicity, we assume $B=1$ throughout the analysis. However, the results are readily adaptable to any fixed positive batch size $B$.

\begin{myprop}[Trace Boundedness]\label{prop:sigma_bound}
Suppose Assumption~\ref{asm:zero-mean} holds. Given any point $\mathbf{x} \in \mathbb{R}^N$ and a positive $t > 0$, the covariance matrix in Eqn.~\eqref{eq:sigma} satisfies
\[
\mathbb{P}\qty\bigg(\left|\operatorname{Tr}(\bm{\Sigma}(\mathbf{x})) - \operatorname{Tr}(\bm{\Sigma}_g)\right| \geq t) \leq 2 \exp\left(-\frac{D t^2}{4 \operatorname{Tr}(\Sigma_g^2) + 2t \sigma_g^2}\right),
\]
where $D$ is the number of samples and $\sigma_g:=\lambda_{\max}\qty(\*\Sigma_g^{\frac{1}{2}})$ is the largest eigenvalue of $\*\Sigma_g^{\frac{1}{2}}$.
\end{myprop}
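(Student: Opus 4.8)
The plan is to prove the concentration inequality for $\operatorname{Tr}(\bm{\Sigma}(\mathbf{x}))$ by recognizing it as a sum of squared i.i.d.\ Gaussian quantities and applying a Bernstein-type tail bound. Under Assumption~\ref{asm:zero-mean}, with $B=1$, the matrix $\bm{\Sigma}(\mathbf{x}) = \frac{1}{D}\sum_{i=1}^D \mathbf{u}_i \mathbf{u}_i^\top$ where $\mathbf{u}_i \coloneqq \nabla F(\mathbf{x},\bm{\zeta}_i) - \nabla f(\mathbf{x})$ are i.i.d.\ with i.i.d.\ Gaussian entries from $\mathcal{N}(0,\bm{\Sigma}_g)$ (interpreting the assumption as $\mathbf{u}_i \sim \mathcal{N}(\mathbf{0},\bm{\Sigma}_g)$). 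Then $\operatorname{Tr}(\bm{\Sigma}(\mathbf{x})) = \frac{1}{D}\sum_{i=1}^D \|\mathbf{u}_i\|^2$, and since $\E[\|\mathbf{u}_i\|^2] = \operatorname{Tr}(\bm{\Sigma}_g)$, the quantity $\operatorname{Tr}(\bm{\Sigma}(\mathbf{x})) - \operatorname{Tr}(\bm{\Sigma}_g)$ is an average of $D$ i.i.d.\ centered random variables $Y_i \coloneqq \|\mathbf{u}_i\|^2 - \operatorname{Tr}(\bm{\Sigma}_g)$.

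The key step is to establish that each $Y_i$ is sub-exponential with the right parameters and then invoke Bernstein's inequality. First I would diagonalize: write $\mathbf{u}_i = \bm{\Sigma}_g^{1/2}\mathbf{g}_i$ with $\mathbf{g}_i \sim \mathcal{N}(\mathbf{0},\mathbf{I}_N)$, so $\|\mathbf{u}_i\|^2 = \sum_{j=1}^N \lambda_j g_{ij}^2$ where $\lambda_j$ are the eigenvalues of $\bm{\Sigma}_g$ and $g_{ij}$ are i.i.d.\ standard normals. This is a generalized chi-squared; its moment generating function and Bernstein parameters are classical (see e.g.\ the Laurent–Massart bound or Hanson–Wright). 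The relevant parameters are the variance proxy $\nu^2 = 2\sum_j \lambda_j^2 = 2\operatorname{Tr}(\bm{\Sigma}_g^2)$ and the scale parameter $\alpha = \sup_j \lambda_j = \lambda_{\max}(\bm{\Sigma}_g) = \sigma_g^2$ (using $\sigma_g = \lambda_{\max}(\bm{\Sigma}_g^{1/2})$). Bernstein's inequality for the average of $D$ such i.i.d.\ terms then gives
\[
\mathbb{P}\qty(\abs{\tfrac{1}{D}\textstyle\sum_i Y_i} \geq t) \leq 2\exp\qty(-\frac{D t^2/2}{\nu^2 + \alpha t}) = 2\exp\qty(-\frac{D t^2}{4\operatorname{Tr}(\bm{\Sigma}_g^2) + 2t\sigma_g^2}),
\]
which is exactly the claimed bound.

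The main obstacle, such as it is, is getting the sub-exponential constants to match the stated denominator exactly rather than up to universal constants; this requires using a sharp form of the moment bound for $\sum_j \lambda_j(g_j^2 - 1)$, namely the bound $\E[e^{s\sum_j \lambda_j(g_j^2-1)}] \leq \exp\qty(\frac{s^2 \sum_j \lambda_j^2}{1 - 2s\lambda_{\max}})$ valid for $0 \le s < 1/(2\lambda_{\max})$, which follows from $-\log(1-x) \le x + \frac{x^2}{1-x}$ termwise after expanding the Gaussian MGF $\E[e^{s\lambda(g^2-1)}] = e^{-s\lambda}(1-2s\lambda)^{-1/2}$. Aggregating over the $D$ independent samples multiplies the exponent by $D$ in the right way (the scale $\lambda_{\max}$ is unchanged, the variance term scales as $1/D$ after dividing by $D$), and then optimizing over $s$ in the Chernoff bound produces the Bernstein form above. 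A minor point to address is that the bound must hold for the two-sided deviation; the lower tail follows from an analogous MGF bound for $s<0$ (which is in fact valid for all $s \le 0$ with no scale restriction, so the two-sided bound is dominated by the upper-tail expression). No other subtlety arises since $\mathbf{x}$ is fixed throughout, so there is no need for a union bound or uniformity argument.
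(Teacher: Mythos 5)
Your proposal is correct and follows essentially the same route as the paper's proof: write $\operatorname{Tr}(\bm{\Sigma}(\mathbf{x}))=\frac{1}{D}\sum_i\|\mathbf{u}_i\|^2$, identify each $\|\mathbf{u}_i\|^2$ as a weighted chi-squared with mean $\operatorname{Tr}(\bm{\Sigma}_g)$, variance $2\operatorname{Tr}(\bm{\Sigma}_g^2)$, and sub-exponential scale $\sigma_g^2$, and apply Bernstein's inequality. In fact you are more careful than the paper, which simply cites Bernstein with these constants, whereas you sketch the MGF derivation that justifies the exact denominator.
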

We can also estimate the maximal eigenvalue of the random matrix $\bm{\Sigma}(\*x)$ by random matrix theory.
\begin{myprop}[Covariance Boundedness]\label{prop:sigma_op_bound}
Suppose Assumption~\ref{asm:zero-mean} holds, then the covariance matrix in Eqn.~\eqref{eq:sigma} satisfies
\[
\sup_{\mathbf{x} \in \mathbb{R}^N} \mathbb{E}\left[\lambda_{\max}(\bm{\Sigma}(\mathbf{x}))\right] \leq \left(1+\sqrt{\frac{D}{N}}\right) \sigma_g^2 + \frac{C \sigma_g^2}{N^{2/3}}.
\]
where $C > 0$ is a constant independent of $N$, the expectation is taken with respect to the Gaussian distribution $\mathcal{N}(0, \Sigma_g)$ specified in Assumption~\ref{asm:zero-mean}, and $\sigma_g:=\lambda_{\max}\qty(\*\Sigma_g^{\frac{1}{2}})$.
\end{myprop}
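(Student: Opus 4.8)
The plan is to bound the largest eigenvalue of $\bm{\Sigma}(\mathbf{x})$ using standard non-asymptotic results from random matrix theory for sample covariance matrices of Gaussian vectors. Recall from Eqn.~\eqref{eq:sigma} (with $B=1$) that $\bm{\Sigma}(\mathbf{x}) = \frac{1}{D}\sum_{i=1}^D \mathbf{g}_i \mathbf{g}_i^\top$, where, by Assumption~\ref{asm:zero-mean}, the vectors $\mathbf{g}_i = \nabla F(\mathbf{x},\bm{\zeta}_i) - \nabla f(\mathbf{x})$ are i.i.d.\ $\mathcal{N}(0,\bm{\Sigma}_g)$ and hence can be written as $\mathbf{g}_i = \bm{\Sigma}_g^{1/2}\mathbf{h}_i$ with $\mathbf{h}_i \sim \mathcal{N}(0,\mathbf{I}_N)$. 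First I would write $\bm{\Sigma}(\mathbf{x}) = \bm{\Sigma}_g^{1/2}\,\widehat{\mathbf{H}}\,\bm{\Sigma}_g^{1/2}$ where $\widehat{\mathbf{H}} = \frac{1}{D}\sum_{i=1}^D \mathbf{h}_i\mathbf{h}_i^\top$ is the standard Wishart-type sample covariance, so that $\lambda_{\max}(\bm{\Sigma}(\mathbf{x})) \le \|\bm{\Sigma}_g^{1/2}\|_{\rm op}^2\,\lambda_{\max}(\widehat{\mathbf{H}}) = \sigma_g^2\,\lambda_{\max}(\widehat{\mathbf{H}})$. This immediately reduces the whole statement, including the uniformity over $\mathbf{x}$ (the bound on $\widehat{\mathbf{H}}$ is completely independent of $\mathbf{x}$), to estimating $\mathbb{E}[\lambda_{\max}(\widehat{\mathbf{H}})]$.

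Next I would invoke the non-asymptotic bound for the operator norm of a Gaussian sample covariance matrix. The cleanest route is to use the concentration/expectation bound on the largest singular value of the $N\times D$ matrix $\mathbf{A}$ with i.i.d.\ standard Gaussian entries, namely $\mathbb{E}\,\sigma_{\max}(\mathbf{A}) \le \sqrt{D} + \sqrt{N}$ (Gordon's inequality / Davidson–Szarek), together with the fact that $\widehat{\mathbf{H}} = \frac{1}{D}\mathbf{A}\mathbf{A}^\top$ so $\lambda_{\max}(\widehat{\mathbf{H}}) = \sigma_{\max}(\mathbf{A})^2/D$. To pass from $\mathbb{E}\,\sigma_{\max}(\mathbf{A})$ to $\mathbb{E}\,\sigma_{\max}(\mathbf{A})^2$ one uses Gaussian concentration: $\sigma_{\max}(\mathbf{A})$ is a $1$-Lipschitz function of the Gaussian matrix, so $\mathrm{Var}(\sigma_{\max}(\mathbf{A})) \le 1$, whence $\mathbb{E}\,\sigma_{\max}(\mathbf{A})^2 \le (\mathbb{E}\,\sigma_{\max}(\mathbf{A}))^2 + 1 \le (\sqrt{D}+\sqrt{N})^2 + 1$. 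Dividing by $D$ gives
\[
\mathbb{E}[\lambda_{\max}(\widehat{\mathbf{H}})] \le \frac{(\sqrt{D}+\sqrt{N})^2 + 1}{D} = \left(1 + \sqrt{\tfrac{N}{D}}\right)^2 + \frac{1}{D}.
\]

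At this point a little care is needed to match the exact form in the statement, which features $\left(1+\sqrt{D/N}\right)\sigma_g^2$ and a correction of order $\sigma_g^2 N^{-2/3}$ rather than the $\left(1+\sqrt{N/D}\right)^2$ I obtained above; I would reconcile the two by checking the intended scaling regime (presumably $D = \Theta(N)$, or $D$ a fixed multiple of $N$, so that $\sqrt{N/D}$ and $\sqrt{D/N}$ are of the same order and the square can be absorbed), and then collecting all lower-order contributions — the $+1/D$ term and the gap between $(\sqrt D + \sqrt N)^2/D$ and $(1+\sqrt{D/N})\sigma_g^2$ — into a single $C\sigma_g^2/N^{2/3}$ remainder, choosing $C$ large enough. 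The main obstacle is therefore not the probabilistic content (which is the textbook Davidson–Szarek estimate plus Gaussian Lipschitz concentration) but rather bookkeeping: verifying that the dimensional parameters are in the regime where the stated bound holds and that the crude lower-order terms are genuinely dominated by $C\sigma_g^2 N^{-2/3}$. An alternative, if one wants to avoid any regime assumption, is to keep the bound in the symmetric form $\sigma_g^2(1+\sqrt{N/D})^2 + \sigma_g^2/D$ and note it implies the stated one under the paper's conventions; I would flag this in the proof. The uniformity over $\mathbf{x}\in\mathbb{R}^N$ is free throughout, since after the reduction to $\widehat{\mathbf{H}}$ nothing depends on $\mathbf{x}$.
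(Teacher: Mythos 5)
Your opening reduction is exactly the paper's: whiten to write $\bm{\Sigma}(\mathbf{x})=\bm{\Sigma}_g^{1/2}\widehat{\mathbf{H}}\bm{\Sigma}_g^{1/2}$ and bound $\lambda_{\max}(\bm{\Sigma}(\mathbf{x}))\le \sigma_g^2\,\lambda_{\max}(\widehat{\mathbf{H}})$, which removes the dependence on $\mathbf{x}$. Where you diverge is the key estimate on $\mathbb{E}[\lambda_{\max}(\widehat{\mathbf{H}})]$: the paper invokes the Ledoux--Rider small-deviation bound, specifically $\mathbb{E}\big[(\lambda_{\max}(\widehat{\mathbf{H}})-(1+\sqrt{D/N}))^2\big]\le C^2N^{-4/3}$, and then applies Jensen to get the stated centering plus the $CN^{-2/3}$ correction in one step; you instead use Davidson--Szarek ($\mathbb{E}\,\sigma_{\max}(\mathbf{A})\le\sqrt{D}+\sqrt{N}$) together with Gaussian Poincar\'e/Lipschitz concentration to pass to the second moment, arriving at $(1+\sqrt{N/D})^2+1/D$. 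Both steps of your argument are correct and fully elementary, whereas the paper leans on a Tracy--Widom-scale variance bound; the trade-off is that your route yields the (standard, and in fact sharp) Marchenko--Pastur edge $(1+\sqrt{N/D})^2$ with a cruder $O(1/D)$ remainder, while the paper's route directly reproduces the exact form appearing in the statement.

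On the final bookkeeping, your instinct is right and worth making precise: for $D/N$ larger than an absolute constant (roughly $D/N\ge 2.7$ suffices, since $1+2\sqrt{N/D}+N/D+1/D\le 1+\sqrt{D/N}$ there), your bound strictly implies the stated one, and the paper explicitly operates in the regime $D/N\gg 1$ (it invokes $D/N\ge 25$ right after the proposition). So this is not a gap in your argument but an imprecision in the proposition itself: as stated, with no restriction relating $D$ and $N$, the claimed inequality is actually false (take $D=1$, $\bm{\Sigma}_g=\mathbf{I}$, where $\mathbb{E}[\lambda_{\max}]=N$), and the paper's own centering $1+\sqrt{D/N}$ is nonstandard for the normalization $\frac1D\mathbf{Z}\mathbf{Z}^\top$ (whose edge is $(1+\sqrt{N/D})^2$). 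Your suggested alternative of stating the bound as $\sigma_g^2(1+\sqrt{N/D})^2+\sigma_g^2/D$ is the cleaner, assumption-free formulation. One minor point: your factorization $\mathbf{g}_i=\bm{\Sigma}_g^{1/2}\mathbf{h}_i$ also handles singular $\bm{\Sigma}_g$, which the paper's use of $\bm{\Sigma}_g^{-1/2}$ does not.
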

Proposition~\ref{prop:sigma_op_bound} provides a non-asymptotic extension of the classical Marchenko-Pastur theorem~\citep{bai2010spectral,ledoux2010small} from random matrix theory, offering a more precise variance estimate based on the convergence of the largest eigenvalue of covariance matrices to $(1+\sqrt{D/N})\sigma_g^2$. For most LLMs adhering to scaling laws, the ratio $D/N \gg 1$. Consequently, we can expect $\lambda_{\max}({\*\Sigma}(\*x)) = \order{\sigma_g^2\sqrt{D/N}}$ in expectation.

\subsubsection{SGD Convergence Analysis} With the smoothness and boundedness properties established, we can now derive the expected convergence rate of SGD-SDE for non-convex problems in terms of time-varying LRs. 
\begin{mytheorem}[SGD Convergence Bound]\label{thm:covergence-SGD}
Suppose Assumptions \ref{asm:L-smoothh} and \ref{asm:zero-mean} hold. For the dynamics described in \eqref{eq:SGD-SDE}, the following bound holds:
\begin{equation}
\E\qty[\overline{\norm{\nabla f(\*X_t)}^2}]\leq\frac{f(\*X_0)-f_{\min}}{\int_0^t\eta(s)\dd s}+\frac{\eta_0L\sigma_0^2N \int_0^t\eta(s)^2\dd s}{2\int_0^t\eta(s)\dd s}, 
\label{eq:SGD_opt_bound}    
\end{equation}
where $\overline{\norm{\nabla f(\*X_t)}^2}\coloneqq \frac{\int_0^t \eta(s)\norm{\nabla f(\*X_s)}^2\dd s}{\int_0^t\eta(s)\dd s}$ and $f_{\min} \coloneqq \min_{\*x \in \bb R^N} f(\*x)$, and 
\[
\sigma_0:=\sigma_g\sqrt{\qty(1+\sqrt{\frac{D}{N}}) + \frac{C}{N^{\frac{2}{3}}}}.
\]
\end{mytheorem}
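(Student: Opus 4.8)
The plan is to apply the Itô formula to $f(\*X_t)$ along the \eqref{eq:SGD-SDE} dynamics and then rearrange the resulting identity into the stated bound. First I would write $\dd f(\*X_t) = \innerprod{\nabla f(\*X_t), \dd \*X_t} + \tfrac12 \operatorname{Tr}\!\big(\nabla^2 f(\*X_t)\, \dd\langle \*X\rangle_t\big)$. Substituting the drift $-\eta(t)\nabla f(\*X_t)\dd t$ and the diffusion $\sqrt{\eta_0}\,\eta(t)\sigma(\*X_t)\dd\*W_t$, the first-order term contributes $-\eta(t)\norm{\nabla f(\*X_t)}^2\dd t$ plus a martingale term $\sqrt{\eta_0}\,\eta(t)\innerprod{\nabla f(\*X_t), \sigma(\*X_t)\dd\*W_t}$, while the quadratic-variation term contributes $\tfrac{\eta_0}{2}\eta(t)^2 \operatorname{Tr}\!\big(\nabla^2 f(\*X_t)\,\bm{\Sigma}(\*X_t)\big)\dd t$. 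Integrating from $0$ to $t$ and taking expectations kills the martingale term (under the standard integrability assumptions the stochastic integral is a true martingale), leaving
\[
\E[f(\*X_t)] - f(\*X_0) = -\E\!\left[\int_0^t \eta(s)\norm{\nabla f(\*X_s)}^2\dd s\right] + \frac{\eta_0}{2}\E\!\left[\int_0^t \eta(s)^2 \operatorname{Tr}\!\big(\nabla^2 f(\*X_s)\bm{\Sigma}(\*X_s)\big)\dd s\right].
\]

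Next I would bound the Hessian–covariance trace term. By $L$-smoothness (Assumption~\ref{asm:L-smoothh}), $\nabla^2 f(\*x) \preceq L\*I$, and since $\bm{\Sigma}(\*x)$ is positive semidefinite, $\operatorname{Tr}(\nabla^2 f(\*x)\bm{\Sigma}(\*x)) \leq L\operatorname{Tr}(\bm{\Sigma}(\*x))$. Then I would control $\operatorname{Tr}(\bm{\Sigma}(\*x))$ uniformly in $\*x$: bounding it by $N$ times the largest eigenvalue and invoking Proposition~\ref{prop:sigma_op_bound} gives $\E[\operatorname{Tr}(\bm{\Sigma}(\*x))] \leq N\big((1+\sqrt{D/N})\sigma_g^2 + C\sigma_g^2/N^{2/3}\big) = N\sigma_0^2$, which is exactly where the constant $\sigma_0$ in the statement comes from. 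Using the nonnegativity of $\eta$, this yields $\frac{\eta_0}{2}\E[\int_0^t \eta(s)^2 \operatorname{Tr}(\nabla^2 f(\*X_s)\bm{\Sigma}(\*X_s))\dd s] \leq \frac{\eta_0 L \sigma_0^2 N}{2}\int_0^t \eta(s)^2\dd s$.

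Finally I would assemble the bound. Rearranging the Itô identity and dropping $\E[f(\*X_t)] \geq f_{\min}$ gives $\E[\int_0^t \eta(s)\norm{\nabla f(\*X_s)}^2\dd s] \leq f(\*X_0) - f_{\min} + \frac{\eta_0 L\sigma_0^2 N}{2}\int_0^t\eta(s)^2\dd s$; dividing through by $\int_0^t \eta(s)\dd s$ and recognizing the left-hand side as $\E[\overline{\norm{\nabla f(\*X_t)}^2}]$ produces Eqn.~\eqref{eq:SGD_opt_bound}. The main obstacle I anticipate is the rigorous justification that the stochastic integral $\int_0^t \sqrt{\eta_0}\,\eta(s)\innerprod{\nabla f(\*X_s),\sigma(\*X_s)\dd\*W_s}$ is a genuine martingale with zero expectation — this requires either an a priori moment bound on $\norm{\nabla f(\*X_s)}$ and $\operatorname{Tr}(\bm\Sigma(\*X_s))$ along the trajectory (e.g. via a localization/stopping-time argument followed by Fatou or monotone convergence), or an explicit non-explosion argument for \eqref{eq:SGD-SDE}; interchanging $\E$ with the time integral (Fubini/Tonelli) also needs the integrand's integrability, which again follows once the uniform trace bound from Proposition~\ref{prop:sigma_op_bound} and local boundedness of $\nabla f$ are in hand. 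A secondary subtlety is that $\sigma(\*x) = \sqrt{\bm\Sigma(\*x)}$ need not be globally Lipschitz, so the existence/uniqueness of a strong solution to \eqref{eq:SGD-SDE} may itself need the localization argument, but none of this affects the final inequality, which is what the theorem asserts.
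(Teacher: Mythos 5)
Your proposal is correct and follows essentially the same route as the paper's proof: apply It\^{o}'s formula to $f(\*X_t)$, take expectations to kill the martingale term, bound the Hessian--covariance trace by $L N$ times $\sup_{\*x}\E[\lambda_{\max}(\bm\Sigma(\*x))]$ via Proposition~\ref{prop:sigma_op_bound}, and rearrange. The only differences are cosmetic (you pass through $\Tr(\bm\Sigma)\leq N\lambda_{\max}(\bm\Sigma)$ where the paper bounds $\innerprod{\nabla^2 f,\bm\Sigma}\leq LN\norm{\bm\Sigma}_{\rm op}$ directly), and your remarks on the martingale/integrability technicalities are, if anything, more careful than the paper, which takes these for granted.
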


It can be observed that the average squared norm of the gradient can be effectively upper bounded by a function of the training hyper-parameters. Given a maximal time horizon $T>0$, the bound in Eqn.~\eqref{eq:SGD_opt_bound} achieves the optimal bound of $\mathcal{O}(1/\sqrt{T})$ when we select constant LR $\eta(t)=\mathcal{O}(1/\sqrt{T})$. This upper bound consists of two terms, both of which share the denominator of $\int_0^t\eta(s)\dd s$. {Despite the complexity inherent in the numerator, the numerator of the first term is clearly of the order $\order{N}$, given that $f(\*X_0)-f_{\min} \leq \frac{L}{2} \norm{\*X_0-\*X^*}^2$, where $\*X^* \in \argmin_{\*x} f(\*x)$. The magnitude of the numerator in the second term is also $\order{N}$. Thus, the overall order of the average squared norm of the gradient is $\mathcal{O}(N/\int_0^t\eta(s)\dd s)$.} 
\blue{\paragraph{Practical interpretation.} In non-convex optimization, finding a global minimum is generally intractable, and the standard convergence criterion is to reach an approximate stationary point. In the stochastic setting, the gradient norm at any single iterate is subject to mini-batch noise and serves as a poor convergence indicator. The averaged squared gradient norm~\citep{ghadimi2013stochastic} smooths out this noise and measures how much time the trajectory spends in near-stationary regions, providing a robust trajectory-level convergence metric.

A small averaged gradient norm implies that the optimizer has largely converged to stationary points. Stochastic gradient methods almost surely avoid strict saddle points~\citep{lee2019first}, so the stationary points reached in practice are local minima. For overparameterized neural networks, recent work shows that the loss landscape is benign: \citet{chen2026exploring} observe a basin-like structure in LLMs where performance remains stable within each basin and the basin expands with model scale, consistent with the broadening of the low-loss region observed in Figs.~\ref{fig:toyexample} and~\ref{fig:MoE-Loss}. These observations connect the averaged gradient norm to the final loss: convergence to a local minimum in a benign landscape yields a good loss.

Our convergence bound (Theorem~\ref{thm:covergence-SGD}) shows that the averaged gradient norm is controlled by $1/\int\eta$, which therefore serves as a natural surrogate for convergence speed in the feature construction. Under the Polyak--\L{}ojasiewicz (PL) condition~\citep{karimi2016linear}, the qualitative chain above can be made quantitative: $\|\nabla f\|^2 \geq 2\mu(f - f_{\min})$ directly converts the gradient norm bound into a loss gap bound. \citet{liu2022loss} show that overparameterized networks generically satisfy PL*, a variant of PL, providing theoretical grounding for this connection. Our feature construction relies on the qualitative relationship and does not require PL to hold.}

\subsubsection{Adam Convergence Analysis} We proceed to analyze the convergence of~\eqref{eq:Adam-SDE}. Before delving into this analysis, we need to introduce an additional mild assumption, which is commonly employed in the study of stochastic first-order methods \citep{bertsekas2000gradient,reddi2018convergence}.
\begin{myassumption}[Smoothness and Boundedness]\label{asm:eta}
The function $f(\cdot)$ is smooth with respect to the parameters and the normalized LR schedule is bounded,
\begin{enumerate}
    \item $\abs{f(\*x)-f(\*y)}\leq\ell\norm{\*x-\*y}\quad \forall \*x,~\*y \in  \bb R^{N}$, \label{asm:eta-1}
    \item $\int_0^\infty\eta(s) \dd s=\infty, \quad \int_0^\infty\eta^2(s) \dd s <\infty$.
\end{enumerate}
\end{myassumption}
Assumption \ref{asm:eta}.1 asserts global Lipschitz continuity, implying that the gradient of $f$  is uniformly bounded. This is a common assumption in extensive literature on adaptive gradient methods \citep{reddi2018convergence,xiao2024adam,xie2024adan}. Assumption \ref{asm:eta}.2 requires that the LR be non-summable but square summable, which is a fundamental assumption for ensuring the convergence of stochastic first-order methods \citep{bertsekas2000gradient,davis2020stochastic}. Based on this assumption, we first establish the boundedness of the~\eqref{eq:Adam-SDE} dynamics as follows. The boundedness helps to guarantee the convergence of~\eqref{eq:Adam-SDE}.  From Proposition \ref{prop:sigma_op_bound}, we can observe that the maximal eigenvalue of $\*\Sigma(\*x)$ can be bounded by $\mathcal{O}\qty(\sigma_g^2\sqrt{D/N})$ in expectation. Based on this observation, we make the following stronger assumption than Assumption \ref{asm:zero-mean} to establish the convergence of Adam. 
\begin{myassumption}
\label{asm:max_bound_sigma}
There exists $\bar\sigma$, such that $\lambda_{\max}({\*\Sigma}(\*x))\leq\bar\sigma$ for all $\*x \in \mathbb{R}^N$.
\end{myassumption}

\begin{myprop}[Dynamics Boundedness]\label{prop:adam-bound}
Suppose Assumptions~\ref{asm:eta}~and~\ref{asm:max_bound_sigma} hold, then there exist positive constants $M$ and $V$ which are independent of $N$, such that 
\[
\sup_{t\geq0}\left\{\E \qty[\norm{\*m_t}^2]\right\} \leq M^2N, \quad\sup_{t\geq0}\left\{\norm{\*v_t}_\infty\right\} \leq V.
\]
\end{myprop}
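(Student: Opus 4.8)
\textbf{Proof proposal for Proposition~\ref{prop:adam-bound}.}

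The plan is to control $\*v_t$ and $\*m_t$ separately, using the explicit integral representation of $\*v_t$ for the first bound and a Gr\"onwall-type argument for the second. First I would establish the uniform bound on $\*v_t$. Since $\*v_0 = \bm 0$ and, by Assumption~\ref{asm:max_bound_sigma}, each entry of $\*d_s = {\rm diag}(\*\Sigma(\*X_s))$ is bounded above by $\lambda_{\max}(\*\Sigma(\*X_s)) \leq \bar\sigma$ and is nonnegative (diagonal entries of a PSD matrix), the explicit solution
\[
\*v_t = \exp\qty(-c_2\int_0^t\eta(s)\dd s)\qty[\int_0^t\exp\qty(c_2\int_0^s\eta(\tau)\dd\tau)\,c_2\eta(s)\,\*d_s\,\dd s]
\]
is a convex-combination-type average of the $\*d_s$ values: replacing $\*d_s$ by the constant $\bar\sigma\bm 1$ and evaluating the integral gives exactly $\bar\sigma\bm 1\cdot\qty(1-\exp(-c_2\int_0^t\eta))$, which is $\le \bar\sigma\bm 1$ entrywise. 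Hence $\norm{\*v_t}_\infty \leq \bar\sigma =: V$ for all $t$, uniformly in $N$, which is the second claim.

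Next I would handle $\*m_t$. Applying \ito's formula to $\norm{\*m_t}^2$ using the SDE $\dd\*m_t = -c_1\eta(t)(\*m_t-\nabla f(\*X_t))\dd t + c_1'\eta(t)\sigma(\*X_t)\dd\*W_t$, and taking expectations to kill the martingale term, yields
\[
\frac{\dd}{\dd t}\E\norm{\*m_t}^2 = -2c_1\eta(t)\,\E\norm{\*m_t}^2 + 2c_1\eta(t)\,\E\inner{\*m_t,\nabla f(\*X_t)} + (c_1')^2\eta(t)^2\,\E\norm{\sigma(\*X_t)}^2.
\]
For the cross term I would use Young's inequality $2\inner{\*m_t,\nabla f(\*X_t)} \leq \norm{\*m_t}^2 + \norm{\nabla f(\*X_t)}^2$, and bound $\norm{\nabla f(\*X_t)}^2 \leq \ell^2 N$ by Assumption~\ref{asm:eta}.1 (global Lipschitz continuity of $f$ forces $\norm{\nabla f}\le\ell$ pointwise, so $\norm{\nabla f}^2 \le \ell^2 N$ since $\nabla f \in \bb R^N$ — actually the Lipschitz bound gives $\norm{\nabla f(\*x)}\le \ell$ directly, so $\norm{\nabla f}^2\le\ell^2$; one then carries $\ell^2$ and notes the $N$-scaling enters only through $\sigma$). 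For the noise term, $\E\norm{\sigma(\*X_t)}^2 = \E\,{\rm Tr}(\*\Sigma(\*X_t)) \leq N\bar\sigma$ by Assumption~\ref{asm:max_bound_sigma}. Collecting these, with $u(t) := \E\norm{\*m_t}^2$ and using $\int_0^\infty\eta^2 < \infty$ from Assumption~\ref{asm:eta}.2, one gets a differential inequality of the form $u'(t) \leq -c_1\eta(t)u(t) + \eta(t)g(t)$ where $g(t) := c_1\ell^2 + (c_1')^2\eta(t)N\bar\sigma$ has $\int_0^\infty \eta(t)g(t)\,\dd t$ controlled; more simply, $u'(t)\le -c_1\eta(t)u(t)+C_1 N\eta(t)$ after bounding $\eta(t)\le\eta_{\max}$, since the $\eta^2$ term is also $O(N\eta(t))$. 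The integrating factor $\exp(c_1\int_0^t\eta)$ then gives $u(t) \leq u(0)e^{-c_1\int_0^t\eta} + \frac{C_1 N}{c_1}\qty(1-e^{-c_1\int_0^t\eta}) \leq \frac{C_1 N}{c_1} =: M^2 N$ since $u(0) = \norm{\*m_0}^2 = 0$, uniformly in $t$ and with $M$ independent of $N$.

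The main obstacle I anticipate is \emph{justifying the application of \ito's formula and the vanishing of the stochastic-integral expectation}, since a priori we only know $\*m_t$ solves the SDE and do not yet have the moment bound we are trying to prove — the martingale term $\int_0^t c_1'\eta(s)\*m_s^\top\sigma(\*X_s)\dd\*W_s$ is only a local martingale until square-integrability of $\*m_s$ on compacts is known. I would circumvent this with a standard localization/stopping-time argument: define $\tau_n = \inf\{t : \norm{\*m_t} > n\}$, run the \ito computation on $[0, t\wedge\tau_n]$ where the stochastic integral is a genuine martingale, derive the Gr\"onwall bound for $\E\norm{\*m_{t\wedge\tau_n}}^2$ with constants independent of $n$, and then let $n\to\infty$ via Fatou's lemma. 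A secondary technical point is confirming $\eta(\cdot)$ is locally bounded and measurable enough for these manipulations, which follows from its piecewise-smooth definition; and that $\sigma(\*X_t) = \sqrt{\*\Sigma(\*X_t)}$ is well-defined and its Frobenius norm squared equals ${\rm Tr}(\*\Sigma(\*X_t))$, both immediate. With these in place, the two bounds combine to give the stated result.
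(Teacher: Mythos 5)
Your proposal is correct, but for the momentum bound it takes a genuinely different route from the paper. The paper exploits the Ornstein--Uhlenbeck structure of the $\*m_t$-equation: it writes down the ODEs for the mean $\bar{\*m}_t=\E[\*m_t]$ and the covariance $\bar{\*P}_t$, solves them explicitly with integrating factors, bounds $\norm{\bar{\*m}_t}$ by $\ell$ via L'Hopital's rule and $\norm{\bar{\*P}_t}_{\rm op}$ by $(c_1')^2\bar\sigma\int_0^\infty\eta^2<\infty$, and then assembles $\E\norm{\*m_t}^2={\rm Tr}(\bar{\*P}_t+\bar{\*m}_t\bar{\*m}_t^\top)\le N\norm{\E[\*m_t\*m_t^\top]}_{\rm op}$. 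You instead apply It\^o's formula to $\norm{\*m_t}^2$ directly and run Gr\"onwall with an integrating factor. Both yield $\order{N}$ with $N$-independent constants; your route avoids the paper's slightly informal step of treating $\nabla f(\*X_t)$ and $\*\Sigma(\*X_t)$ as deterministic inputs to the mean/covariance ODEs (the bounds still go through there only because $\norm{\nabla f}\le\ell$ and $\lambda_{\max}(\*\Sigma)\le\bar\sigma$ hold uniformly in $\*x$), and your explicit localization/stopping-time remark addresses a measurability point the paper glosses over. The one place you lean on something not literally in Assumption~\ref{asm:eta} is the bound $\eta(t)^2\le\eta_{\max}\eta(t)$, which requires $\eta$ bounded; this is harmless in context (every schedule considered has a finite peak), and you could avoid it entirely by integrating the $\eta^2$ term against the integrating factor and invoking $\int_0^\infty\eta^2<\infty$, which is exactly what the paper does. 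For the $\*v_t$ bound both proofs use the same explicit solution; your convex-combination observation gives $\sup_t\norm{\*v_t}_\infty\le\bar\sigma$ directly and is cleaner than the paper's limsup-via-L'Hopital argument, which technically only controls the tail.
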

This proposition demonstrates uniform bounds for $\*m_t$ and $\*v_t$. The distinction is that the bound for $\*v_t$ is deterministic, whereas the bound for $\*m_t$ is in the sense of expectation. This is due to the SDE associated with $\*m_t$ involving a Brownian motion. Since both $M$ and $V$ are independent of $N$, the scales of $\*m_t$ and $\*v_t$ are matched in the sense that both $\E[\norm{\*m_t}^2]$ and $\E[\norm{\*v_t}^2]$ are of the order $\mathcal{O}(N)$.

\begin{mytheorem}[Adam Convergence Bound]\label{thm:covergence-adam}
 Suppose Assumptions~\ref{asm:L-smoothh},~\ref{asm:eta}~and~\ref{asm:max_bound_sigma} hold. For the dynamics of~\eqref{eq:Adam-SDE}, the following bound holds:
\begin{equation}\label{eq:adam_opt_bound_m}
\begin{aligned}
\E\left[\overline{\norm{\*m_t}^2}\right]\leq\frac{\sqrt{V+\epsilon}\left(f(\*X_0)+\frac{1}{2c_1}\inner{\frac{\*m_0}{\sqrt{\*v_0+\epsilon}},\*m_0}-f_{\min}\right)}{\qty(1-\frac{c_2}{4c_1})\int_0^t\eta(s)\dd s}+\frac{\frac{(c_1')^2}{2c_1} \bar\sigma\sqrt{V+\epsilon}\int_0^t\eta(s)^2\dd s}{\qty(1-\frac{c_2}{4c_1})\sqrt{\epsilon}\int_0^t\eta(s)\dd s},
\end{aligned}
\end{equation}
where $\overline{\norm{{\*m_t}}^2} \coloneqq \frac{\int_0^t \eta(s)\norm{\*m_s}^2\dd s}{\int_0^t\eta(s)\dd s}$, and $f_{\min} \coloneqq \min_{\*x \in \bb R^N} f(\*x)$.
Moreover, we also have 
\begin{equation}
\begin{aligned}
\E\left[\overline{\norm{\nabla f(\*X_t)}^2}\right]\leq&\frac{2\sqrt{V+\epsilon}\left(f(\*X_0)-\frac{1}{c_1}\inner{\nabla f(\*X_0),\frac{\*m_0}{\sqrt{\*v_0+\epsilon}}}-f_{\min}+\frac{\ell M\sqrt{N}}{c_1\sqrt{\epsilon}}\right)}{\int_0^t\eta(s)\dd s}\\
&+\qty(\frac{2L\sqrt{V+\epsilon}}{c_1\epsilon}+\qty(1+\frac{\bar\sigma^2}{\epsilon^2})\frac{c_2^2(V+\epsilon)}{2c_1^2\epsilon})\E\left[\overline{\norm{\*m_t}^2}\right].
\end{aligned}
\label{eq:adam_opt_bound_grad}
\end{equation}
\end{mytheorem}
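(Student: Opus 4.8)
\textbf{Proof proposal for Theorem~\ref{thm:covergence-adam}.}

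The plan is to run the standard descent-lemma argument from smooth non-convex optimization, but carried out along the continuous-time trajectory of \eqref{eq:Adam-SDE} using \ito's formula, in two stages: first controlling the momentum variable $\*m_t$ (this gives \eqref{eq:adam_opt_bound_m}), then transferring that control to the true gradient $\nabla f(\*X_t)$ (this gives \eqref{eq:adam_opt_bound_grad}). Throughout I will use Proposition~\ref{prop:adam-bound} to bound $\norm{\*v_t}_\infty \le V$ deterministically and $\E[\norm{\*m_t}^2] \le M^2 N$, Assumption~\ref{asm:max_bound_sigma} to bound $\lambda_{\max}(\*\Sigma(\*X_t)) \le \bar\sigma$, Assumption~\ref{asm:L-smoothh} for $L$-smoothness, and Assumption~\ref{asm:eta} for the Lipschitz/integrability conditions.

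For the first bound, I would apply \ito's formula to the composite quantity $\Phi_t \coloneqq f(\*X_t) + \tfrac{1}{2c_1}\inner{\*m_t \odot (\*v_t+\epsilon)^{-1/2}, \*m_t}$, chosen so that the drift terms involving $\inner{\nabla f(\*X_t), \*m_t}$ cancel. Since $\*X_t$ has finite variation (its SDE has no Brownian term), $\dd f(\*X_t) = \inner{\nabla f(\*X_t), \dd \*X_t} = -\eta(t)\inner{\nabla f(\*X_t), \*m_t \odot (\*v_t+\epsilon)^{-1/2}}\dd t$. For the second piece, \ito's formula applied to the product $\*m_t \odot \*m_t$ (the $\*v_t$ factor has finite variation, so it contributes only ordinary chain-rule terms) produces: a drift $-2c_1\eta(t)\inner{\*m_t \odot(\*v_t+\epsilon)^{-1/2}, \*m_t - \nabla f(\*X_t)}$ that supplies the desired cancellation, a quadratic-variation term $(c_1')^2\eta(t)^2 \Tr(\sigma(\*X_t)^\top \operatorname{Diag}((\*v_t+\epsilon)^{-1/2})\sigma(\*X_t))$ bounded using $\Tr(\*\Sigma(\*X_t)) \le N\bar\sigma$ and $(\*v_t+\epsilon)^{-1/2} \le \epsilon^{-1/2}$, and a term from $\dd\*v_t$ of the form $+\tfrac{c_2}{4c_1}\eta(t)\inner{(\*v_t+\epsilon)^{-3/2}\odot(\*v_t - \operatorname{diag}\*\Sigma(\*X_t)) \odot \*m_t, \*m_t}$, which I bound in absolute value by $\tfrac{c_2}{4c_1}\eta(t)\norm{\*m_t}^2 (\*v_t+\epsilon)^{-1/2}$ — this is what generates the $(1 - c_2/(4c_1))$ factor after moving it to the left side. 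Integrating from $0$ to $t$, taking expectations (the martingale part vanishes), and using $\Phi_t \ge f_{\min}$ together with $(\*v_t+\epsilon)^{-1/2} \ge (V+\epsilon)^{-1/2}$ on the term $\int_0^t \eta(s)\norm{\*m_s}^2 (\*v_s+\epsilon)^{-1/2}\dd s$, then dividing by $\int_0^t \eta(s)\dd s$, yields \eqref{eq:adam_opt_bound_m}.

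For the second bound, the key identity is $\nabla f(\*X_t) = \*m_t - (\*m_t - \nabla f(\*X_t))$, so $\norm{\nabla f(\*X_t)}^2 \le 2\norm{\*m_t}^2 + 2\norm{\*m_t - \nabla f(\*X_t)}^2$, and I need an averaged bound on the momentum-tracking error $\norm{\*m_t - \nabla f(\*X_t)}^2$. Applying \ito to $f(\*X_t)$ again but now expanding $\*m_t = \nabla f(\*X_t) + (\*m_t - \nabla f(\*X_t))$ inside the drift, and separately controlling $\dd(\*m_t - \nabla f(\*X_t))$ — which picks up a $-c_1\eta(t)(\*m_t - \nabla f(\*X_t))\dd t$ contraction from the $\*m_t$ dynamics, a cross term $-\nabla^2 f(\*X_t)\dd \*X_t$ of size controlled by $L$, $\norm{\*m_t}$ and $\epsilon^{-1/2}$, and the same Brownian term — I can set up a Grönwall-type / energy estimate whose averaged form bounds $\E[\overline{\norm{\*m_t - \nabla f(\*X_t)}^2}]$ by a constant multiple of $\E[\overline{\norm{\*m_t}^2}]$ plus the $1/\int_0^t\eta$ boundary term; the $\ell M\sqrt{N}/(c_1\sqrt{\epsilon})$ term in \eqref{eq:adam_opt_bound_grad} comes from bounding $\inner{\nabla f(\*X_0), \*m_0/\sqrt{\*v_0+\epsilon}}$-type leftover using Assumption~\ref{asm:eta}.1 and Proposition~\ref{prop:adam-bound}. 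Collecting the coefficients gives the stated $\tfrac{2L\sqrt{V+\epsilon}}{c_1\epsilon} + (1 + \bar\sigma^2/\epsilon^2)\tfrac{c_2^2(V+\epsilon)}{2c_1^2\epsilon}$ multiplier.

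The main obstacle I anticipate is the bookkeeping of the \ito correction terms arising from the coupling between $\*m_t$ and $\*v_t$: because the preconditioner $(\*v_t+\epsilon)^{-1/2}$ is itself a (finite-variation but nontrivial) process, differentiating $\inner{\*m_t\odot(\*v_t+\epsilon)^{-1/2},\*m_t}$ produces a $\dd\*v_t$ term with a $(\*v_t+\epsilon)^{-3/2}$ weight, and one must verify that its sign/magnitude is exactly what is needed to produce the benign $(1-c_2/(4c_1))$ factor rather than something uncontrollable — this requires $c_2 < 4c_1$ implicitly and careful use of $\*v_t \ge 0$ and $\operatorname{diag}\*\Sigma(\*X_t) \ge 0$. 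A secondary technical point is justifying that the local martingale terms are true martingales (so their expectation vanishes), which follows from the uniform moment bounds in Proposition~\ref{prop:adam-bound} and Assumption~\ref{asm:max_bound_sigma}, but should be stated explicitly.
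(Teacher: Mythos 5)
Your first stage is essentially the paper's own argument: the paper applies \ito's formula to exactly the Lyapunov function you propose, $\phi_1 = f(\*X_t)+\tfrac{1}{2c_1}\inner{(\*v_t+\epsilon)^{-1/2}\odot\*m_t,\*m_t}$, obtains the same cancellation of the $\inner{\nabla f,\*m}$ drift, the same quadratic-variation term bounded via $\bar\sigma$ and $\epsilon^{-1/2}$, and the same $\dd\*v_t$ term with the $(\*v_t+\epsilon)^{-3/2}$ weight producing the $(1-\tfrac{c_2}{4c_1})$ factor. One precision point: that last term is not controlled ``in absolute value'' as you write; the paper drops $-\operatorname{diag}\*\Sigma(\*X_t)$ using its favorable sign and then uses $(\*v_t+\epsilon)^{-3/2}\odot\*v_t\leq(\*v_t+\epsilon)^{-1/2}$ entrywise --- a one-sided bound. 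Taking absolute values would leave you with a $\max(V,\bar\sigma)$ factor that does not reduce to the clean $(1-\tfrac{c_2}{4c_1})$ coefficient, so the sign exploitation you mention in your ``obstacles'' paragraph is in fact mandatory, not optional.

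For the second bound your route genuinely diverges from the paper's. You propose the decomposition $\norm{\nabla f}^2\leq 2\norm{\*m}^2+2\norm{\*m-\nabla f}^2$ together with a Gr\"onwall-type energy estimate on the tracking error $\*e_t=\*m_t-\nabla f(\*X_t)$, whose dynamics contract at rate $c_1\eta(t)$ and pick up a Hessian cross term of size $L\norm{\*m_t}/\sqrt{\epsilon}$ plus the Brownian quadratic variation. This is the classical ``momentum tracks the gradient'' argument from discrete-time Adam analyses and it does work here: integrating the $\norm{\*e_t}^2$ inequality gives an averaged tracking-error bound in terms of $\E[\overline{\norm{\*m_t}^2}]$, a $1/\int_0^t\eta$ boundary term, and an extra $\int_0^t\eta^2/\int_0^t\eta$ diffusion term. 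The paper instead introduces a second linear-in-$\*m$ Lyapunov function $\phi_2=f(\*X_t)-\tfrac{1}{c_1}\inner{\nabla f(\*X_t),(\*v_t+\epsilon)^{-1/2}\odot\*m_t}$, whose \ito\ differential directly produces $-\eta(t)\inner{\nabla f\odot(\*v_t+\epsilon)^{-1/2},\nabla f}$ in the drift, with the remaining cross terms handled by Cauchy--Young; the $\tfrac{\ell M\sqrt{N}}{c_1\sqrt{\epsilon}}$ term arises, as you guessed, from bounding $\abs{\E\inner{\nabla f(\*X_t),(\*v_t+\epsilon)^{-1/2}\odot\*m_t}}$ via Assumption~\ref{asm:eta}.1 and Proposition~\ref{prop:adam-bound}. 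The trade-off: the paper's linear correction yields the exact constants and the exact initial-condition term $-\tfrac{1}{c_1}\inner{\nabla f(\*X_0),\*m_0/\sqrt{\*v_0+\epsilon}}$ appearing in Eqn.~\eqref{eq:adam_opt_bound_grad}, whereas your quadratic tracking-error estimate proves a bound of the same qualitative form $\mathcal{O}\qty(\tfrac{1}{\int\eta}+\E[\overline{\norm{\*m_t}^2}])$ but with different multiplicative constants and an additional explicit $\int\eta^2/\int\eta$ term, so it establishes the theorem's conclusion up to constants rather than the displayed inequality verbatim. Your final remarks on the martingale terms being true martingales (via the uniform moment bounds) and on the implicit requirement $c_2<4c_1$ are both correct and are indeed left tacit in the paper.
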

Theorem~\ref{thm:covergence-adam} outlines the convergence rates of the momentum $\*m_t$ and gradient $\nabla f(\*X_t)$ in~\eqref{eq:Adam-SDE}. When focusing solely on the LR schedule $\eta(t)$, and ignoring $N$ and other Lipschitz constants, the bound for $\E\left[\overline{\norm{\*m_t}^2}\right]$ simplifies to $\mathcal{O}\qty(\frac{1}{\int_0^t\eta(s)\dd s}+\frac{\int_0^t\eta(s)^2\dd s}{\int_0^t\eta(s)\dd s})$. By Eqn.~\eqref{eq:adam_opt_bound_grad}, the bound for $\E\left[\overline{\norm{\nabla f(\*X_t)}^2}\right]$ is identical, indicating that $\E\left[\overline{\norm{\nabla f(\*X_t)}^2}\right]$ and $\E\left[\overline{\norm{\*m_t}^2}\right]$ share the same convergence rate in terms of $\eta(t)$.
\paragraph{Discussion}
Theorems~\ref{thm:covergence-SGD} and~\ref{thm:covergence-adam} collectively demonstrate a common relationship between the optimization hyper-parameters and the gradient norm from an SDE perspective. Both theorems suggest that the average squared norm of the gradient can be upper bounded by $\mathcal{O}(N^\gamma/\int_0^t\eta(s)\dd s)$, where $\gamma\geq 1$ is a constant. \blue{The magnitude of $f(\*x_t) -f(\*x^*)$ is positively correlated with the gradient norm in smooth cases}, where $\*x^*$ is a local minimum in the current region. Consequently, the relationship between the training loss and the hyper-parameters $\eta(t)$ can also be expressed as $c(N/\int_0^t\eta(s)\dd s)^\alpha$.

As in \ref{eq:opt-law2}, we introduce the constants $c$ and $\alpha$ to generalize the relationship between convergence rate and training loss. These parameters are essential because $\mathcal{O}(N^\gamma/\int_0^t\eta(s)\dd s)$ reflects the worst-case convergence rate, while the actual rate depends on the specific architectures of LLMs, data, and training techniques. Therefore, a data-driven approach is used to determine $c$ and $\alpha$, ensuring these parameters are more practically applicable, as worst-case bounds are often too conservative for LLM training strategies.

\subsection{Inspiration from Escaping Capacity}
\blue{Effective optimization hyper-parameters can expedite escaping sharp local minima during training, which is why large LRs are favored when training LLMs, followed by cooldown techniques. These strategies help LLMs efficiently move away from sharp local minima, improving the final training loss.}
Building on this idea, this subsection quantitatively investigates the escape capacity of~\eqref{eq:SGD-SDE} and~\eqref{eq:Adam-SDE}. We calculate the probability of these dynamics escaping a local region, providing a rigorous analysis of their effectiveness.

\subsubsection{Linearization Approximation of SDEs}
In the context of SDEs, the density of the solutions adheres to the Fokker-Planck-Kolmogorov (FPK) equation. For instance, in the case of~\eqref{eq:SGD-SDE}, the specific FPK form is 
\begin{equation}\label{eq:fpk}
\pdv{p(\*x,t)}{t} =  \sum_{i=1}^N \pdv{x_i}\qty[\eta(t)\nabla_i f(\*x) p(\*x,t)] + \frac{\eta_0}{2} \sum_{i,j}^N \pdv{}{x_i}{x_j}\qty[\*\Sigma(\*x)_{ij} \eta^2(t) p(\*x,t)],    
\end{equation}
where $p(\*x,t)$ denotes the density of $\*X_t$ at time $t$, $\*\Sigma(\*x)_{ij}$ is the $(i,j)$-th entry of the covariance matrix $\*\Sigma(\*x)$ in Eqn.~\eqref{eq:sigma}, $\nabla_i f(\*x)$ is the $i$-th component of the gradient $\nabla f(\*x)$, and the initial distribution is specified by the Dirac delta function $\delta(\*x-\*X_0)$.
The non-linearity and time-dependent nature of the FPK equation pose significant challenges for deriving analytical solutions. Consequently, it is difficult to directly estimate the exit times using the density function $p(\*x,t)$. 
A common approach is to approximate the operators in Eqn.~\eqref{eq:fpk}, to make the estimation of exit times more tractable.

Given the density $p(\*x,t)$ at time $t$, we aim to determine the density $p(\*x,t+\Delta_t)$ at time $t+\Delta_t$.  Note that the transition density over the interval $[t,t+\Delta_t]$ still satisfies Eqn.~\eqref{eq:fpk} with the initial condition $p(\*x,t)$. We then perform a local linear approximation of Eqn.~\eqref{eq:fpk}, for all $s \in   [t,t+\Delta t]$:
\begin{equation}\label{eq:fpk-approx}
\begin{aligned}
    \pdv{p(\*x,s)}{s} 	\approx & \sum_{i=1}^N \pdv{x_i}\qty{\eta(s) p(\*x,s) \nabla_i f(\bar{\*x}_t)  + \eta(s) p(\*x,s) \qty[\nabla^2 f(\bar{\*x}_t)\qty(\*x-\bar{\*x}_t)]_i } \\
    &+ \frac{\eta_0}{2} \sum_{i,j}^N \pdv{}{x_i}{x_j}\qty[\eta^2(s)p(\*x,s)\*\Sigma(\bar{\*x}_t)_{ij}  ],
\end{aligned}
\end{equation}
where the drift term $\nabla f(\*x)$ is approximated by its first order expansion around the mean value $\bar{\*x}_t\coloneqq \E[\*X_t]$ at time $t$, while the diffusion term $\*\Sigma(\*x)$ is approximated by its value at $\bar{\*x}_t$ without expansion. Higher-order terms are neglected. This local linearization implies that the FPK equation now describes a local Ornstein-Uhlenbeck process. Consequently, the solution $p(\*x,s)$ becomes a Gaussian distribution $ \forall s \in   [t,t+\Delta_t]$. By iteratively applying such local approximations and letting  $\Delta_t \to 0$, the dynamics of $\*X_t$ evolve into a Gaussian process derived from piecewise linear approximations. The mean $\bar{\*x}_t \coloneqq \E\qty[\*X_t]$ and variance $\*P_t\coloneqq \operatorname{Cov}(\*X_t)$ of this process are governed by the ODEs as follows:
\begin{equation}\label{eq:SDE-GA}
\left\{
\begin{aligned}
\dv{\bar{\*x}_t}{t}&=-\eta(t) \nabla f\qty(\bar{\*x}_t),\\
\dv{\*P_t}{t}&= -\eta(t)\*P_t \nabla^2 f(\bar{\*x}_t) -  \eta(t)\nabla^2 f(\bar{\*x}_t) \*P_t + \eta_0\eta^2(t) \*\Sigma(\bar{\*x}_t).
\end{aligned}
\right.
\tag{SDE-GA}
\end{equation}
Since $\{\*X_t\}$ is a Gaussian process, the solutions to the ODEs~\eqref{eq:SDE-GA} for the mean and variance fully determine the form of the density. This allows us to analytically determine $p(\*x,t)$, which is pivotal for estimating the escape probability from a local region. These approximations, also known in the literature as Gaussian approximations (GAs) ~\citep{solin2021scalable,archambeau2007gaussian}, are widely utilized in filtering theory~\citep{sarkka2013gaussian,sarkka2015posterior}. For a comprehensive overview, please refer to~\cite{sarkka2019applied}, Sec. 9.1.

\subsubsection{Escape Probability from Local Minima}
The GA results for both~\eqref{eq:SGD-SDE} and~\eqref{eq:Adam-SDE} are provided as follows:

\begin{myprop}[SGD-SDE Approximation]\label{prop:P-SGD}
Considering any local minimum $\*x^*$ of $f(\cdot)$, and setting the initial conditions of \eqref{eq:SGD-SDE} as $\*X_0 = \*x^*$ and $\*P_0 = 0$, the mean $\bar{\*x}_t \coloneqq \E\qty[\*X_t]$ and variance $\*P_t\coloneqq \operatorname{Cov}(\*X_t)$ of the \eqref{eq:SDE-GA} for \eqref{eq:SGD-SDE} satisfies the following ODE:
\begin{equation}
\label{eq:SGD-ODE-approx}
\left\{
\begin{aligned}
\dv{\bar{\*x}_t}{t}&=\*0,\\
\dv{\*P_t}{t}&= -\eta(t)\*P_t \*H -  \eta(t)\*H \*P_t + \eta_0\eta^2(t) \*\Sigma,
\end{aligned}
\right.
\end{equation}
where $\*H \coloneqq \nabla^2 f(\*x^*)$, and the definition of $\*\Sigma\coloneqq \*\Sigma(\*x^*)$ is provided in Eqn.~\eqref{eq:sigma}.
Furthermore, the solution to Eqn.~\eqref{eq:SGD-ODE-approx} has the following closed form:
\begin{equation}\label{eq:SGD-P_t}
\*P_t=\*A(t)\qty(\int_0^t\exp\qty(\*H\int_0^s\eta(\tau)\dd\tau)  \*\Sigma\exp\qty(\*H\int_0^s\eta(\tau)\dd\tau)\eta_0\eta^2(s)\dd s)\*A(t),    
\end{equation}
where 
\[
\begin{aligned}
\*A(t)&=\exp\qty(-\*H\int_0^t\eta(s)\dd s).
\end{aligned}
\]
\end{myprop}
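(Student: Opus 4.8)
The plan is to treat this proposition as a two-step specialization of the already-derived Gaussian-approximation system \eqref{eq:SDE-GA}: first identify the mean and covariance dynamics when the linearization point is a local minimum, and then integrate the resulting matrix ODE in closed form via an integrating-factor (conjugation) argument.

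\textbf{Step 1 (mean and reduction of the covariance ODE).} Since $\*x^*$ is a local minimum of the smooth $f$, we have $\nabla f(\*x^*)=\*0$. The initial condition $\*X_0=\*x^*$ gives $\bar{\*x}_0=\E[\*X_0]=\*x^*$, so the mean equation of \eqref{eq:SDE-GA}, namely $\dv{\bar{\*x}_t}{t}=-\eta(t)\nabla f(\bar{\*x}_t)$, is satisfied by the constant curve $\bar{\*x}_t\equiv\*x^*$; by Assumption~\ref{asm:L-smoothh} the field $\*x\mapsto-\eta(t)\nabla f(\*x)$ is Lipschitz in $\*x$ uniformly on bounded time intervals, so the standard ODE uniqueness theorem (Picard--Lindel\"of) forces $\bar{\*x}_t=\*x^*$ for all $t$, hence $\dv{\bar{\*x}_t}{t}=\*0$. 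Substituting $\bar{\*x}_t=\*x^*$ into the covariance equation of \eqref{eq:SDE-GA}, and using $\nabla^2 f(\bar{\*x}_t)=\nabla^2 f(\*x^*)=\*H$ and $\*\Sigma(\bar{\*x}_t)=\*\Sigma(\*x^*)=\*\Sigma$, yields exactly Eqn.~\eqref{eq:SGD-ODE-approx} together with the inherited initial condition $\*P_0=0$.

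\textbf{Step 2 (closed-form solution of the time-inhomogeneous Lyapunov equation).} Write $\tau(t)\coloneqq\int_0^t\eta(s)\dd s$ and $\*G(t)\coloneqq\exp(\*H\tau(t))$. Since $\*H$ is symmetric, $\*G(t)$ is well defined, symmetric, invertible, commutes with $\*H$, and satisfies $\*G'(t)=\eta(t)\*H\*G(t)=\eta(t)\*G(t)\*H$. Setting $\*Y_t\coloneqq\*G(t)\*P_t\*G(t)$ and differentiating, the four terms generated by $\*G'(t)\*P_t\*G(t)+\*G(t)\*P_t\*G'(t)$ cancel exactly against the $-\eta(t)(\*P_t\*H+\*H\*P_t)$ contribution from Eqn.~\eqref{eq:SGD-ODE-approx} (here one uses that $\*H$ commutes with $\*G(t)$, though not with $\*P_t$), leaving $\dv{\*Y_t}{t}=\eta_0\eta^2(t)\*G(t)\*\Sigma\*G(t)$. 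As $\*Y_0=\*G(0)\*P_0\*G(0)=0$, integration gives $\*Y_t=\int_0^t\exp(\*H\tau(s))\*\Sigma\exp(\*H\tau(s))\eta_0\eta^2(s)\dd s$, and conjugating back by $\*G(t)^{-1}=\exp(-\*H\tau(t))=\*A(t)$ produces the claimed expression for $\*P_t$. I would then double-check by differentiating the closed form directly to recover Eqn.~\eqref{eq:SGD-ODE-approx} and by noting $\*P_t\succeq0$, since each integrand is a symmetric conjugate of $\*\Sigma$.

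\textbf{Main obstacle.} There is no deep obstacle: the statement is a specialization of \eqref{eq:SDE-GA} followed by an elementary explicit integration. The two points that require care are (i) invoking uniqueness for the mean ODE so that $\bar{\*x}_t\equiv\*x^*$ rather than merely being \emph{a} solution, which is where Assumption~\ref{asm:L-smoothh} enters, and (ii) the noncommutative bookkeeping when differentiating $\*G(t)\*P_t\*G(t)$: one must match the two $-\eta(t)$ Lyapunov terms against the two halves of $\*G'(t)\*P_t\*G(t)+\*G(t)\*P_t\*G'(t)$ in the correct left/right order, exploiting that $\*H$ commutes with $\*G(t)$ but not with $\*P_t$ or $\*\Sigma$.
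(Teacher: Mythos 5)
Your proposal is correct, and Step 1 matches the paper's argument exactly (the paper simply notes $\nabla f(\*x^*)=\*0$ forces $\bar{\*x}_t\equiv\*x^*$, so $\nabla^2 f(\bar{\*x}_t)\equiv\*H$ and $\*\Sigma(\bar{\*x}_t)\equiv\*\Sigma$; your extra invocation of Picard--Lindel\"of to justify uniqueness of the constant solution is a harmless strengthening). For Step 2 you take a genuinely different route to the closed form. The paper vectorizes the Lyapunov ODE, setting $\*p(t)=\myvec(\*P_t)$ and $\*Q=\*I\otimes\*H+\*H\otimes\*I$, solves the resulting linear vector ODE $\dv{\*p}{t}=-\eta(t)\*Q\*p+\eta_0\eta^2(t)\myvec(\*\Sigma)$ with the standard integrating factor, and then converts back using the identities $\exp(\*I\otimes\*H+\*H\otimes\*I)=\exp(\*H)\otimes\exp(\*H)$ and $(\*A\otimes\*A)\myvec(\*C)=\myvec(\*A\*C\*A)$. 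You instead work directly at the matrix level, conjugating by $\*G(t)=\exp(\*H\int_0^t\eta)$ so that $\*Y_t=\*G(t)\*P_t\*G(t)$ satisfies $\dv{\*Y_t}{t}=\eta_0\eta^2(t)\*G(t)\*\Sigma\*G(t)$; your cancellation bookkeeping is right, since $\*G'(t)\*P_t\*G(t)=\eta(t)\*G(t)\*H\*P_t\*G(t)$ kills $-\eta(t)\*G(t)\*H\*P_t\*G(t)$ and symmetrically on the right. The two derivations are mathematically equivalent (vectorization is just the coordinate form of your conjugation), but yours avoids the Kronecker-product identities entirely and is arguably cleaner and more self-contained, while the paper's reduces everything to the textbook solution formula for linear vector ODEs; your closing sanity checks (differentiating the closed form back, positive semidefiniteness of $\*P_t$) are sound.
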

We can reformulate the \eqref{eq:Adam-SDE} in the similar form of \eqref{eq:SGD-SDE}. Let $\*Z_t:=[\*X_t;\*m_t;\*v_t]$ and $\widehat{\*W}_t$ be the $3N$-dimensional Brownian motion. Then, \eqref{eq:Adam-SDE} can be rewritten as 
\begin{equation}
\dd\*Z_t=-\eta(t)\underbrace{\qty[
\begin{array}{c}
    \*m_t\odot(\*v_t+\epsilon)^{-\frac{1}{2}} \\
    c_1\qty(\*m_t-\nabla f(\*X_t))  \\
    c_2\qty(\*v_t-{\rm diag}(\*\Sigma(\*X_t)))
\end{array}
]}_{=:\*F(\*Z_t)}\dd t+c_1'\eta(t)\qty[
\begin{array}{ccc}
    \*0&&  \\
    &\sigma(\*X_t)&  \\
     &&\*0 
\end{array}
]\dd\widehat{\*W}_t.
\label{eq:Adam-Z}
\end{equation}
Subsequently, the Jacobian of $\*F(\*Z)$ with respect to $\*Z$ is given by:
\begin{equation}
\partial_{\*Z}\*F(\*Z)=\qty[
\begin{array}{ccc}
    \*0 & {\rm Diag}\qty(\*v+\epsilon)^{-\frac{1}{2}} & -\frac{1}{2}{\rm Diag}\qty(\*m\odot\qty(\*v+\epsilon)^{-\frac{3}{2}})  \\
    -c_1\nabla^2f(\*X) & c_1\*I & \*0\\
     -c_2\partial_{\*X}{\rm diag}\qty(\*\Sigma(\*X)) & \*0 & c_2\*I
\end{array}
].
\label{eq:dF-Adam-SDE}
\end{equation}
Based on the reformulation, we also have a similar GA approximation for \eqref{eq:Adam-SDE}. 

\begin{myprop}[Adam-SDE Approximation]\label{prop:P-Adam}
Denote $\bar{\*z}_t:=\bb{E}[\*Z_t]$ and $\widehat{\*P}_t:={\rm Cov}(\*Z_t)$. Consider any local minimum $\*x^*$ of $f(\cdot)$ and set the initial conditions of~\eqref{eq:Adam-SDE} as $\*X_0 = \*x^*$, $\*m_0 = \*0$, $\*v_0 = {\rm diag}(\*\Sigma(\*x^*))$ and $\*P_0 = 0$. The Gaussian approximation for~\eqref{eq:Adam-SDE} satisfies
\begin{equation}
\label{eq:Adam-ODE-approx}
\left\{
\begin{aligned}
\dv{\bar{\*z}_t}{t}&=\*0,\\
\dv{\widehat{\*P}_t}{t}&= -\eta(t)\widehat{\*P}_t \widehat{\*H}^\top -  \eta(t)\widehat{\*H} \widehat{\*P}_t + (c_1')^2\eta(t)^2\widehat{\*\Sigma},
\end{aligned}
\right.
\end{equation}
where 
\[
\widehat{\*H}\coloneqq\qty[
\begin{array}{ccc}
    \*0 & {\rm Diag}\qty(\*\Sigma(\*x^*)+{\epsilon}\*I)^{-\frac{1}{2}} & \*0  \\
    -c_1\nabla^2f(\*x^*) & c_1\*I & \*0\\
     -c_2\partial_{\*X}{\rm diag}\qty(\*\Sigma(\*x^*)) & \*0 & c_2\*I
\end{array}
],\quad and \quad \widehat{\*\Sigma}:=\qty[
\begin{array}{ccc}
    \*0 & & \\
     & \*\Sigma(\*x^*) & \\
     & &\*0
\end{array}
].
\]
Furthermore, the solution to Eqn.~\eqref{eq:Adam-ODE-approx} has the following closed form:
\begin{equation}\label{eq:Adam-P_t}
\widehat{\*P}_t=\hat{\*A}(t)\left(\int_0^t\exp\qty(\widehat{\*H}\int_0^s\eta(\tau)\dd\tau)    \widehat{\*\Sigma}\exp\qty(\widehat{\*H}^\top\int_0^s\eta(\tau)\dd\tau)(c_1')^2\eta(s)^2\dd s\right)\hat{\*A}^\top(t),    
\end{equation}
where 
\[
\begin{aligned}
\hat{\*A}(t)&=\exp\qty(-\widehat{\*H}\int_0^t\eta(s)\dd s).
\end{aligned}
\]
\end{myprop}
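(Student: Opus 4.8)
The plan is to specialize the Gaussian-approximation construction behind \eqref{eq:SDE-GA} to the lifted $3N$-dimensional It\^o process \eqref{eq:Adam-Z}, and then integrate the resulting linear matrix ODE in closed form. For a generic SDE $\dd\*Z_t=\*b(\*Z_t,t)\dd t+\*G(\*Z_t,t)\dd\widehat{\*W}_t$, the Gaussian approximation propagates $\bar{\*z}_t=\E[\*Z_t]$ and $\widehat{\*P}_t=\operatorname{Cov}(\*Z_t)$ by $\dv{\bar{\*z}_t}{t}=\*b(\bar{\*z}_t,t)$ and $\dv{\widehat{\*P}_t}{t}=\partial_{\*Z}\*b(\bar{\*z}_t,t)\,\widehat{\*P}_t+\widehat{\*P}_t\,\partial_{\*Z}\*b(\bar{\*z}_t,t)^\top+\*G(\bar{\*z}_t,t)\*G(\bar{\*z}_t,t)^\top$ --- the same local-linearization argument used to derive \eqref{eq:SDE-GA} from \eqref{eq:SGD-SDE}, with the only difference that the Jacobian is no longer symmetric. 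Here $\*b(\*Z,t)=-\eta(t)\*F(\*Z)$ with $\*F$ as in \eqref{eq:Adam-Z}, and $\*G(\*Z,t)$ is $c_1'\eta(t)$ times the block matrix carrying $\sigma(\*X)$ in the middle block and zeros elsewhere, so $\*G\*G^\top$ is $(c_1')^2\eta^2(t)$ times the block matrix carrying $\sigma\sigma^\top=\*\Sigma(\*X)$ in the middle block.

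First I would verify that the prescribed initialization is an equilibrium of the mean flow. Evaluating the three blocks of $\*F$ at $\*X_0=\*x^*$, $\*m_0=\*0$, $\*v_0=\mathrm{diag}(\*\Sigma(\*x^*))$: the first block $\*m_0\odot(\*v_0+\epsilon)^{-1/2}$ vanishes since $\*m_0=\*0$; the second block $c_1(\*m_0-\nabla f(\*x^*))$ vanishes since $\*x^*$ is a local minimum, so $\nabla f(\*x^*)=\*0$; the third block $c_2(\*v_0-\mathrm{diag}(\*\Sigma(\*x^*)))$ vanishes by the choice of $\*v_0$. Hence $\*F(\bar{\*z}_0)=\*0$, so by uniqueness $\bar{\*z}_t\equiv\bar{\*z}_0$, which is the first line of \eqref{eq:Adam-ODE-approx}. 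Because $\bar{\*z}_t$ is frozen, the matrix factor $\partial_{\*Z}\*F(\bar{\*z}_t)$ in the covariance equation is time-independent; substituting $(\*x^*,\*0,\mathrm{diag}(\*\Sigma(\*x^*)))$ into the Jacobian \eqref{eq:dF-Adam-SDE} --- where the $(1,3)$ block drops out precisely because $\*m_0=\*0$ --- yields exactly the matrix $\widehat{\*H}$ in the statement, and $\*G\*G^\top$ there equals $(c_1')^2\eta^2(t)\widehat{\*\Sigma}$. This gives the covariance ODE in \eqref{eq:Adam-ODE-approx}.

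Next I would integrate that ODE using the state-transition factor $\hat{\*A}(t)=\exp(-\widehat{\*H}\int_0^t\eta(s)\dd s)$, which solves $\dot{\hat{\*A}}=-\eta(t)\widehat{\*H}\hat{\*A}$ and commutes with $\widehat{\*H}$. Setting $\*Q_t:=\hat{\*A}(t)^{-1}\widehat{\*P}_t\hat{\*A}(t)^{-\top}$ and differentiating, while using $(\hat{\*A}^{-1})'=\eta(t)\widehat{\*H}\hat{\*A}^{-1}=\eta(t)\hat{\*A}^{-1}\widehat{\*H}$ together with its transpose, the two drift contributions $-\eta(t)\widehat{\*H}\widehat{\*P}_t$ and $-\eta(t)\widehat{\*P}_t\widehat{\*H}^\top$ are cancelled by the derivatives of the conjugating factors, leaving $\dot{\*Q}_t=(c_1')^2\eta^2(t)\,\hat{\*A}(t)^{-1}\widehat{\*\Sigma}\,\hat{\*A}(t)^{-\top}$. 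Integrating from $t=0$, where $\*Q_0=\widehat{\*P}_0=\*0$ since $\hat{\*A}(0)=\*I$ and $\*P_0=0$, gives $\*Q_t=\int_0^t(c_1')^2\eta^2(s)\hat{\*A}(s)^{-1}\widehat{\*\Sigma}\hat{\*A}(s)^{-\top}\dd s$; conjugating back by $\hat{\*A}(t)$ on both sides and noting $\hat{\*A}(s)^{-1}=\exp(\widehat{\*H}\int_0^s\eta(\tau)\dd\tau)$ recovers the closed form \eqref{eq:Adam-P_t}.

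I expect the main obstacle to be the bookkeeping in this last step. Because $\widehat{\*H}$ is genuinely non-symmetric --- its block-triangular structure inherited from the lifted $\*m_t$ and $\*v_t$ dynamics --- one must carefully keep $\widehat{\*H}$ and $\widehat{\*H}^\top$ on the correct sides of $\widehat{\*P}_t$ and justify that $\hat{\*A}(t)$, being a scalar-argument power series in $\widehat{\*H}$, commutes with $\widehat{\*H}$ so that the cross terms indeed vanish; this subtlety is absent in Proposition~\ref{prop:P-SGD}, where $\*H=\nabla^2 f(\*x^*)$ is symmetric. A secondary point worth stating is the status of the Gaussian-approximation ODEs themselves: as in \eqref{eq:SDE-GA}, they arise from iterating local Ornstein--Uhlenbeck linearizations of the Fokker--Planck equation and serve as the governing approximation rather than an exact identity, and that derivation transfers verbatim from \eqref{eq:SGD-SDE} to \eqref{eq:Adam-Z} once $\nabla^2 f$ is replaced by $\partial_{\*Z}\*F$.
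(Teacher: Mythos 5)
Your proposal is correct. The first half — checking that $(\*x^*,\*0,{\rm diag}(\*\Sigma(\*x^*)))$ is a zero of $\*F$ so that $\bar{\*z}_t$ is frozen, and then reading off $\widehat{\*H}=\partial_{\*Z}\*F(\*Z_0)$ from \eqref{eq:dF-Adam-SDE} (with the $(1,3)$ block vanishing because $\*m_0=\*0$) and $\*G\*G^\top=(c_1')^2\eta^2\widehat{\*\Sigma}$ — is exactly the paper's argument, which likewise invokes the general Gaussian-approximation ODEs for mean and covariance with the non-symmetric Jacobian in place of $\nabla^2 f$.

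The only divergence is in how the linear Lyapunov ODE is integrated. The paper omits this step for Adam, referring back to Proposition~\ref{prop:P-SGD}, where it vectorizes $\widehat{\*P}_t$ and solves the resulting linear system via the Kronecker-sum exponential $\exp(\*I\otimes\*H+\*H\otimes\*I)=\exp(\*H)\otimes\exp(\*H)$ together with the identity $(\*A\otimes\*B)\,\myvec(\*C)=\myvec(\*B\*C\*A^\top)$. You instead conjugate by the state-transition factor $\hat{\*A}(t)^{-1}(\cdot)\hat{\*A}(t)^{-\top}$ and observe that the drift terms cancel, which is an equally standard variation-of-constants argument; it is arguably cleaner here because it sidesteps the transpose bookkeeping in the vec identity that the non-symmetric $\widehat{\*H}$ would otherwise require. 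Your explicit justification that $\hat{\*A}(t)$ commutes with $\widehat{\*H}$ (the exponent is always a scalar multiple of the fixed matrix $\widehat{\*H}$, so the time-ordered exponential reduces to an ordinary one) is the right point to flag and is implicit in the paper. Both routes yield \eqref{eq:Adam-P_t}.
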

Propositions~\ref{prop:P-SGD}~and~\ref{prop:P-Adam} indicate that, under specified initial conditions, the approximated solutions to both \eqref{eq:SGD-SDE} and \eqref{eq:Adam-SDE} follow Gaussian distribution with mean $\*x^*$. For ease of notation, we will continue to denote this Gaussian approximated solution as $\*X_t$ throughout this paper. By leveraging the anti-concentration inequality~\citep{carbery2001distributional}, we can effectively calculate the probability that $\*X_T$ remains within the local vicinity of $\*x^*$ after a time period $T$. A smaller probability suggests a greater likelihood of escape from this region, indicating better escape capacity.



\begin{mytheorem}[SGD Escape Probability]\label{thm:SGD-prob}
With the same initial conditions specified in Propositions~\ref{prop:P-SGD}, the Gaussian approximated solution for \eqref{eq:SGD-SDE} satisfies
\[
\bb P[\norm{\*X_t-\*x^*}^2 \geq \varepsilon] \geq 1 -\sqrt{\frac{ e \varepsilon}{\Tr(\*P_t)}},
\]
where $\*P_t$ is the covariance matrix of $\*X_t$ defined in Eqn.~\eqref{eq:SGD-P_t}. Suppose Assumption \ref{asm:zero-mean} holds, and considering the learning rate conditions $\eta(0)=\eta_{\max}$ and $\eta(T)=0$, then we have
\[
\bb P[\norm{\*X_T-\*x^*}^2 \leq \varepsilon ] = \order{\qty(\frac{\varepsilon  }{\eta^4_{\max}\Tr(\*\Sigma_g)  }\int_0^T \eta^\prime(s)^2 \dd s)^{1/2}}.
\]
\end{mytheorem}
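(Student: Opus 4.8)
The plan is to get the first displayed inequality from an anti-concentration bound for Gaussian quadratic forms, and the second from that same inequality together with an explicit lower bound on $\Tr(\*P_T)$ read off the closed form in Eqn.~\eqref{eq:SGD-P_t}.

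For the first inequality: by Proposition~\ref{prop:P-SGD} the Gaussian-approximated solution $\*X_t$ is $\mathcal{N}(\*x^*,\*P_t)$, so $\norm{\*X_t-\*x^*}^2=\sum_i\lambda_i\xi_i^2$ with $\xi_i$ i.i.d.\ standard normal and $\lambda_i\ge 0$ the eigenvalues of $\*P_t$; in particular $\E[\norm{\*X_t-\*x^*}^2]=\Tr(\*P_t)$. I would then apply the Carbery--Wright anti-concentration inequality~\citep{carbery2001distributional} to this degree-two polynomial, or give the self-contained Chernoff estimate $\bb P[\norm{\*X_t-\*x^*}^2\le\varepsilon]\le e^{\lambda\varepsilon}\prod_i(1+2\lambda\lambda_i)^{-1/2}\le e^{\lambda\varepsilon}(1+2\lambda\Tr(\*P_t))^{-1/2}$ and optimize at $\lambda=\tfrac1{2\varepsilon}-\tfrac1{2\Tr(\*P_t)}$ (legitimate once $\varepsilon\le\Tr(\*P_t)$), which gives exactly $\sqrt{e\varepsilon/\Tr(\*P_t)}$; taking complements yields the stated bound.

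For the second inequality I need $\Tr(\*P_T)=\Omega\big(\eta_{\max}^4\Tr(\*\Sigma_g)/\int_0^T\eta'(s)^2\dd s\big)$. Starting from Eqn.~\eqref{eq:SGD-P_t} and cyclicity of the trace, $\Tr(\*P_T)=\eta_0\int_0^T\Tr\!\big(e^{-2\*H\int_s^T\eta(\tau)\dd\tau}\*\Sigma\big)\eta^2(s)\dd s$ with $\*H=\nabla^2 f(\*x^*)\succeq 0$; since $e^{-2\*H\int_s^T\eta}$ is PSD with smallest eigenvalue $e^{-2\lambda_{\max}(\*H)\int_s^T\eta}\ge e^{-2\lambda_{\max}(\*H)\int_0^T\eta}$, the inequality $\Tr(AB)\ge\lambda_{\min}(A)\Tr(B)$ for PSD $A,B$ gives $\Tr(\*P_T)\ge \eta_0 e^{-2\lambda_{\max}(\*H)\int_0^T\eta}\Tr(\*\Sigma)\int_0^T\eta^2(s)\dd s$ (equivalently, integrate the scalar ODE for $\Tr(\*P_t)$ from Proposition~\ref{prop:P-SGD} using $\Tr(\*H\*P_t)\le\lambda_{\max}(\*H)\Tr(\*P_t)$). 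Next, the boundary data $\eta(0)=\eta_{\max}$, $\eta(T)=0$ with Cauchy--Schwarz give $\tfrac12\eta_{\max}^2=\big|\int_0^T\eta\eta'\big|\le\big(\int_0^T\eta^2\big)^{1/2}\big(\int_0^T(\eta')^2\big)^{1/2}$, i.e.\ $\int_0^T\eta^2\ge\eta_{\max}^4/(4\int_0^T\eta'(s)^2\dd s)$; and Proposition~\ref{prop:sigma_bound} (with $t=\tfrac12\Tr(\*\Sigma_g)$, or just in expectation) gives $\Tr(\*\Sigma(\*x^*))\gtrsim\Tr(\*\Sigma_g)$ with probability $1-2e^{-\Omega(D)}$ over the data. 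Chaining the three estimates produces the required lower bound on $\Tr(\*P_T)$ — the hidden constant absorbing $\eta_0$ and $e^{-2\lambda_{\max}(\*H)\int_0^T\eta}$ — and substituting into the first inequality finishes the proof.

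The main obstacle is the contraction factor $e^{-2\lambda_{\max}(\*H)\int_0^T\eta}$: it is a harmless $\mathcal{O}(1)$ constant only when $\lambda_{\max}(\*H)\int_0^T\eta=\mathcal{O}(1)$, which encodes the tension between injecting enough diffusion and the restoring pull of the local minimum. If that product is not controlled a priori, I would instead restrict the $s$-integral in $\Tr(\*P_T)$ to a sub-window near $T$ where $\int_s^T\eta$ is small, at the cost of a smaller but still $\Theta(\int_0^T\eta^2)$-order lower bound. A minor secondary point is fixing the quantifier over the data (expectation vs.\ high probability), which Proposition~\ref{prop:sigma_bound} settles either way.
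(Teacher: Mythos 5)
Your proposal is correct and follows essentially the same route as the paper: the first bound is the paper's Lemma~\ref{le:anti} (same Chernoff computation, same optimal $\lambda$), and the second chains the same three estimates — a spectral lower bound on $\Tr(\*P_T)$ from Eqn.~\eqref{eq:SGD-P_t}, the Cauchy--Schwarz/integration-by-parts identity $\int_0^T\eta\eta'=-\eta_{\max}^2/2$, and Proposition~\ref{prop:sigma_bound} via conditioning on the high-probability event. The only (cosmetic) difference is that you pull the contraction factor out as the worst-case constant $e^{-2\lambda_{\max}(\*H)\int_0^T\eta}$, whereas the paper keeps it inside the integral and absorbs it into a constant $C:=\inf_{t}\int_0^t e^{-2\lambda_{\max}\eta_{\max}(t-s)}\eta(s)^2\dd s/\int_0^t\eta(s)^2\dd s$; both treatments hide the same $\lambda_{\max}(\*H)$-dependence in the big-O, a caveat you flag more explicitly than the paper does.
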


\begin{mytheorem}[Adam Escape Probability]\label{thm:Adam-prob}
With the same initial conditions specified in Propositions~\ref{prop:P-Adam}, the Gaussian approximated solution for \eqref{eq:Adam-SDE} satisfies
\[
\bb P[\norm{\*X_t-\*x^*}^2 \geq \varepsilon] \geq 1 -\sqrt{\frac{ e \varepsilon}{\Tr(\hat{\*P}_t)}},
\]
where $\widehat{\*P}_t$ is the covariance matrix of $\*Z_t$ defined in Eqn.~\eqref{eq:Adam-P_t}. Suppose Assumption \ref{asm:zero-mean} holds, and considering the learning rate conditions $\eta(0)=\eta_{\max}$ and $\eta(T)=0$, then we have
\[
\bb P[\norm{\*X_T-\*x^*}^2 \leq \varepsilon ] = \order{\qty(\frac{\varepsilon  }{\eta^4_{\max}\Tr(\*\Sigma_g) }\int_0^T \eta^\prime(s)^2 \dd s)^{1/2}}.
\]
\end{mytheorem}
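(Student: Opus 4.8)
\textbf{Proof proposal for Theorem~\ref{thm:Adam-prob}.}

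The plan is to mimic the structure of the proof of Theorem~\ref{thm:SGD-prob}, but carried out on the lifted system $\*Z_t = [\*X_t;\*m_t;\*v_t]$ whose Gaussian approximation is furnished by Proposition~\ref{prop:P-Adam}. The first half of the statement --- the anti-concentration bound $\bb P[\norm{\*X_t-\*x^*}^2 \geq \varepsilon] \geq 1 - \sqrt{e\varepsilon/\Tr(\widehat{\*P}_t)}$ --- follows immediately: since $\*X_t$ is (the $\*X$-block of) a Gaussian vector with mean $\*x^*$, $\norm{\*X_t-\*x^*}^2$ is a quadratic form in a centered Gaussian, and the Carbery--Wright anti-concentration inequality~\citep{carbery2001distributional} gives $\bb P[\norm{\*X_t-\*x^*}^2 \leq \varepsilon] \leq \sqrt{e\varepsilon/\E\norm{\*X_t-\*x^*}^2}$, with $\E\norm{\*X_t-\*x^*}^2 = \Tr(\*P_t^{XX}) \geq$ the relevant sub-block trace; here I would be slightly careful about whether $\Tr(\widehat{\*P}_t)$ or only the $\*X$-block trace appears, and state it the same way the SGD proof does. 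The substantive part is the second claim: an \emph{upper} bound on $\bb P[\norm{\*X_T-\*x^*}^2 \leq \varepsilon]$ of order $(\varepsilon \eta_{\max}^{-4}\Tr(\*\Sigma_g)^{-1}\int_0^T \eta'(s)^2\,ds)^{1/2}$, which requires a matching \emph{lower} bound on $\Tr(\widehat{\*P}_T)$ (or its $\*X$-block) and then one more application of anti-concentration in the other direction, or rather the elementary bound $\bb P[\norm{\*X_T-\*x^*}^2\le\varepsilon]\le \varepsilon/\E\norm{\*X_T-\*x^*}^2$ up to constants.

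So the core of the argument is a lower bound of the form $\Tr(\widehat{\*P}_T^{XX}) \gtrsim \eta_{\max}^4 \Tr(\*\Sigma_g) / \int_0^T \eta'(s)^2\,ds$ under the boundary conditions $\eta(0)=\eta_{\max}$, $\eta(T)=0$. I would start from the closed form \eqref{eq:Adam-P_t}, $\widehat{\*P}_T = \hat{\*A}(T)\big(\int_0^T \exp(\widehat{\*H}\int_0^s\eta)\,\widehat{\*\Sigma}\,\exp(\widehat{\*H}^\top\int_0^s\eta)(c_1')^2\eta(s)^2\,ds\big)\hat{\*A}^\top(T)$, and extract the $\*X\*X$-block. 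The key structural observation is that $\widehat{\*\Sigma}$ has nonzero content only in the $\*m\*m$-block ($=\*\Sigma(\*x^*)$), so the contribution to the $\*X$-block comes entirely through the off-diagonal propagation in $\exp(\widehat{\*H}\,\tau)$ — specifically the $(\*X,\*m)$-entry of that matrix exponential. Computing (or lower-bounding) that entry: near $\tau = \int_0^s\eta$ small, the $(1,2)$ block of $\exp(\widehat{\*H}\tau)$ is $\approx \tau\,\mathrm{Diag}(\*\Sigma(\*x^*)+\epsilon\*I)^{-1/2}$ plus higher order; more importantly, because $\eta(T)=0$ and the warmup/cooldown structure concentrates $\int_0^s\eta$-growth, I want to relate the aggregated quantity $\int_0^T (\cdot)\eta(s)^2\,ds$ to $\int_0^T \eta'(s)^2\,ds$ via integration by parts, exactly as must have been done in the SGD proof. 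Concretely: since $\eta(0)=\eta_{\max}$ and $\eta(T)=0$, one has $\eta_{\max}^2 = -\int_0^T (\eta^2(s))'\,ds = -2\int_0^T \eta(s)\eta'(s)\,ds \le 2\big(\int_0^T \eta^2\big)^{1/2}\big(\int_0^T (\eta')^2\big)^{1/2}$ by Cauchy--Schwarz, hence $\int_0^T \eta^2(s)\,ds \ge \eta_{\max}^4 / (4\int_0^T (\eta')^2\,ds)$; this is the inequality that produces the $\eta_{\max}^4/\int(\eta')^2$ shape. Then I need that the matrix weighting in \eqref{eq:Adam-P_t} does not kill this: i.e. the $\*X$-block of $\hat{\*A}(T)\exp(\widehat{\*H}\int_0^s\eta)\,\widehat{\*\Sigma}\,(\cdots)$ has trace $\gtrsim \Tr(\*\Sigma_g)\cdot\eta(s)^2$ up to $N$-independent and $s$-independent constants depending on $c_1,c_1',c_2,\epsilon$ and spectral bounds on $\nabla^2 f(\*x^*)$ and $\*\Sigma(\*x^*)$ (using Assumption~\ref{asm:zero-mean} so that $\Tr(\*\Sigma(\*x^*))$ relates to $\Tr(\*\Sigma_g)$, cf.\ Proposition~\ref{prop:sigma_bound}). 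Collecting: $\Tr(\widehat{\*P}_T^{XX}) \gtrsim \Tr(\*\Sigma_g)\int_0^T\eta^2 \gtrsim \Tr(\*\Sigma_g)\,\eta_{\max}^4/\int_0^T(\eta')^2$, and plugging into the anti-concentration (or Markov) bound gives $\bb P[\norm{\*X_T-\*x^*}^2\le\varepsilon] = \order{(\varepsilon/(\eta_{\max}^4\Tr(\*\Sigma_g))\cdot\int_0^T(\eta')^2)^{1/2}}$.

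\textbf{Main obstacle.} The routine parts (Carbery--Wright, the Cauchy--Schwarz integration-by-parts trick) are shared with the SGD case; the genuinely Adam-specific difficulty is controlling the $\*X$-block of the matrix exponential $\exp(\widehat{\*H}\tau)$ and showing that the noise injected into the $\*m$-coordinate actually propagates into the $\*X$-coordinate with a size that is bounded \emph{below} — the block $\widehat{\*H}$ in Proposition~\ref{prop:P-Adam} is non-symmetric and non-normal (it has the structure of a damped second-order/oscillator system coupling $\*X$ and $\*m$ through $\nabla^2 f(\*x^*)$ and the preconditioner), so its exponential is not diagonalizable by an orthogonal change of basis and one cannot just read off eigenvalues. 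I expect one has to either (i) work in the regime where $\int_0^s\eta$ stays bounded over $[0,T]$ (reasonable since $\eta$ is small after rescaling by $\eta_0$, though here $\eta$ is the normalized schedule, so this needs a word) and Taylor-expand $\exp(\widehat{\*H}\tau) = \*I + \widehat{\*H}\tau + O(\tau^2)$, keeping the leading $(\*X,\*m)$ term $\tau\,\mathrm{Diag}(\*\Sigma(\*x^*)+\epsilon\*I)^{-1/2}$ and absorbing the rest, or (ii) invoke a lower bound on $\exp(\widehat{\*H}\tau)\widehat{\*\Sigma}\exp(\widehat{\*H}^\top\tau) \succeq c\,\widehat{\*\Sigma}$ for $\tau$ in a bounded range via continuity/compactness, then project. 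Approach (i) is cleaner but needs the smallness of $\int_0^s\eta(\tau)\,d\tau$ to be justified (it would follow if the normalized schedule has bounded integral, which for e.g. a triangular schedule on $[0,S]$ it does not as $S\to\infty$ — so likely the statement is really for fixed $T$ with $\eta_{\max}$ small, matching "considering the learning rate conditions $\eta(0)=\eta_{\max}$ and $\eta(T)=0$"). I would follow whatever scaling convention the SGD proof adopts and lift it; the extra bookkeeping for the $3N$-dimensional block system and the preconditioner $\mathrm{Diag}(\*\Sigma(\*x^*)+\epsilon\*I)^{-1/2}$ is the price of going from SGD to Adam.
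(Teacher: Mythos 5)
Your plan is essentially the paper's plan: the paper's entire proof of Theorem~\ref{thm:Adam-prob} is the single sentence ``the proof follows similarly to the proof of Theorem~\ref{thm:SGD-prob}, and hence we omit the details,'' and your proposal reproduces exactly the two ingredients of that SGD proof --- the anti-concentration bound of Lemma~\ref{le:anti} applied to the Gaussian approximation, and the integration-by-parts/Cauchy--Schwarz identity $\int_0^T\eta(s)\eta'(s)\,\dd s=-\eta_{\max}^2/2$ yielding $\int_0^T\eta^2\geq \eta_{\max}^4/\bigl(4\int_0^T(\eta')^2\bigr)$ --- together with a lower bound on the trace of the covariance. What is worth saying is that the obstacles you flag in your final paragraph are genuine and are \emph{not} resolved by the paper either: the SGD argument lower-bounds $\Tr(\*P_t)$ by orthogonally diagonalizing the symmetric $\*H=\nabla^2 f(\*x^*)$, whereas $\widehat{\*H}$ in Proposition~\ref{prop:P-Adam} is non-symmetric and non-normal, $\widehat{\*\Sigma}$ injects noise only into the $\*m$-block so the $\*X$-block variance arises solely through off-diagonal propagation in $\exp(\widehat{\*H}\tau)$, and the theorem's first display uses $\Tr(\widehat{\*P}_t)$ (the full $3N\times 3N$ trace) where the anti-concentration lemma applied to $\*X_t$ alone would give the smaller $\*X$-block trace, making the stated bound formally stronger than what Lemma~\ref{le:anti} delivers. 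So your proposal is faithful to the paper's intended route, and your ``main obstacle'' paragraph correctly identifies precisely the steps at which ``follows similarly'' is not actually automatic; completing the proof would require carrying out something like your option (i) or (ii) for the $(\*X,\*m)$-block of the matrix exponential, which neither you nor the paper has done.
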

Theorems~\ref{thm:SGD-prob} and~\ref{thm:Adam-prob} provide lower bounds on the probability that $\*X_t$ is located outside an $\varepsilon$-radius ball centered in $\*x^*$ after time $t$. A larger lower bound indicates a stronger escaping capacity of $\*X_t$. Equivalently, a smaller probability that $\*X_T$ remains within the local region of $\*x^*$ after the LR has cooled down (at time $T$) suggests a greater capacity of $\*X_T$ to escape suboptimal local minima. 

Based on the unified upper bounds provided by Theorems~\ref{thm:SGD-prob} and~\ref{thm:Adam-prob}, we propose incorporating $\mathcal{O} ((\int \eta^\prime(s)^2 \dd s)^\alpha)$ into \abbr{}. \blue{Together with the convergence analysis, this yields two complementary perspectives on how the LR schedule affects the final training loss. The convergence terms capture optimization progress toward stationarity, while the escape terms characterize the schedule's influence on basin selection. These two aspects jointly motivate the feature construction in \ref{eq:opt-law2}.}

\blue{\paragraph{On the escape analysis in practice.} During the high-learning-rate phase, stochastic noise prevents the iterate from settling at any local minimum. However, as the LR decays to zero the noise vanishes and the iterate converges to a basin. The escape analysis (Theorems~\ref{thm:SGD-prob} and~\ref{thm:Adam-prob}) characterizes which basin the iterate ultimately falls into. A more gradual cooldown sustains noise-driven diffusion longer, allowing migration toward better basins before the iterate is frozen, while an abrupt cooldown locks the iterate into its current region. The quantity $\int{\eta'(s)}^2\,ds$ summarizes this effect from the covariance dynamics, measuring the abruptness of the cooldown and hence the degree to which the schedule constrains basin selection. Progressive ablation (Appendix Sec.~\ref{sec:ablation}) confirms that escape features carry independent predictive signal, improving Top-2 from 66\% to 89\% on top of convergence-only features.}

\blue{\paragraph{On the $\beta$--$\eta$ coupling in Adam-SDE.} The coupling between $\beta_{1,k},\beta_{2,k}$ and $\eta_k$ is a mathematical necessity for a non-degenerate SDE: with fixed $\beta$, the momentum relaxes infinitely fast relative to the parameter updates as $\eta\to 0$, reducing the SDE to a deterministic normalized gradient flow in which the stochastic dynamics and the rich schedule-dependent structure (such as the escape-related terms) are lost~\citep{malladi2022sdes}. The single-time-scale coupling ensures that momentum and parameter dynamics evolve on the same effective time scale. Regardless of whether $\beta$ is fixed (discrete analysis) or coupled with $\eta$ (SDE analysis), $\beta$ affects only the constants in the convergence bound, not the functional dependence on the schedule: discrete fixed-$\beta$ analyses~\citep{xie2024adan,zhang2022adam,chen2018on} yield bounds controlled by $\sum_k\eta_k$, confirming that the convergence feature form $1/\int_0^t\eta(s)\,ds$ is preserved. The SDE framework additionally provides escape-related insights involving $\int_0^T{\eta'(s)}^2\,ds$ that are inaccessible from discrete analysis alone. Since our feature construction only extracts these functional forms and fits all coefficients from data, this is sufficient for our purpose. Bridging the gap between the discrete fixed-$\beta$ framework and the continuous coupled-$\beta$ SDE framework is an open problem beyond the scope of this work.}

\section{\blue{Conclusion}}

\blue{We introduced \abbr{}, a framework that predicts the final training loss of LLMs as a function of the learning-rate schedule, model size, and data size.}

\blue{The framework decomposes the schedule's effect into two SDE-motivated mechanisms. The convergence features, derived from the integrated learning rate $\int\eta$, capture how fast the optimizer approaches stationarity. The escape features, derived from $\int{\eta'_t}^2$, capture the schedule's influence on which basin the optimizer ultimately settles into. This decomposition yields 15 interpretable features whose coefficients are fitted from small-scale experiments, bridging theoretical analysis and data-driven prediction.}

\blue{On the held-out benchmark, the generalized \abbr{} achieves Top-2 hit rate of 94\% and Spearman correlation of 0.84, outperforming all five baselines. The divergence criterion attains F1${=}0.92$. In extrapolation settings, the law correctly identifies the best schedule family in all five out-of-family groups, and the history-aware extension produces correct rankings across continual-training and fine-tuning pipelines.}

\blue{The current empirical validation covers MoE models with up to 4B trainable parameters and piecewise-linear, cosine, and constant-plus-linear schedules. The SDE-based feature construction does not assume a particular architecture or schedule family, and extending the validation to broader settings is a natural next step. A discussion of scope and limitations is provided in Appendix~\ref{sec:limitations}.}



\appendix

\section{\blue{Experimental Setup and Supplementary Results}}\label{sec:exp-supplement}
\blue{This appendix provides supplementary details for the experiments in the main text. We first summarize the shared model and training settings (Sec.~\ref{sec:model-config}), then define the small-scale training grid and evaluation protocol used for held-out benchmarking (Sec.~\ref{sec:source-pool}). Next, we document the exponent selection procedure and post-selection fitted coefficients (Sec.~\ref{sec:alpha-protocol}). We then present the feature ablation and sensitivity analysis (Sec.~\ref{sec:ablation}), followed by the divergence prediction benchmark (Sec.~\ref{sec:heldout-benchmark}) and schedule-family generalization results (Sec.~\ref{sec:transfer-results}). 

The theory supplement appears in Appendix.~\ref{sec:theory-supplement}, and a discussion of scope and limitations in Appendix.~\ref{sec:limitations}.}

\subsection{\blue{Model Configurations and Shared Training Settings}}\label{sec:model-config}
Table~\ref{tab:moe-config} summarizes the key parameters of the MoE model employed in our experiments. These include architectural parameters such as the number of layers, hidden size, and the number of attention heads, along with parameter sizes: total trainable parameters, activated parameters during the forward pass, and the total parameters for the eight experts. Additionally, common training parameters are listed, including token length, global batch size, optimizer (AdamW~\citep{loshchilov2017decoupled}), weight decay, minimal learning rate, and gradient clipping threshold. These parameters adhere to standard LLM training practices without any special adjustments.
\blue{We set the minimal learning rate to 0.0 to maximize comparability across schedules and to avoid an artificial performance floor from a non-zero residual learning rate.}

\begin{table}[t]
\centering
\vspace{2pt}
\small
\setlength{\tabcolsep}{5pt}
\renewcommand{\arraystretch}{1.2}
\caption{Key parameters of the MoE models used in experiments.
\emph{Top:} architecture parameters (per model size).
\emph{Bottom:} training settings shared across all models.}
\label{tab:moe-config}
\begin{tabular}{@{}l ccc ccc@{}}
\toprule
& \multicolumn{3}{c}{\textbf{Architecture}} & \multicolumn{3}{c}{\textbf{Parameters}} \\
\cmidrule(lr){2-4}\cmidrule(l){5-7}
\textbf{Model Size}
  & \textbf{Layers} & \textbf{Hidden} & \textbf{Attn.\ Heads}
  & \textbf{Total} & \textbf{Activated} & \textbf{Expert} \\
\midrule
$8{\times}0.001$B & 4  & 128  &  4 & 0.023B & 0.019B & 0.0063B \\
$8{\times}0.02$B  & 12 & 384  & 12 & 0.17B  & 0.093B & 0.113B  \\
$8{\times}0.1$B   & 12 & 768  & 12 & 0.58B  & 0.27B  & 0.45B   \\
$8{\times}0.3$B   & 24 & 1024 & 16 & 1.90B  & 0.75B  & 1.66B   \\
$8{\times}0.6$B   & 24 & 1536 & 16 & 4.05B  & 1.56B  & 3.62B   \\
\midrule
\multicolumn{7}{@{}l}{\textit{Shared training settings (all models):}} \\[2pt]
\multicolumn{7}{@{}l}{
  Token length:\;2048\quad
  Batch size:\;2048\quad
  Optimizer:\;AdamW
}
\\[2pt]
\multicolumn{7}{@{}l}{
  Weight decay:\;0.1\quad
  Min.\ LR:\;0.0\quad
  Grad.\ clip:\;1.0
} \\
\bottomrule
\end{tabular}
\end{table}

\subsection{\blue{Fitting Setup and Evaluation Protocol}}\label{sec:source-pool}
\blue{This subsection defines the training grid and evaluation protocol used for the held-out benchmarks in the main text.}

\blue{\paragraph{Training grid.} The training grid corresponds to the loss grids in Fig.~\ref{fig:MoE-Loss}. We train MoE models at four model sizes ($8{\times}0.001$B, $8{\times}0.02$B, $8{\times}0.1$B, $8{\times}0.3$B) on 10B, 20B, or 30B tokens, yielding seven (model size, token budget) blocks listed in Table~\ref{tab:data-inventory}. Within each block, we vary peak LR from $10^{-3}$ to $1.5{\times}10^{-2}$ and warmup steps from 128 to 6000, producing 35 or 40 configurations per block and 260 configurations in total. All runs use a linear warmup followed by linear cooldown schedule. A run is labeled divergent if its final loss exceeds 6 (the typical plateau for failed runs), yielding 234 convergent and 26 divergent configurations. The convergent configurations are used for regression evaluation, and all 260 for divergence prediction.}

\begin{table}[t]
\caption{\blue{Data inventory for the small-scale training grid. Each row is a (model size, token budget) block with 35--40 hyper-parameter configurations. Divergent runs are those whose final loss exceeds 6.}}
\label{tab:data-inventory}
\vspace{2pt}
\centering
\small
\setlength{\tabcolsep}{6pt}
\renewcommand{\arraystretch}{1.15}
\begin{tabular}{@{}ll rrr@{}}
\toprule
\textbf{Model Size} & \textbf{Tokens} & \textbf{Total} & \textbf{Convergent} & \textbf{Divergent} \\
\midrule
$8{\times}$0.001B & 10B & 35 & 35 & 0 \\
$8{\times}$0.001B & 30B & 40 & 40 & 0 \\
$8{\times}$0.02B  & 10B & 35 & 33 & 2 \\
$8{\times}$0.02B  & 30B & 40 & 38 & 2 \\
$8{\times}$0.1B   & 10B & 35 & 29 & 6 \\
$8{\times}$0.1B   & 30B & 40 & 33 & 7 \\
$8{\times}$0.3B   & 20B & 35 & 26 & 9 \\
\midrule
\multicolumn{2}{@{}l}{\textbf{Total}} & \textbf{260} & \textbf{234} & \textbf{26} \\
\bottomrule
\end{tabular}
\end{table}

\blue{\paragraph{Cross-validation protocol.} We evaluate via 5-fold cross-validation at the configuration level. Within each (model size, token budget) block, the hyper-parameter configurations are randomly partitioned into five folds, so that every fold contains held-out samples from all blocks. In each round, the training portions from all blocks are combined to fit a single exponent vector $\bm{\alpha}$ and coefficient vector $\mathbf{c}$ across the full grid, following the staged procedure in Sec.~\ref{sec:alpha-protocol}. The held-out configurations are used only for evaluation. Metrics are computed on the held-out configurations of each round and averaged across all five rounds. Divergent runs are excluded from the regression but retained for the divergence classification task in Sec.~\ref{sec:heldout-benchmark}. Runs with the constant-plus-linear cooldown schedule are excluded from fitting and reserved for the schedule-family generalization evaluation in Sec.~\ref{sec:transfer-results}.}

\blue{\paragraph{Task separation.} The training grid supports three evaluation tasks. (1) Regression with the simplified \abbr{} (Sec.~3), which uses a single (model size, token budget) block. (2) Regression with the generalized \abbr{} (Sec.~\ref{sec:generalized}), which uses all convergent configurations across blocks. (3) Divergence prediction (Sec.~\ref{sec:heldout-benchmark}), which uses all 260 configurations including divergent runs. The three tasks share the same underlying grid but are evaluated independently.}

\subsubsection{\blue{Metric Definitions}}\label{sec:metric-def}
\blue{We report the following five metrics in Table~\ref{tab:main-heldout-benchmark}. The first two assess regression quality and the remaining three assess schedule-selection quality.}
\begin{itemize}[leftmargin=*,itemsep=2pt]
\item \blue{\textbf{$R^2$.} Computed over all held-out convergent configurations across all blocks and folds: $R^2 = 1 - \sum_i(y_i - \hat{y}_i)^2 / \sum_i(y_i - \bar{y})^2$, where $y_i = \log(\text{Loss}_i)$, $\hat{y}_i$ is the predicted log-loss, and $\bar{y}$ is the mean of $y_i$ over the held-out set.}
\item \blue{\textbf{Relative Error (\%).} The mean absolute relative error over held-out convergent configurations: $\frac{1}{n}\sum_i |(\text{Loss}_i - \widehat{\text{Loss}}_i) / \text{Loss}_i| \times 100\%$, where $\text{Loss}_i$ is the actual loss and $\widehat{\text{Loss}}_i$ is the predicted loss.}
\item \blue{\textbf{Spearman.} For each (model size, token budget) block in each fold, we rank the held-out configurations by predicted loss and by actual loss, and compute the Spearman rank correlation between the two rankings. The reported value is the average across all blocks and folds.}
\item \blue{\textbf{Top-$k$ Hit Rate.} For each block in each fold, we select the held-out configuration with the lowest predicted loss. If this configuration is among the $k$ held-out configurations with the lowest actual loss, the block counts as a hit. This simulates the practical scenario of ranking unseen candidate schedules. Top-$k$ Hit Rate is the number of hits divided by the total number of (block, fold) pairs evaluated. We report $k=2$ throughout.}
\item \blue{\textbf{Regret.} For each block in each fold, let $s^*$ be the held-out configuration with the lowest actual loss and $\hat{s}^*$ be the one with the lowest predicted loss. The regret is $\text{Loss}(\hat{s}^*) - \text{Loss}(s^*)$, measuring the excess loss incurred by following the prediction. The reported value is the mean regret across all block-fold evaluations.}
\end{itemize}

\subsection{\blue{Exponent Selection and Fitted Coefficients}}\label{sec:alpha-protocol}

\blue{The exponent vector $\bm{\alpha}$ in \ref{eq:opt-law2} is not optimized jointly with the linear coefficients $\mathbf{c}$. The three SDE-derived terms ($1/\int\eta_{\text{w}}$, $1/\int\eta_{\text{c}}$, $\int{\eta'_t}^2$) 
use exponent $\pm 1.0$ fixed by theory. For the remaining terms, each exponent is selected from the discrete candidate set}
\[
\{0.15,\,0.20,\,0.25,\,0.50,\,1.0\}.
\]
\blue{The selection follows a two-level procedure. In the first level, we assign a shared exponent to each of the four groups (mixed interactions, convergence cross-terms, escape powers, scale powers) and search over all group-level combinations ($5^4 = 625$), with $\mathbf{c}$ refitted via {linear regression} for each candidate, minimizing the fitting loss on the training folds. In the second level, we fix the group-level values and adjust only the most sensitive term within each group from the same candidate set. The held-out fold is used only for final evaluation. The final exponent vector and coefficients are reported in Table~\ref{tab:opt-law-coeff}.}

\subsubsection{\blue{Fitted Coefficients}}\label{sec:fitted-coeff}
\blue{Table~\ref{tab:opt-law-coeff} reports the selected exponent vector and fitted coefficient vector for the generalized \ref{eq:opt-law2}.} Unlike the simplified \abbr{} in Eqn.~\eqref{eq:optlaw}, the coefficient vector $\*c$ is not constrained to be strictly positive. The simplified law has a small number of terms with direct physical interpretations, where positive coefficients are natural. The generalized law, however, contains 15 features including interaction terms, many of which are correlated. In multivariate regression with correlated features, individual coefficients reflect partial effects conditional on all other features. For example, two features that are each positively correlated with loss may receive opposite signs because one already accounts for part of the other's contribution. Their signs therefore cannot be inferred from marginal correlations alone. We do not impose positivity constraints on $\*c$.

\blue{\paragraph{Normalization.}}
Before estimating the model loss using the parameters from these tables, it is necessary to normalize the training iteration steps $S$, model size $N$, and LR.
For model parameter count $N$, we use the total number of learnable parameters, expressed in billions. \blue{The iteration count $S$ and all step-related variables (warmup steps, cooldown steps) are converted to token counts in billions: $S_{\text{tokens}} = S \times L \times B / 10^9$, where $L = 2048$ is the token length and $B = 2048$ is the global batch size (Table~\ref{tab:moe-config}).} The LR is normalized by dividing it by $1.5 \times 10^{-2}$, ensuring that the normalized LR values fall within the range of 0 to 1 across all experiments.

Table~\ref{tab:opt-law-coeff} provides the coefficients and powers for the convergence, escape, mixed, and bias terms for two LR schedules: (1) a linear warmup over $a$ steps followed by a linear cooldown over $S-a$ steps, and (2) a linear warmup over $a_1$ steps, followed by a constant LR ($\eta_{\max}$) from $a_1$ to $a_2$ steps, and a cooldown over $S-a_2$ steps (as illustrated in Fig.~\ref{fig:effect} (a)).

\begin{table}[t]
\centering
\caption{\blue{Selected exponents $\alpha$, fitted coefficients $c$, and closed-form feature expressions for the generalized \ref{eq:opt-law2}. Two schedule families are listed. \textbf{S1}: linear warmup ($a$ steps) followed by linear cooldown ($S{-}a$ steps). \textbf{S2}: linear warmup ($a_1$ steps), constant phase at $\eta_{\max}$ ($a_1$ to $a_2$), then linear cooldown ($S{-}a_2$ steps). $\mathcal{E}$ denotes $\int_{a_{e_2}}^S{\eta^\prime_t}^2$ in the mixed terms.}}
\label{tab:opt-law-coeff}
\vspace{4pt}
\scriptsize
\renewcommand{\arraystretch}{1.2}
\setlength{\tabcolsep}{2pt}
\noindent
\begin{minipage}[t]{0.53\linewidth}
\centering
\scriptsize
\setlength{\tabcolsep}{2pt}
\begin{tabular}{@{}l r r c c@{}}
\toprule
$\phi$ & $\bm{\alpha}$ & $c$ & \textbf{S1} & \textbf{S2} \\
\midrule
\multicolumn{5}{@{}l}{\textbf{Convergence Terms}} \\[3pt]
$\int_0^{a_{c_1}}\!\eta_t$ & $-1.0$ & $-6.92{\times}10^{-4}$ &
$\tfrac{ah}{2}$ &
$\tfrac{a_1h}{2}$ \\[8pt]
$\int_{a_{c_2}}^S\!\eta_t$ & $-1.0$ & $-1.27{\times}10^{-3}$ &
$\tfrac{(S{-}a)h}{2}$ &
$\tfrac{(S{+}a_2{-}2a_1)h}{2}$ \\[8pt]
$N{/}\int_{a_{c_2}}^S\eta_t$ & $0.25$ & $-4.68{\times}10^{-2}$ &
$\tfrac{2N}{(S{-}a)h}$ &
$\tfrac{2N}{(S{+}a_2{-}2a_1)h}$ \\[8pt]
$\int_0^{a_{c_1}}\!\eta_t{\cdot}\int_{a_{c_2}}^S\!\eta_t$ & $-0.23$ & $4.65{\times}10^{-2}$ &
$\tfrac{a(S{-}a)h^2}{4}$ &
$\tfrac{a_1(S{+}a_2{-}2a_1)h^2}{4}$ \\[8pt]
\midrule
\multicolumn{5}{@{}l}{\textbf{Escape Terms}} \\[3pt]
$\int_{a_{e_2}}^S\!{\eta^\prime_t}^2$ & $1.0$ & $9.62{\times}10^{-3}$ &
$\tfrac{h^2}{S{-}a}$ &
$\tfrac{h^2}{S{-}a_2}$ \\[8pt]
$\int_0^{a_{e_1}}\!{\eta^\prime_t}^2$ & $0.25$ & $1.92{\times}10^{-2}$ &
$\tfrac{h^2}{a}$ &
$\tfrac{h^2}{a_1}$ \\[8pt]
$\int_{a_{e_2}}^S\!{\eta^\prime_t}^2$ & $0.25$ & $-5.05{\times}10^{-2}$ &
$\tfrac{h^2}{S{-}a}$ &
$\tfrac{h^2}{S{-}a_2}$ \\[8pt]
$SN$ & $-0.25$ & $-1.82{\times}10^{-1}$ & $SN$ & $SN$ \\[4pt]
\bottomrule
\end{tabular}
\end{minipage}%
\hfill%
\begin{minipage}[t]{0.46\linewidth}
\centering
\scriptsize
\setlength{\tabcolsep}{2pt}
\begin{tabular}{@{}l r r c c@{}}
\toprule
$\phi$ & $\bm{\alpha}$ & $c$ & \textbf{S1} & \textbf{S2} \\
\midrule
\multicolumn{5}{@{}l}{\textbf{Mixed Terms} ($\mathcal{E} =
\int_{a_{e_2}}^S{\eta^\prime_t}^2$)} \\[3pt]
$\mathcal{E}{/}\int_0^{a_{c_1}}\!\eta_t$ & $0.20$ &
$-4.68{\times}10^{-2}$ &
$\tfrac{2h}{a(S{-}a)}$ &
$\tfrac{2h}{a_1(S{-}a_2)}$ \\[8pt]
$\mathcal{E}{/}\int_{a_{c_2}}^S\!\eta_t$ & $0.15$ &
$-4.18{\times}10^{-2}$ &
$\tfrac{2h}{(S{-}a)^2}$ &
$\tfrac{2h}{(S{-}a_2)(S{+}a_2{-}2a_1)}$ \\[8pt]
$N{\cdot}\mathcal{E}{/}\int_0^{a_{c_1}}\!\eta_t$ & $0.15$ &
$-1.19{\times}10^{-1}$ &
$\tfrac{2Nh}{a(S{-}a)}$ &
$\tfrac{2Nh}{a_1(S{-}a_2)}$ \\[8pt]
$N{\cdot}\mathcal{E}{/}\int_{a_{c_2}}^S\!\eta_t$ & $0.15$ &
$2.18{\times}10^{-1}$ &
$\tfrac{2Nh}{(S{-}a)^2}$ &
$\tfrac{2Nh}{(S{-}a_2)(S{+}a_2{-}2a_1)}$ \\[8pt]
\midrule
\multicolumn{5}{@{}l}{\textbf{Bias Terms} (global scalar)} \\[3pt]
$N$ (model size) & $-0.25$ & $3.1{\times}10^{-1}$ & $N$ & $N$ \\[8pt]
$S$ (steps) & $-0.25$ & $6.98{\times}10^{-1}$ & $S$ & $S$ \\[8pt]
$\eta_{\max}$ & $0.20$ & $5.26{\times}10^{-2}$ & $h$ & $h$ \\[8pt]
$1$ & $1.0$ & $3.14{\times}10^{-1}$ & $1$ & $1$ \\[4pt]
\bottomrule
\end{tabular}
\end{minipage}
\end{table}

\subsubsection{Divergence Criterion Coefficients}
The parameters in $R(\eta_{\max}, a_1, N, S)$ (Eqn.~\eqref{eq:criterion}) are fitted on the train split of the small-scale training grid, where each configuration is labeled convergent or divergent based on whether the final loss exceeds 6. Before applying $R$, the variables $S$, $a_1$, $N$, and $\eta_{\max}$ are normalized following the same procedure as for the regression (Sec.~\ref{sec:fitted-coeff}). We found that squaring the normalized values of $S$ and $a_1$ before substitution into Eqn.~\eqref{eq:criterion} improves the fitting stability. The fitted parameters are $\hat{c}_1 = 1.76$, $\hat{c}_2 = 33.21$, $\hat{c}_3 = 292.03$, $\hat{\alpha}_1 = 0.218$, and $\hat{\alpha}_2 = 0.5$.

\subsection{\blue{Feature Ablation and Sensitivity Analysis}}\label{sec:ablation}

\blue{This subsection examines three aspects of the feature design: whether each of the four feature blocks (scale, convergence, escape, mixed) carries independent predictive signal, whether the selected exponent values are robust to perturbation, and whether including the decay phase $[a_1, a_2]$ in the convergence integral improves prediction. All experiments use the same 5-fold cross-validation protocol as in Sec.~\ref{sec:source-pool}.}

\paragraph{Feature-family ablation.}

\begin{table}[t]
\centering
\small
\setlength{\tabcolsep}{5pt}
\renewcommand{\arraystretch}{1.2}
\caption{Feature-family ablation (5-fold CV, held-out).
Features are added by theoretical block: scale, convergence, escape, mixed interactions.
The final regression model contains 15 non-bias features plus one intercept term (16 fitted coefficients in total).}
\label{tab:feature-ablation}
\begin{tabular}{@{}l rr rr r@{}}
\toprule
& \multicolumn{2}{c}{\textbf{Fit Quality}} & \multicolumn{2}{c}{\textbf{Selection}} & \textbf{Regret} \\
\cmidrule(lr){2-3}\cmidrule(lr){4-5}
\textbf{Feature variant} & $R^2$ & Rel.Err.(\%) & Spearman & Top-2 (\%) & \\
\midrule
Scale only (3d)              & 0.992 & 1.06 &    0.60 & 51 & 0.027 \\
+ Convergence (7d)           & 0.992 & 1.06 &    0.61 & 66 & 0.016 \\
+ Escape (11d)               & 0.997 & 0.61 &    0.77 & 89 & 0.006 \\
\textbf{+ Mixed interactions (15d)}  & \textbf{0.998} & \textbf{0.50} & \textbf{0.84} & \textbf{94} & \textbf{0.003} \\
\bottomrule
\end{tabular}
\end{table}

\blue{Table~\ref{tab:feature-ablation} reports a progressive ablation in which feature blocks are added one at a time. Scale terms alone already capture moderate within-block ranking (Top-2 51\%) because the $\eta_{\max}$ feature carries direct schedule information. Adding the four convergence terms raises Top-2 to 66\% and halves the regret from 0.027 to 0.016, confirming that the cumulative optimization budget provides selection value beyond what $\eta_{\max}$ alone supplies. Escape-related terms contribute the single largest improvement (Top-2 from 66\% to 89\%, Spearman from 0.61 to 0.77), consistent with the SDE escape analysis: the derivative-driven fluctuation terms carry independent predictive signal. Adding mixed interaction terms brings the full model to Top-2 94\% and Spearman 0.84, capturing cross-scale modulation that single-block features cannot represent.}

\blue{\paragraph{Decay-phase ablation.}
As described in Sec.~\ref{sec:apply-opt-laws}, the convergence integral excludes the decay phase $[a_1, a_2]$, whose LR-transition information is captured by the escape feature $\int_0^{a_2}{\eta'_t}^2$. Table~\ref{tab:decay-ablation} tests the alternative of including the decay in the convergence integral ($a_{c_2}=a_1$ instead of $a_{c_2}=a_2$) on the polygon pre-training configurations in Table~\ref{tab:pre-train} and the continual-training configurations in Table~\ref{tab:continual-train} that have a non-trivial decay phase ($a_1 < a_2$). For pre-training, including the decay slightly increases the mean prediction error (0.15\% to 0.23\%), though both are well below 0.5\%. For continual training under strong distribution shift, both treatments produce identical rankings (Spearman~1.00). Under weak distribution shift the Spearman drops from 0.90 to 0.30, but this setting has actual losses within a 0.005 range (1.991--1.996), so the ranking task itself is near the resolution limit. Overall, the two treatments produce similar prediction quality, with the default ($a_{c_2}=a_2$) performing slightly better. This is consistent with the observation in Sec.~\ref{sec:apply-opt-laws} that the decay-phase information is already captured by the escape feature, so including it in the convergence integral provides little additional benefit.}

\begin{table}[t]
\centering
\caption{\blue{Decay-phase ablation on multi-phase schedules ($a_1{<}a_2$). $a_{c_2}{=}a_2$ (default) starts the convergence tail at $a_2$, excluding the decay $[a_1,a_2]$. $a_{c_2}{=}a_1$ includes it. Left: prediction error (\%) on polygon pre-training configs (Table~\ref{tab:pre-train}). Right: predicted scores and ranking for continual training (Table~\ref{tab:continual-train}), with ranks in parentheses.}}
\label{tab:decay-ablation}
\vspace{4pt}
\small
\renewcommand{\arraystretch}{1.2}
\noindent
\begin{minipage}[t]{0.46\linewidth}
\centering
\setlength{\tabcolsep}{4pt}
\begin{tabular}{@{}l r rr@{}}
\toprule
& & \multicolumn{2}{c}{\textbf{Pred.\ Error (\%)}} \\
\cmidrule(l){3-4}
\textbf{Schedule} & \textbf{Actual} & ${=}a_2$ & ${=}a_1$ \\
\midrule
Polygon2       & 2.098 & \textbf{0.13} & 0.18 \\
\quad\scriptsize $a_2{=}7$k,\,$a_3{=}13$k & & & \\[4pt]
Polygon1v2     & 2.097 & \textbf{0.08} & 0.13 \\
\quad\scriptsize $a_2{=}5$k,\,$a_3{=}11.5$k & & & \\[4pt]
Polygon1       & 2.057 & \textbf{0.35} & 0.51 \\
\quad\scriptsize $a_2{=}10$k,\,$a_3{=}15$k & & & \\[4pt]
Loss-eq        & 2.079 & \textbf{0.05} & 0.09 \\
\quad\scriptsize $a_2{=}7$k,\,$a_3{=}12$k & & & \\[4pt]
\cmidrule{3-4}
\textbf{Mean}  &       & \textbf{0.15} & 0.23 \\[6pt]
\textbf{Max}   &       & \textbf{0.35} & 0.51 \\
\bottomrule
\end{tabular}
\end{minipage}%
\hfill%
\begin{minipage}[t]{0.53\linewidth}
\centering
\setlength{\tabcolsep}{3pt}
\begin{tabular}{@{}l r rr@{}}
\toprule
& & \multicolumn{2}{c}{\textbf{Pred.\ Score\,(Rank)}} \\
\cmidrule(l){3-4}
\textbf{Schedule} & \textbf{Actual} & ${=}a_2$ & ${=}a_1$ \\
\midrule
\multicolumn{4}{@{}l}{\textit{Strong distribution shift}} \\[1pt]
Polygon1       & 1.826 & 2.058\,(3) & 2.023\,(3) \\
Polygon2       & 1.821 & 2.014\,(2) & 1.983\,(2) \\
Linear         & 1.818 & 1.975\,(1) & 1.974\,(1) \\
\cmidrule{3-4}
\multicolumn{2}{@{}r}{\textit{Spearman}} & 1.00 & 1.00 \\
\midrule
\multicolumn{4}{@{}l}{\textit{Weak distribution shift}} \\[1pt]
Const+Lin      & 1.991 & 1.995\,(2) & 1.995\,(4) \\
Linear         & 1.992 & 1.975\,(1) & 1.975\,(1) \\
Polygon1v2     & 1.993 & 2.010\,(3) & 1.991\,(3) \\
Polygon2       & 1.995 & 2.017\,(4) & 1.987\,(2) \\
Polygon1       & 1.996 & 2.059\,(5) & 2.024\,(5) \\
\cmidrule{3-4}
\multicolumn{2}{@{}r}{\textit{Spearman}} & \textbf{0.90} & 0.30 \\
\bottomrule
\end{tabular}
\end{minipage}
\end{table}

\paragraph{Exponent sensitivity.}
\begin{table}[t]
\centering
\vspace{2pt}
\small
\setlength{\tabcolsep}{6pt}
\renewcommand{\arraystretch}{1.2}
\caption{Exponent sensitivity (5-fold CV). One exponent group is set to the stated value while the rest remain at their defaults. The linear coefficients $\*c$ are refitted for each setting.}
\label{tab:alpha-sensitivity}
\begin{tabular}{@{}ll rrr@{}}
\toprule
\textbf{Perturbed group} & \textbf{$\bm{\alpha}$ value} & \textbf{$R^2$} & \textbf{Spearman} & \textbf{Top-2 (\%)} \\
\midrule
\textbf{Default (selected)} & \textbf{--} & \textbf{0.998} & \textbf{0.84} & \textbf{94} \\
\midrule
\multirow{3}{*}{Mixed interactions}   & 0.10 & 0.998 & 0.83 & 94 \\
                     & 0.20 & 0.998 & 0.84 & 94 \\
                     & 0.30 & 0.998 & 0.85 & 94 \\
\midrule
\multirow{3}{*}{Conv.\ cross-terms}   & 0.15 & 0.998 & 0.84 & 94 \\
                     & 0.30 & 0.998 & 0.84 & 94 \\
                     & 0.50 & 0.998 & 0.82 & 94 \\
\midrule
\multirow{2}{*}{Escape powers}        & 0.15 & 0.998 & 0.85 & 94 \\
                     & 0.50 & 0.998 & 0.84 & 94 \\
\midrule
\multirow{2}{*}{Scale powers}         & 0.15 & 0.997 & 0.79 & 94 \\
                     & 0.50 & 0.996 & 0.76 & 80 \\
\bottomrule
\end{tabular}
\end{table}

\blue{Table~\ref{tab:alpha-sensitivity} perturbs one exponent group at a time while keeping all others at their defaults and refitting $\*c$ on the training split. For the three schedule-dependent groups (mixed interactions, convergence cross-terms, and escape powers), perturbing $\bm{\alpha}$ within the candidate range leaves Top-2 at 94\% and $R^2 \geq 0.998$. This robustness indicates that the SDE-derived functional forms already impose sufficient structural constraint, and the exact exponent values play a refinement role rather than a defining one. The scale-power group is the only sensitive component: at $\alpha{=}0.50$, Top-2 drops from 94\% to 80\%. Unlike the schedule-dependent terms, scale terms have no underlying SDE structure and rely entirely on the exponent to capture how model size modulates the prediction across a wide range ($8{\times}0.001$B to $8{\times}0.3$B). Together with the feature-family ablation above, these results suggest that the primary predictive signal comes from the feature structure itself, not from the specific exponent values.}

\subsection{\blue{Divergence Prediction Benchmark}}\label{sec:heldout-benchmark}

\blue{This subsection evaluates the $R$ criterion (Eqn.~\eqref{eq:criterion}) as a standalone divergence classifier. The training grid contains 260 configurations (234 convergent, 26 divergent), evaluated under the same 5-fold cross-validation protocol as the loss regression. Because of the class imbalance, we report precision, recall, F1, and balanced accuracy rather than overall accuracy. Three baselines are included: a majority-class predictor (all convergent), a peak-$\eta_{\max}$ threshold, and logistic regression on $(\eta_{\max}, \text{warmup}, N, S)$.}

\begin{table}[t]
\centering
\vspace{2pt}
\small
\setlength{\tabcolsep}{6pt}
\renewcommand{\arraystretch}{1.2}
\caption{\blue{Divergence prediction benchmark (5-fold CV, 234 convergent vs.\ 26 divergent configurations). The $R$ criterion is compared against three baselines. Precision, recall, F1, and balanced accuracy are reported to account for class imbalance.}}
\label{tab:divergence-benchmark}
\begin{tabular}{@{}l rrrr@{}}
\toprule
\textbf{Method} & \textbf{Precision} & \textbf{Recall} & \textbf{F1} & \textbf{Bal.\ Accuracy} \\
\midrule
Majority class (all conv.)       & 0.00 & 0.00 & 0.00 & 0.50 \\
Peak-$\eta_{\max}$ threshold     & 0.29 & 0.58 & 0.38 & 0.71 \\
Logistic regression (LOO)        & 0.73 & 0.42 & 0.54 & 0.70 \\
\midrule
\textbf{$R$ criterion (5-fold)} & \textbf{0.96} & \textbf{0.88} & \textbf{0.92} & \textbf{0.94} \\
\bottomrule
\end{tabular}
\end{table}

\blue{Table~\ref{tab:divergence-benchmark} shows that the $R$ criterion achieves F1\,=\,0.92 and balanced accuracy 0.94, well above all three baselines. Divergence in the training grid is not determined by $\eta_{\max}$ alone: the same peak LR can produce convergence or divergence depending on the warmup duration and model/data scale. A simple threshold on $\eta_{\max}$ misses this interaction (F1\,=\,0.38), and logistic regression has too few positive examples to learn the nonlinear boundary from raw features (F1\,=\,0.54). The $R$ criterion encodes the interaction through its parameterized form (Eqn.~\eqref{eq:criterion}), which is why it achieves high precision (0.96) with relatively few false alarms. The lower recall (0.88) reflects a few borderline configurations where a long warmup nearly offsets a high peak LR, placing them close to the convergence/divergence boundary.}

\subsection{\blue{Schedule-Family Generalization}}\label{sec:transfer-results}

\blue{The generalized \abbr{} is fitted on piecewise-linear schedules. To test cross-family generalization, we evaluate it on cosine and constant-plus-linear cooldown schedules that are not included in the fitting data. The convergence and escape integrals ($\int\eta$ and $\int{\eta'_t}^2$) have exact closed-form expressions for all three schedule families, so the feature computation does not introduce any approximation. We compare the three cooldown shapes on five (model, token) groups spanning $8{\times}0.1$B and $8{\times}0.6$B, where each group shares the same $\eta_{\max}$, $a_1$, and $S$.}

\begin{table}[t]
\centering
\caption{\blue{Schedule-family generalization. The generalized \abbr{}, fitted only on piecewise-linear schedules, is evaluated on cosine and constant-plus-linear (Const+Lin) cooldown shapes. Within each group the three schedules share the same $\eta_{\max}$, $a_1$, and $S$, differing only in cooldown shape. \textbf{Bold} marks the actual-best schedule, correctly identified in all five groups. All errors are below 2\%.}}
\label{tab:schedule-transfer}
\vspace{2pt}
\small
\setlength{\tabcolsep}{4pt}
\renewcommand{\arraystretch}{1.15}
\noindent
\begin{minipage}[t]{0.49\linewidth}
\centering
\begin{tabular}{@{}ll rrr@{}}
\toprule
\textbf{Model} & \textbf{Cooldown} & \textbf{Act.} & \textbf{Pred.} & \textbf{Err.(\%)} \\
\textbf{Tokens} & & & & \\
\midrule
\multirow{3}{*}{\shortstack[l]{$8{\times}0.1$B\\3B}}
 & Linear                            & 3.177 & 3.179 & 0.1 \\
 & Cosine                            & 3.205 & 3.175 & 0.9 \\
 & \textbf{Const+Lin}               & \textbf{3.084} & \textbf{3.100} & \textbf{0.5} \\[4pt]
\multirow{3}{*}{\shortstack[l]{$8{\times}0.1$B\\10B}}
 & Linear                            & 2.689 & 2.693 & 0.1 \\
 & Cosine                            & 2.695 & 2.689 & 0.2 \\
 & \textbf{Const+Lin}               & \textbf{2.667} & \textbf{2.653} & \textbf{0.5} \\[4pt]
\multirow{3}{*}{\shortstack[l]{$8{\times}0.1$B\\100B}}
 & Linear                            & 2.369 & 2.332 & 1.5 \\
 & Cosine                            & 2.374 & 2.329 & 1.9 \\
 & \textbf{Const+Lin}               & \textbf{2.358} & \textbf{2.312} & \textbf{1.9} \\
\bottomrule
\end{tabular}
\end{minipage}%
\hfill
\begin{minipage}[t]{0.49\linewidth}
\centering
\begin{tabular}{@{}ll rrr@{}}
\toprule
\textbf{Model} & \textbf{Cooldown} & \textbf{Act.} & \textbf{Pred.} & \textbf{Err.(\%)} \\
\textbf{Tokens} & & & & \\
\midrule
\multirow{3}{*}{\shortstack[l]{$8{\times}0.6$B\\10B}}
 & Linear                            & 2.501 & 2.526 & 1.0 \\
 & Cosine                            & 2.519 & 2.524 & 0.2 \\
 & \textbf{Const+Lin}               & \textbf{2.480} & \textbf{2.484} & \textbf{0.2} \\[4pt]
\multirow{3}{*}{\shortstack[l]{$8{\times}0.6$B\\3B}}
 & Linear                            & 2.995 & 2.988 & 0.2 \\
 & Cosine                            & 3.037 & 2.986 & 1.7 \\
 & \textbf{Const+Lin}               & \textbf{2.956} & \textbf{2.908} & \textbf{1.6} \\
& & & & \\
& & & & \\
& & & & \\
\bottomrule
\end{tabular}
\end{minipage}
\end{table}

\blue{In all five groups, the fitted law correctly identifies the best schedule family, with prediction errors below 2\% (Table~\ref{tab:schedule-transfer}). For the $8{\times}0.1$B model at 3B and 10B tokens, most errors are below 1\%, comparable to the in-family results in Table~\ref{tab:pre-train}. This cross-family accuracy is consistent with the continual-training and fine-tuning evaluations (Tables~\ref{tab:continual-train} and~\ref{tab:finetune}), where polygon, constant-plus-linear, and two-phase schedules outside the fitting family are also correctly ranked. The theoretical basis for this generalization is that the SDE convergence and escape bounds (Theorems~\ref{thm:covergence-SGD}--\ref{thm:Adam-prob}) depend on the schedule only through $\int\eta$ and $\int{\eta'_t}^2$. These two integrals summarize the schedule's contribution to optimization progress and basin selection regardless of its parametric form, and all tested schedule families admit exact closed-form expressions for both quantities. The fitted law therefore transfers across families without re-fitting.}

\section{\blue{Theory Supplement}}\label{sec:theory-supplement}
\blue{This section collects the proofs and derivations underlying the theoretical results stated in the main text.}

\subsection{\blue{Analysis for \abbr{}, SGD-SDE, and Adam-SDE}}
\label{sec:supp:SDE}
\subsubsection{Proof of Proposition~\ref{prop:effect}}
\begin{proof}
Based on the proposed \abbr{}, the analytical expressions for the two LR schedules, $\eta_{\operatorname{cos}}(t) $ and $\eta_{\operatorname{const}}(t)$, can be derived as follows:
\[
\begin{aligned}      \operatorname{\abbr{}}\qty(\eta_{\operatorname{cos}}) =&c_1\qty(\frac{2}{\eta_{\max}a})^{\alpha_1}+c_2\qty(\frac{2}{\eta_{\max}(S-a)})^{\alpha_2} +\frac{c_3}{S}+b \\
 & +c_4\qty(\frac{\pi^2\eta_{\max}^2}{8(S-a)})^{\alpha_3} + c_5\qty(\frac{\eta_{\max}^2}{a})^{\alpha_4}.     \\
\operatorname{\abbr{}}\qty(\eta_{\operatorname{const}}) =&c_1\qty(\frac{2}{\eta_{\max}a})^{\alpha_1}+c_2\qty(\eta_{\max}(a_c-a)+\frac{\eta_{\max}(S-a_c)}{2})^{-\alpha_2}\\
&+\frac{c_3}{S}+b+c_3\qty(\frac{\eta_{\max}^2}{S-a_c})^{\alpha_3}+c_5\qty(\frac{\eta_{\max}^2}{a})^{\alpha_4}.
\end{aligned}
\]
Note that $\lim_{S\rightarrow\infty}\operatorname{\abbr{}}\qty(\eta_{\operatorname{cos}}(\cdot))=\lim_{S\rightarrow\infty}\operatorname{\abbr{}}\qty(\eta_{\operatorname{const}}(\cdot))=b$. Therefore, we have $\lim_{S\rightarrow\infty}\abs{\operatorname{\abbr{}}\qty(\eta_{\operatorname{cos}}(\cdot))-\operatorname{\abbr{}}\qty(\eta_{\operatorname{const}}(\cdot))}=0$.
\end{proof}
\subsubsection{Derivations for SGD-SDE and Adam-SDE}
\label{sec:sgd-sde-derivation}
In this subsection, we derive the SDEs to model the iterative SGD  and Adam sequences.

\vspace{1em}
\noindent\textbf{SGD-SDE:} The iterative sequence of SGD is given by
\begin{equation}
\label{eq:sgd_append}
\*x_{k+1}=\*x_k-\eta_0\eta_k(\nabla f(\*x_k)+\*z_k),\;\;\*z_k\sim\mathcal{N}\qty(0,\*\Sigma(\*x_k)),
\end{equation}
where $\eta_k$ is the normalized learning rate, $\eta_0$ is a small rescaling parameter, and $\xi_k$ is a Gaussian noise.
Applying the Euler-Maruyama method (for a detailed description, see \cite[Chapter 8.2]{sarkka2019applied}) to the corresponding SDE \eqref{eq:SGD-SDE}, we obtain:
\[
\*x_{k+1}=\*x_k-\eta(t_k)\nabla f(\*x_k)\Delta t_{k+1}+\sqrt{\eta_0}\eta(t_k)({\Delta t_{k+1}}\*\Sigma(\*x_k))^{\frac{1}{2}}\Delta\*W_k,
\]
where $\Delta t_{k+1}=t_{k+1}-t_k$ and $\Delta\*W_k\sim\mathcal{N}(0,\*I_N)$. By setting $\Delta t_{k+1}\equiv\eta_0$, the discrete scheme exactly recovers the SGD sequence \eqref{eq:sgd_append}. With the same initial conditions, where $\*x_0=\*X_0$, and under certain smooth regularity conditions on the functions involved, it can be established that a positive constant $\alpha>0$ (referred to as the order) exists satisfying the following property. For any time horizon $T>0$ and positive integer $m\leq\floor{\frac{T}{\eta_0}}$, there exists a constant $K$ and a sufficiently small $\eta_0$ such that the following strong error bound holds
\[
\E\qty[\norm{\*X_{m\eta_0}-\*x_{m}}]\leq K\eta_0^\alpha.
\]
The classical approximation order for the Euler-Maruyama method is typically $\alpha=\frac{1}{2}$, which was proven in \cite{gihman1972stochastic}. We do not delve into the details on this topic, as it is beyond the scope of this work.  For a comprehensive discussion on the error analysis of numerical approximations for SDEs, interested readers may consult \cite{kloeden1999stochastic,sarkka2019applied}. Additionally, for insights into the errors associated with modeling SGD sequences using SDEs, one can refer to the works of \cite{li2019stochastic,li2021validity}.

\vspace{1em}
\noindent\textbf{Adam-SDE:} By applying the Euler-Maruyama method for the SDE associated with $\*m_t$ in \eqref{eq:Adam-SDE}, we obtain:
\[
\begin{aligned}
\*m_{k+1}=&\*m_k-c_1\eta(t_k)(\*m_k-\nabla f(\*x_k))\Delta{t_{k+1}}+c_1'\eta(t_{k+1})(\Delta{t_{k+1}}\*\Sigma(\*x_k))^{\frac{1}{2}}\Delta\*W_k\\
=&(1-c_1\Delta{t_{k+1}}\eta_k)\*m_k+c_1\Delta{t_{k+1}}\eta_k\left(\nabla f(\*x_k)+\frac{c_1'}{c_1\sqrt{\Delta{t_{k+1}}}}(\*\Sigma(\*x_k))^{\frac{1}{2}}\Delta\*W_k\right).
\end{aligned}
\]
Setting $\Delta{t_{k+1}}=\frac{\hat c_1}{c_1}$ and $c_1'=c_1\sqrt{\Delta{t_{k+1}}}=\sqrt{c_1\hat c_1}$, we derive that
\[
\*m_{k+1}=(1-\beta_{1,k})\*m_k+\beta_{1,k}(\nabla f(\*x_k)+\*z_k),
\]
where $\beta_{1,k}=1-\hat c_1\eta_k$, $\*z_k\sim\mathcal{N}(0,\*\Sigma(\*x_k))$. This recovers the update of $\{\*m_k\}$ in \eqref{eq:adam}. Furthermore, the update of ${\*x_k}$ corresponds to Euler's method for the ODE associated with $\*X_t$ in \eqref{eq:Adam-SDE}. The update of ${\*v_k}$ aligns with the SDE associated with $\*v_t$ in \eqref{eq:Adam-SDE}, which directly follows from \cite[Theorem 4.2 or Theorem 4.5]{malladi2022sdes}.

\subsubsection{Proof of Proposition~\ref{prop:sigma_bound}}
\begin{proof}
Let $\*z_i:=\nabla F(\*x,\*\zeta_i)-\nabla f(\*x)$ and $\*Z:=[\*z_1,\*z_2,...\*z_D]$, then ${\*\Sigma}(\*x)=\frac{1}{D}\*Z\*Z^\top$. Therefore, $\Tr(\*\Sigma(\*x))=\frac{1}{D}\sum_{i=1}^D\Tr(\*z_i\*z_i^\top)=\frac{1}{D}\sum_{i=1}^D\norm{\*z_i}^2$. Note that $\*z_i \sim N(0, \*\Sigma_g)$ are i.i.d. Gaussian vectors, where $\*\Sigma_g$ is an $N \times N$ positive semidefinite covariance matrix. So next, we analyze the concentration of $\frac{1}{D}\sum_{i=1}^D \|\*z_i\|^2$. Each $\|\*z_i\|^2$ can be represented as a sum of weighted $\chi^2$ variables:
\[
\|\*z_i\|^2 \sim \sum_{j=1}^n \lambda_j \chi_{ij}^2,
\]
where $\lambda_j$ are the eigenvalues of $\*\Sigma_g$. The expected value and variance of $\|\*z_i\|^2$ are given by:
\[
\mathbb{E}[\|\*z_i\|^2] = \operatorname{tr}(\*\Sigma_g), \quad \text{Cov}(\|\*z_i\|^2) = 2 \operatorname{Tr}(\*\Sigma_g^2).
\]
Given that $\chi^2$ distribution is sub-exponential, we apply Bernstein's inequality to estimate the concentration of $\frac{1}{D}\sum_{i=1}^D \|\*z_i\|^2$, which is given by:
\[
\mathbb{P}\left(\left|\frac{1}{D}\sum_{i=1}^D \|\*z_i\|^2 - \operatorname{Tr}(\*\Sigma_g)\right| \geq t\right) \leq 2 \exp\left(-\frac{D t^2}{4 \operatorname{Tr}(\*\Sigma_g^2) + 2t \|\*\Sigma_g\|_{\rm op}}\right),
\]
where $\|\*\Sigma_g\|_{\rm op}$ is the operator norm (largest eigenvalue) of $\*\Sigma_g$. This completes the proof.

\end{proof}

\subsubsection{Proof of Proposition~\ref{prop:sigma_op_bound}}
\begin{proof}
Let $\*Z_i:=\*\Sigma_g^{-\frac{1}{2}}\qty(\nabla F(\*x,\*\zeta_i)-\nabla f(\*x))$ and $\*Z:=[\*Z_1,\*Z_2,...\*Z_D]$. Then, $\{\*Z_i\}$ are standard Gaussian variables. Let $\hat{\*\Sigma}(\*x):=\frac{1}{D}\*Z\*Z^T$. By the eigenvalue variance bounds for covariance matrices in \cite[Corollary 3]{ledoux2010small}, there exists a constant $C>0$ such that
\[
\begin{aligned}
&\E\qty[\lambda_{\max}\qty(\hat{\*\Sigma}(\*x))]\leq\E\qty[\abs{\lambda_{\max}\qty(\hat{\*\Sigma}(\*x))-\qty(1+\sqrt{\frac{D}{N}})}]+\qty(1+\sqrt{\frac{D}{N}})\\
\leq&\sqrt{\E\qty[\qty(\lambda_{\max}\qty(\hat{\*\Sigma}(\*x))-\qty(1+\sqrt{\frac{D}{N}}))^2]}+\qty(1+\sqrt{\frac{D}{N}})\\
\leq&\frac{C}{N^{\frac{2}{3}}}+\qty(1+\sqrt{\frac{D}{N}}).
\end{aligned}
\]
Furthermore, note that $\*\Sigma(\*x)=\*\Sigma_g^{\frac{1}{2}}\hat{\*\Sigma}(\*x)\*\Sigma_g^{\frac{1}{2}}$, it follows that:
\[
\E\qty[\lambda_{\max}({\*\Sigma}(\*x))]\leq\norm{{\*\Sigma_g}}_{\rm op}\E\qty[\norm{{\hat{\*\Sigma}}(\*x)}_{\rm op}]=\qty(1+\sqrt{\frac{D}{N}})\sigma_g^2 + \frac{C\sigma_g^2}{N^{\frac{2}{3}}},
\]
where $\norm{\cdot}_{\rm op}$ is the operator norm of matrix. This completes the proof.
\end{proof}

\subsection{\blue{Proofs for Convergence Analysis}}
\label{sec:supp:convergence}
\subsubsection{Proof of Theorem~\ref{thm:covergence-SGD}}
\begin{proof}
Apply the \ito~formula to $f(\*X_t)$ and utilize the definition of~\eqref{eq:SGD-SDE}, we obtain:
\[
\begin{aligned}
\dd f(\*X_t)=&\inner{\nabla f(\*X_t),\dd \*X_t}+\frac{1}{2}\inner{\nabla^2f(\*X_t)\dd \*X_t,\dd \*X_t}\\
=&-\eta(t)\norm{\nabla f(\*X_t)}^2\dd t+\eta_0\eta(t)\inner{\nabla f(\*X_t),\sigma(\*X_t)\dd\*W_t}\\
&+\frac{\eta_0\eta(t)^2}{2}\inner{\nabla^2f(\*X_t),\sigma(\*X_t)\sigma(\*X_t)^\top}\dd t.
\end{aligned}
\]
Taking the integral and then taking the expectation, we have:
\[
\begin{aligned}
\E[f(\*X_t)-f(\*X_0)]=& -\E\qty[\int_0^t\eta(s)\norm{\nabla f(\*X_s)}^2\dd s]\\
& +\E\qty[\int_0^t\frac{\eta_0\eta(s)^2}{2}\inner{\nabla^2f(\*X_s),\sigma(\*X_s)\sigma(\*X_s)^\top}\dd s]\\
\leq&-\E\qty[\int_0^t\eta(s)\norm{\nabla f(\*X_s)}^2\dd s]+\qty(\frac{\eta_0LN}{2}\int_0^t\eta(s)^2\dd s)\sup_{\*x\in\bb{R}^N}\E[\norm{\*\Sigma(\*x_s)}_{\rm op}].
\end{aligned}
\]
Rearranging the terms and dividing both sides by $\int_0^t\eta(s)\dd s$, and by Proposition \ref{prop:sigma_op_bound}, we have
\[
\E\qty[\int_0^t\frac{\eta(s)}{\int_0^t\eta(s)\dd s}\norm{\nabla f(\*X_s)}^2\dd s]\leq\frac{f(\*X_0)-f_{\min}}{\int_0^t\eta(s)\dd s}+\frac{\eta_0L\sigma_0^2N\int_0^t\eta(s)^2\dd s}{2\int_0^t\eta(s)\dd s}.
\]
This completes the proof.
\end{proof}
\subsubsection{Proof of Proposition \ref{prop:adam-bound}}
\begin{proof}
Denote $\bar{\*m}_t:=\E\qty[\*m_t]$ and $\bar{\*P}_t:={\rm Cov}\qty(\*m_t\*m_t^\top)$. Then, $(\bar{\*m}_t,\bar{\*P}_t)$ satisfies the following ODE:
\[
\left\{
\begin{aligned}
\frac{\dd\bar{\*m}_t}{\dd t}=&-c_1\eta(t)\bar{\*m}_t+c_1\eta(t)\nabla f(\*X_t)\\
\frac{\dd\bar{\*P}_t}{\dd t}=&-2c_1\eta(t)\bar{\*P}_t+(c_1')^2\eta(t)^2\*\sigma(\*X_t)\*\sigma(\*X_t)^\top.
\end{aligned}
\right.
\]
Let $\*\Sigma_t:=\*\Sigma(\*X_t) =\*\sigma(\*X_t)\*\sigma(\*X_t)^\top$. The analytic solution to these equations is given by:
\[
\left\{
\begin{aligned}
\bar{\*m}_t=&\exp\qty(-c_1\int_0^t\eta(s)\dd s)\left[\int_0^t\exp\qty(c_1\int_0^s\eta(\tau)\dd\tau)c_1\eta(s)\nabla f(\*X_s)\dd s\right]\\
\bar{\*P}_t=&\exp\qty(2c_1\int_0^t\eta(s)\dd s)\left[\int_0^t\exp\qty(2c_1\int_0^s\eta(\tau)\dd\tau)(c_1')^2\eta(s)^2\*\Sigma(\*X_s)\dd s\right].\\
\end{aligned}
\right.
\]
Assumption \ref{asm:eta}.1 implies that $\sup_{\*x\in\bb{R}^N}\norm{\nabla f(\*x)}\leq\ell$. Therefore, we have that
\[
\begin{aligned}
\norm{\bar{\*m_t}}\leq&\ell\exp\qty(-c_1\int_0^t\eta(s)\dd s)\left[\int_0^t\exp\qty(c_1\int_0^s\eta(\tau)\dd\tau)c_1\eta(s)\dd s\right],\\
\norm{\bar{\*P}_t}_{\rm op}\leq&(c_1')^2\bar\sigma\exp\qty(-2c_1\int_0^t\eta(s)\dd s)\left[\int_0^t\exp\qty(2c_1\int_0^s\eta(\tau)\dd\tau)\eta(s)^2\dd s\right]\\
\leq&(c_1')^2\bar\sigma\exp\qty(-2c_1\int_0^t\eta(s)\dd s)\left[\int_0^t\exp\qty(2c_1\int_0^t\eta(\tau)\dd\tau)\eta(s)^2\dd s\right]\\
\leq&(c_1')^2\bar\sigma\int_0^t\eta(s)^2\dd s\leq(c_1')^2\bar\sigma\Gamma<\infty,
\end{aligned}
\]
where $\Gamma:=\int_0^\infty\eta(s)^2\dd s$. By L'Hopital's rule, we have that
\[
\begin{aligned}
&\lim_{t\rightarrow\infty}\exp\qty(-c_1\int_0^t\eta(s)\dd s)\left[\int_0^t\exp\qty(c_1\int_0^s\eta(\tau)\dd\tau)c_1\eta(s)\dd s\right]\\
=&\lim_{t\rightarrow\infty}\frac{c_1\exp\qty(c_1\int_0^t\eta(s)\dd s)\eta(t)}{c_1\exp\qty(c_1\int_0^t\eta(s)\dd s)\eta(t)}=1.
\end{aligned}
\]
Thus, we have $\limsup_t\norm{\bar{\*m_t}}\leq\ell$. Consequently, there exists a constant $C$ such that $\sup_t\norm{\bar{\*m}_t}\leq C<\infty$. By definition, $\E[\*m_t\*m_t^\top]=\bar{\*P}_t+\bar{\*m}_t\bar{\*m}_t^\top$. Therefore, for any $t\geq0$, we have $\norm{\E[\*m_t\*m_t^\top]}_{\rm op}\leq(c_1')^2\bar\sigma\Gamma+C^2$. Note that
\[
\E\qty[\norm{\*m_t}^2]={\rm Tr}\qty(\E[\*m_t\*m_t^\top])\leq{N}\norm{\E[\*m_t\*m_t^\top]}_{\rm op}.
\]
This proves the boundness of $\E[\norm{\*m_t}^2]$. Next, we prove the boundness of $\*v_t$. By \eqref{eq:Adam-SDE}, the solution of $\*v_t$ is given by
\[
\*v_t=\exp\qty(-\int_0^tc_2\eta(s)\dd s)\int_0^t\exp\qty(\int_0^sc_2\eta(\tau)\dd\tau)c_2\eta(s)\*d_s\dd s,
\]
where $\*d_s:={\rm diag}(\*\Sigma(\*X_s))$. By L'Hopital's rule, we have
\[
\begin{aligned}
\limsup_{t\rightarrow\infty}\norm{\*v_t}_\infty\leq&\sup_t\norm{\*d_t}_\infty\lim_{t\rightarrow\infty}\exp\qty(-\int_0^tc_2\eta(s)\dd s)\int_0^t\exp\qty(\int_0^sc_2\eta(\tau)\dd\tau)c_2\eta(s)\dd s\\
\leq&\sup_t\norm{\*d_t}_\infty=\bar\sigma,
\end{aligned}
\]
Therefore, there exists constant $V>0$ such that $\sup_{t\geq0}\norm{\*v_t}_\infty\leq V$. This completes the proof.
\end{proof}

\subsubsection{Proof of Theorem~\ref{thm:covergence-adam}}
\begin{proof}
Apply \ito's formula to $\phi_1(\*X_t,\*m_t,\*v_t):=f(\*X_t)+\frac{1}{2c_1}\inner{(\*v_t+\epsilon)^{-\frac{1}{2}}\odot\*m_t,\*m_t}$, we have
\[
\begin{aligned}
&\dd\phi_1(\*X_t,\*m_t,\*v_t)\\
=&-\eta(t)\inner{\nabla f(\*X_t),(\*v_t+\epsilon)^{-\frac{1}{2}}\odot\*m_t}\dd t-\eta(t)\inner{\*m_t-\nabla f(\*X_t),(\*v_t+\epsilon)^{-\frac{1}{2}}\odot\*m_t}\dd t\\
&+\frac{c_2\eta(t)}{4c_1}\inner{(\*v_t+\epsilon)^{-\frac{3}{2}}\odot\*m_t^2,\*v_t-{\rm diag}(\*\Sigma(\*X_t))}\dd t+\frac{(c_1')^2\eta(t)^2}{2c_1}\inner{{\rm Diag}((\*v_t+\epsilon)^{-\frac{1}{2}}),\*\Sigma(\*X_t)}\dd t\\
&+\frac{c_1'}{c_1}\eta(t)\inner{(\*v_t+\epsilon)^{-\frac{1}{2}}\odot\*m_t,\sigma(\*X_t)\dd\*W_t}\\
\leq&-\qty(1-\frac{c_2}{4c_1})\eta(t)\inner{(\*v_t+\epsilon)^{-\frac{1}{2}}\odot\*m_t,\*m_t}\dd t+\frac{(c_1')^2\eta(t)^2}{2c_1}\inner{{\rm Diag}((\*v_t+\epsilon)^{-\frac{1}{2}}),\*\Sigma(\*X_t)}\dd t\\
&+\frac{c_1'}{c_1}\eta(t)\inner{(\*v_t+\epsilon)^{-\frac{1}{2}}\odot\*m_t,\sigma(\*X_t)\dd\*W_t},
\end{aligned}
\]
where the last inequality is derived by noting that $\*v_t\geq0$ and $(\*v_t+\epsilon)^{-\frac{3}{2}}\odot\*v_t\leq(\*v_t+\epsilon)^{-\frac{1}{2}}$. Taking integral and then taking expectation, we have
\[
\begin{aligned}
&\E[\phi_1(\*X_t,\*m_t,\*v_t)-\phi_1(\*X_0,\*m_0,\*v_0)]\\
\leq&-\E\left[\int_0^t\qty(1-\frac{c_2}{4c_1})\eta(s)\inner{(\*v_s+\epsilon)^{-\frac{1}{2}}\odot\*m_s,\*m_s}\dd s\right]\\
&+\E\left[\int_0^t\frac{(c_1')^2\eta(s)^2}{2c_1}\inner{{\rm Diag}((\*v_s+\epsilon)^{-\frac{1}{2}}),\*\Sigma(\*X_s)}\dd s\right].
\end{aligned}
\]
Therefore, it holds that
\begin{equation}
\E\left[\overline{\norm{\*m_t}^2}\right]\leq\frac{\sqrt{V+\epsilon}\left(\phi_1(\*X_0,\*m_0,\*v_0)- f_{\min}\right)}{\qty(1-\frac{c_2}{4c_1})\int_0^t\eta(s)\dd s}+\frac{\frac{(c_1')^2}{2c_1} \bar\sigma\sqrt{V+\epsilon}\int_0^t\eta(s)^2\dd s}{\qty(1-\frac{c_2}{4c_1})\sqrt{\epsilon}\int_0^t\eta(s)\dd s}.
\label{eq:adam_m_est}
\end{equation}
Next, we derive the bound for the gradient. We construct a novel Lyapunov function $\phi_2(\*X_t,\*m_t,\*v_t):=f(\*X_t)-\frac{1}{c_1}\inner{\nabla f(\*X_t),(\*v_t+\epsilon)^{-\frac{1}{2}}\odot\*m_t}$ which links the noiseless gradient with the momentum. Applying \ito's formula to $\phi_2$ yields that
\[
\begin{aligned}
&\dd\phi_2(\*X_t,\*m_t,\*v_t)\\
=&-\eta(t)\inner{\nabla f(\*X_t),(\*v_t+\epsilon)^{-\frac{1}{2}}\odot\*m_t}\dd t+\frac{\eta(t)}{c_1}\inner{\nabla^2f(\*X_t)(\*v_t+\epsilon)^{-\frac{1}{2}}\odot\*m_t,(\*v_t+\epsilon)^{-\frac{1}{2}}\odot\*m_t}\dd t\\
&+\eta(t)\inner{\nabla f(\*X_t)\odot(\*v_t+\epsilon)^{-\frac{1}{2}},\*m_t-\nabla f(\*X_t)}\dd t-\frac{c_1'\eta(t)}{c_1}\inner{\nabla f(\*X_t)\odot(\*v_t+\epsilon)^{-\frac{1}{2}},\sigma(\*X_t)\dd\*W_t}\\
&-\frac{c_2\eta(t)}{2c_1}\inner{\nabla f(\*X_t)\odot\*m_t\odot(\*v_t+\epsilon)^{-\frac{3}{2}},\*v_t-{\rm diag}(\*\Sigma(\*X_t))}\dd t.
\end{aligned}
\]
Taking integral and taking expectation, we have
\[
\begin{aligned}
&\E\left[\phi_2(\*X_0,\*m_0,\*v_0)-\phi_2(\*X_t,\*m_t,\*v_t)\right]  \\
\leq&\frac{L}{c_1}\E\qty[\int_0^t\eta(s)\norm{\*m_s\odot(\*v_s+\epsilon)^{-\frac{1}{2}}}^2\dd s]-\E\qty[\int_0^t\eta(s)\inner{\nabla f(\*X_s)\odot(\*v_s+\epsilon)^{-\frac{1}{2}},\nabla f(\*X_s)}\dd s]\\
&-\frac{c_2}{2c_1}\E\qty[\int_0^t\eta(s)\inner{\nabla f(\*X_s)\odot\*m_s\odot(\*v_s+\epsilon)^{-\frac{3}{2}},\*v_s}\dd s]\\
&+\frac{c_2\bar\sigma}{2c_1}\E\qty[\int_0^t\eta(s)\inner{\nabla f(\*X_s)\odot\*m_s,(\*v_s+\epsilon)^{-\frac{3}{2}}}\dd s]\\
\leq&\frac{L}{c_1\epsilon}\E\qty[\int_0^t\eta(s)\norm{\*m_s}^2\dd s]-\frac{1}{\sqrt{V+\epsilon}}\E\qty[\int_0^t\eta(s)\norm{\nabla f(\*X_s)}^2\dd s]\\
&+\frac{1}{4\sqrt{V+\epsilon}}\E\qty[\int_0^t\eta(s)\norm{\nabla f(\*X_s)}^2\dd s]+\frac{c_2^2\sqrt{V+\epsilon}}{4c_1^2\epsilon}\E\qty[\int_0^t\eta(s)\norm{\*m_s}^2\dd s]\\
&+\frac{1}{4\sqrt{V+\epsilon}}\E\qty[\int_0^t\eta(s)\norm{\nabla f(\*X_s)}^2\dd s]+\frac{c_2^2\bar\sigma^2\sqrt{V+\epsilon}}{4\epsilon^3c_1^2}\E\qty[\int_0^t\eta(s)\norm{\*m_s}^2\dd s],\\
=&\frac{L}{c_1\epsilon}\E\qty[\int_0^t\eta(s)\norm{\*m_s}^2\dd s]-\frac{1}{2\sqrt{V+\epsilon}}\E\qty[\int_0^t\eta(s)\norm{\nabla f(\*X_s)}^2\dd s]\\
&+\qty(1+\frac{\bar\sigma^2}{\epsilon^2})\frac{c_2^2\sqrt{V+\epsilon}}{4c_1^2\epsilon}\E\qty[\int_0^t\eta(s)\norm{\*m_s}^2\dd s].
\end{aligned}
\]
The last inequality comes from Cauchy-Young's inequality:
\[
\begin{aligned}
&\frac{c_2}{2c_1}\bigg|\inner{\nabla f(\*X_s)\odot\*m_s\odot(\*v_s+\epsilon)^{-\frac{3}{2}},\*v_s}\bigg|=\frac{c_2}{2c_1}\bigg|\inner{\nabla f(\*X_s)\odot\*v_s\odot(\*v_s+\epsilon)^{-\frac{3}{2}},\*m_s}\bigg|\\
\leq&\frac{c_2}{2c_1\sqrt{\epsilon}}\bigg|\inner{\nabla f(\*X_s),\*m_s}\bigg|\leq\frac{c_2}{2c_1\sqrt{\epsilon}}\norm{\nabla f(\*X_s)}\norm{\*m_s}\leq\frac{1}{4\sqrt{V+\epsilon}}\norm{\nabla f(\*X_s)}^2+\frac{c_2^2\sqrt{V+\epsilon}}{4c_1^2\epsilon}\norm{\*m_s}^2.
\end{aligned}
\]
and similarly,
\[
\frac{c_2\bar\sigma}{2c_1}\bigg|\inner{\nabla f(\*X_s)\odot\*m_s,(\*v_s+\epsilon)^{-\frac{3}{2}}}\bigg|\leq\frac{1}{4\sqrt{V+\epsilon}}\norm{\nabla f(\*X_s)}^2+\frac{c_2^2\bar\sigma^2\sqrt{V+\epsilon}}{4\epsilon^3c_1^2}\norm{\*m_s}^2.
\]
Then, we have
\[
\begin{aligned}
\E\left[\overline{\norm{\nabla f(\*X_t)}^2}\right]\leq&\frac{2\sqrt{V+\epsilon}\left(\phi_2(\*X_0,\*m_0,\*v_0)-\min_t\E\qty[\phi_2(\*X_t,\*m_t,\*v_t)]\right)}{\int_0^t\eta(s)\dd s}\\
&+\qty(\frac{2L\sqrt{V+\epsilon}}{c_1\epsilon}+\qty(1+\frac{\bar\sigma^2}{\epsilon^2})\frac{c_2^2(V+\epsilon)}{2c_1^2\epsilon})\E\left[\overline{\norm{\*m_t}^2}\right].
\end{aligned}
\]
Note that $\bigg|\E\qty[\inner{\nabla f(\*X_t),(\*v_t+\epsilon)^{-\frac{1}{2}}\odot\*m_t}]\bigg|\leq\frac{\ell}{\sqrt{\epsilon}}\sqrt{\E\qty[\norm{\*m_t}^2]}\leq\frac{\ell M\sqrt{N}}{\sqrt{\epsilon}} $. Combined with \eqref{eq:adam_m_est}, we prove the bound for $\E\left[\overline{\norm{\nabla f(\*X_t)}^2}\right]$. This completes the proof.
\end{proof}

\subsection{\blue{Proofs for Escaping Probability}}\label{sec:supp:escape}
\subsubsection{Proof of Proposition \ref{prop:P-SGD}}
\begin{proof}
Note that $\nabla f(\*x^*)=0$, we have $\dv{\bar{\*x}_t}{t}=\*0$. So $\bar{\*x}_t$ remains constant at $\*x^*$, and $\nabla^2f(\bar{\*x}_t)\equiv\nabla^2f(\*x^*)$. Then, the ODE \eqref{eq:SGD-ODE-approx} comes from \eqref{eq:SDE-GA}. Let $\*p(t)=\myvec(\*P(t))$, $\*b=\myvec(\*\Sigma)$, where ${\rm vec}(\cdot)$ is the vectorization of a matrix by column order. Then \eqref{eq:SGD-ODE-approx} for $\*P(t)$ is equivalent to the following ODE for vector-valued function $\*p(t)$:
\[
\frac{\dd\*p(t)}{\dd t}=-\eta(t)\*Q\*p(t)+\eta_0\eta(t)^2\*b,
\]
where $\*Q=\*I\otimes\*H+\*H\otimes\*I$, where $\otimes$ is the Kronecker product. Then, we have
\[
\*p(t)=\exp\qty(-\int_0^t\eta(s)\*Q\dd s)\left(\int_0^t\exp\qty(\int_0^s\eta(\tau)\*Q\dd\tau)\eta_0\eta(s)^2\*b\dd s\right).
\]
Note that
\[
\begin{aligned}
&\exp\qty(\int_0^s\eta(\tau)\qty(\*I\otimes\*H+\*H\otimes\*I)\dd\tau){\rm vec}\qty(\*\Sigma)\\
=&\exp\qty(\*H\int_0^s\eta(\tau)\dd\tau)\otimes\exp\qty(\*H\int_0^s\eta(\tau)\dd\tau){\rm vec}\qty(\*\Sigma)\\
=&{\rm vec}\qty(\exp\qty(\*H\int_0^s\eta(\tau)\dd\tau)\*\Sigma\exp\qty(\*H\int_0^s\eta(\tau)\dd\tau)).
\end{aligned}
\]
Denote $\*A(t):=\exp\qty(-\int_0^t\eta(s)\*H\dd s)$. Then, it holds that $(\*A(t))^{-1}:=\exp\qty(\int_0^t\eta(s)\*H\dd s)$. Hence, we obtain:
\[
\begin{aligned}
\*p(t)=&\exp\qty(-\int_0^t\eta(s)(\*I\otimes\*H+\*H\otimes\*I)\dd s){\rm vec}\qty(((\*A(s))^{-1})\*\Sigma((\*A(s))^{-1})\eta_0\eta(s)^2\dd s)\\
=&\*A(t)\otimes\*A(t){\rm vec}\qty(\int_0^t((\*A(s))^{-1})\*\Sigma((\*A(s))^{-1})\eta_0\eta(s)^2\dd s)\\
=&{\rm vec}\qty(\*A(t)\qty(\int_0^t((\*A(s))^{-1})\*\Sigma((\*A(s))^{-1})\eta_0\eta(s)^2\dd s)\*A(t)).
\end{aligned}
\]
Consequently, the matrix $\*P(t)$ is expressed as:
\[
\*P(t)=\*A(t)\qty(\int_0^t((\*A(s))^{-1})\*\Sigma((\*A(s))^{-1})\eta_0\eta(s)^2\dd s)\*A(t).
\]
This completes the proof.
\end{proof}

\subsubsection{Proof of Proposition \ref{prop:P-Adam}}
We first recall the Gaussian approximations for a general SDE \citep{sarkka2019applied}. Considering the SDE with initial condition $\*x_{t_0}=\*x_0$:
\[
\dd\*x_t=\*G(\*x,t)\dd t+\*L(\*x,t)\dd\*W_t
\]
the linearization aproximation of its FPK equation yields the following differential equations for $\bar{\*x}(t)=\bb{E}[\*x_t]$ and $\*P(t)={\rm Cov}(\*x_t)$ with the initial condition $\bar{\*x}_{t_0}=\bb{E}[\*x_0]$, $\*P_{t_0}={\rm Cov}(\*x_{t_0})$:
\begin{equation}
\left\{
\begin{aligned}
\frac{\dd\bar{\*x}}{\dd t}&=\*G(\bar{\*x},t)\\
\frac{\dd\*P}{\dd t}&=\*P\*G_{\*x}^\top(\bar{\*x},t)+\*G_{\*x}(\bar{\*x},t)\*P+\*L(\bar{\*x},t)\*L(\bar{\*x},t)^\top.
\end{aligned}\right.
\label{eq:general-GA}
\tag{general-GA}
\end{equation}
\begin{proof}
Let $\*Z_0:=[\*X_0;\*m_0;\*v_0]=[\*x^*;\*0;{\rm diag}(\*\Sigma(\*x^*))]$. Then $\bar{\*z}_0=\*Z_0$ is a zero of $\*F(\*Z)$ defined in \eqref{eq:Adam-Z}. Therefore, the trajectory $\bar{\*z}_t$ remains at $\*Z_0$. Note that $\pdv{F(\*Z_0)}{\*Z}=\widehat{\*H}$. Then \eqref{eq:Adam-ODE-approx} can be derived by substituting $\bar{\*z}_t=\*Z_0$, \eqref{eq:Adam-Z} and \eqref{eq:dF-Adam-SDE} into \eqref{eq:general-GA}. The remainder of the proof follows similarly to that of Proposition \ref{prop:P-SGD}, and thus is omitted for brevity.
\end{proof}

\subsubsection{Anti-concentration for Gaussian}
Before proving the main results for the escape probability, we need a general anti-concentration inequality for Gaussian variables, as considered in other scenarios~\citep{carbery2001distributional,tu2023elementary}.
\begin{mylemma}
\label{le:anti}
    Assume $\*x \sim \mathcal{N}\qty(\bm{\mu},\*\Sigma)$, for any $\varepsilon\in(0,\Tr(\*\Sigma))$, we have
    \[
    \bb P\qty[\norm{\*x-\bm{\mu}}^2 \leq \varepsilon] \leq \sqrt{\frac{e \varepsilon}{\Tr (\*\Sigma)}}.
    \]
\end{mylemma}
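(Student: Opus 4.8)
The plan is to reduce the statement to a one-dimensional tail estimate for a quadratic form in independent standard Gaussians, and then run a one-sided Chernoff (Cram\'er) argument using the Laplace transform of that quadratic form. First I would translate so that $\bm{\mu}=\*0$, and diagonalize $\*\Sigma=\*U\,\mathrm{Diag}(\lambda_1,\dots,\lambda_N)\,\*U^\top$ with eigenvalues $\lambda_i\geq 0$. Since $\*U$ is orthogonal, $\norm{\*x-\bm{\mu}}^2$ has the same law as $\sum_{i=1}^N\lambda_i g_i^2$ with $g_i$ i.i.d.\ $\mathcal{N}(0,1)$, and $\sum_{i=1}^N\lambda_i=\Tr(\*\Sigma)$. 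This turns the problem into bounding $\bb P[\sum_i\lambda_i g_i^2\leq\varepsilon]$ from above.

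Next, for any $t>0$, Markov's inequality applied to $e^{-t\sum_i\lambda_i g_i^2}$ together with the standard Gaussian Laplace transform $\E[e^{-sg^2}]=(1+2s)^{-1/2}$ for $s\ge 0$ gives
\[
\bb P\qty[\norm{\*x-\bm{\mu}}^2\leq\varepsilon]\;\leq\; e^{t\varepsilon}\,\E\qty[e^{-t\sum_i\lambda_i g_i^2}]\;=\;\frac{e^{t\varepsilon}}{\prod_{i=1}^N\sqrt{1+2t\lambda_i}}.
\]
The key elementary step is the lower bound $\prod_{i=1}^N(1+2t\lambda_i)\geq 1+2t\sum_{i=1}^N\lambda_i=1+2t\,\Tr(\*\Sigma)$, which holds simply because expanding the product produces $1+2t\sum_i\lambda_i$ plus a sum of nonnegative cross terms; note this is what replaces a naive AM--GM bound, which would point the inequality the wrong way. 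Hence $\bb P[\norm{\*x-\bm{\mu}}^2\leq\varepsilon]\leq e^{t\varepsilon}(1+2t\,\Tr(\*\Sigma))^{-1/2}$ for every $t>0$.

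Finally I would optimize over $t$. Setting the derivative of $t\varepsilon-\tfrac12\log(1+2t\,\Tr(\*\Sigma))$ to zero yields $1+2t\,\Tr(\*\Sigma)=\Tr(\*\Sigma)/\varepsilon$, i.e.\ $t^\star=\dfrac{\Tr(\*\Sigma)-\varepsilon}{2\varepsilon\,\Tr(\*\Sigma)}$, which is strictly positive precisely because $0<\varepsilon<\Tr(\*\Sigma)$. Substituting, $(1+2t^\star\Tr(\*\Sigma))^{-1/2}=\sqrt{\varepsilon/\Tr(\*\Sigma)}$ and $e^{t^\star\varepsilon}=e^{(\Tr(\*\Sigma)-\varepsilon)/(2\Tr(\*\Sigma))}\leq e^{1/2}$, which combine to give the claimed bound $\sqrt{e\varepsilon/\Tr(\*\Sigma)}$. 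I do not expect any genuine obstacle: the only points needing care are the nonnegativity argument for the product lower bound and verifying the sign of $t^\star$, both immediate from the hypothesis $0<\varepsilon<\Tr(\*\Sigma)$.
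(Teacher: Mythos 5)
Your proposal is correct and is essentially the same argument as the paper's: a Chernoff/Markov bound on $e^{-t\norm{\*x-\bm{\mu}}^2}$, the elementary lower bound $\prod_i(1+2t\lambda_i)\geq 1+2t\Tr(\*\Sigma)$, and optimization at $t^\star=\frac{\Tr(\*\Sigma)-\varepsilon}{2\varepsilon\Tr(\*\Sigma)}$. The only cosmetic difference is that you compute the Laplace transform by diagonalizing and multiplying one-dimensional transforms, while the paper evaluates the Gaussian integral directly to get the same determinant $\det(2t\*\Sigma+\*I_N)$.
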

\begin{proof}
    Applying Chernoff's bound, we have:
    \[
    \begin{aligned}
       & \bb P\qty[\norm{\*x-\bm{\mu}}^2\leq \varepsilon] \leq  \inf_{\lambda>0}\qty{\exp(\lambda \varepsilon)\int_{\mathbb{R}^N} \frac{1}{\sqrt{(2\pi)^N \det^*\qty(\*\Sigma)}}\exp(-\lambda\norm{\*x}^2 - \frac{1}{2}\*x^\top\pinv{\*\Sigma}\*x )\dd \*x}\\
       = &  \inf_{\lambda>0}\qty{\exp(\lambda \varepsilon)\int_{\mathbb{R}^N} \frac{1}{\sqrt{(2\pi)^N \det^*\qty(\*\Sigma)}}\exp( - \frac{1}{2}\*x^\top\qty(2\lambda\*I_N+\pinv{\*\Sigma})\*x )\dd \*x}\\
        = &  \inf_{\lambda>0}\qty{\exp(\lambda \varepsilon) \sqrt{\frac{(2\pi)^N\det\qty(\qty(2\lambda\*I_N+\pinv{\*\Sigma})^{-1})}{(2\pi)^N \det^*\qty(\*\Sigma)}}}
       =   \inf_{\lambda>0}\qty{\exp(\lambda \varepsilon) \frac{1}{\sqrt{\det \qty(2\lambda \*\Sigma + \*I_N)}}}\\
       \overset{(a)}{\leq} & \inf_{\lambda>0}\qty{\exp(\lambda \varepsilon) \frac{1}{\sqrt{1 + 2\lambda \Tr \qty(\*\Sigma)}}}
       \overset{(b)}{=}  \sqrt{\frac{\varepsilon}{\Tr(\*\Sigma)}}\exp(\frac{\Tr(\*\Sigma)-\varepsilon}{2\Tr(\*\Sigma)})
       \leq  \sqrt{\frac{e \varepsilon}{\Tr (\*\Sigma)}},
    \end{aligned}
    \]
    where $\pinv{\*\Sigma}$ is the Moore-Penrose pseudoinverse of $\*\Sigma$ and $\det^*(\cdot)$ is the pseudo-determinant. In the above, $(a)$ comes from
    \[
    \det \qty(2\lambda \*\Sigma + \*I_N) = \prod_{i=1}^N \qty(1+2\lambda \sigma_i) \geq 1+2\lambda\sum_{i=1}^N \sigma_i = 1 + 2\lambda \Tr \qty(\*\Sigma).
    \]
    where $\sigma_i$ is the $i$-th eigenvalue of $\*\Sigma$, and $(b)$ is optimized by setting $\lambda = \frac{\Tr (\*\Sigma)-\varepsilon}{2 \varepsilon \Tr (\*\Sigma)}$.
\end{proof}

\subsubsection{Proof of Theorem~\ref{thm:SGD-prob}}
\begin{proof}
Let $\*U$ be an orthogonal matrix such that $\*H=\*U\*\Lambda\*U^\top$, where $\*\Lambda$ is the diagonal matrix of eigenvalues of $\*H$. Let $\lambda_{\max}$ be the maximal eigenvalue of $\*H$. By equation \eqref{eq:SGD-P_t}, we have that
\[
\begin{aligned}
{\rm Tr}\qty(\*P(t))=&{\rm Tr}\qty(\int_0^t\exp\qty(-\*H\int_s^t\eta(\tau)\dd\tau)\*\Sigma\exp\qty(-\*H\int_s^t\eta(\tau)\dd\tau)\eta_0\eta(s)^2\dd s)\\
=&{\rm Tr}\qty(\int_0^t\exp\qty(-\*\Lambda\int_s^t\eta(\tau)\dd\tau)\*U^\top\*\Sigma\*U\exp\qty(-\*\Lambda\int_s^t\eta(\tau)\dd\tau)\eta_0\eta(s)^2\dd s)\\
\geq&\eta_0{\rm Tr}\qty(\*\Sigma)\int_0^t\exp\qty(-2\lambda_{\max}\int_s^t\eta(\tau)\dd\tau)\eta(s)^2\dd s\\
\geq&\eta_0{\rm Tr}\qty(\*\Sigma)\int_0^t\exp\qty(-2\lambda_{\max}\eta_{\max}(t-s))\eta(s)^2\dd s\\
\geq&C\eta_0{\rm Tr}\qty(\*\Sigma)\int_0^t\eta(s)^2\dd s,
\end{aligned}
\]
where $C:=\inf_{t\geq0}\frac{\int_0^t\exp\qty(-2\lambda_{\max}\eta_{\max}(t-s))\eta(s)^2\dd s}{\int_0^t\eta(s)^2\dd s}$. Next, we demonstrate that \( C > 0 \) by considering two cases. When $\int_0^\infty\eta(s)^2\dd s<\infty$, it is clear that $C>0$. When $\int_0^\infty\eta(s)^2\dd s=\infty$, by L'Hopital's rule, we have
\[
\lim_{t\rightarrow\infty}\frac{\int_0^t\exp\qty(-2\lambda_{\max}\eta_{\max}(t-s))\eta(s)^2\dd s}{\int_0^t\eta(s)^2\dd s}=\lim_{t\rightarrow\infty}\frac{\eta(t)^2}{\eta(t)^2}=1.
\]
Thus, $C>0$. The condition  $\eta(0)=\eta_{\max}$ and $\eta(T)=0$ leads to the following inequality derived from the Cauchy-Schwarz inequality:
\[
\begin{aligned}
&\left(\int_0^T\eta(s)^2\dd s\right)\left(\int_0^T\eta'(s)^2\dd s\right)\geq\left(\int_0^T\eta(s)\eta'(s)\dd s\right)^2.
\end{aligned}
\]
Integrating by parts yields:
\[
\begin{aligned}
\int_0^T\eta(s)\eta'(s)\dd s=\eta(s)^2\bigg|_0^T-\int_0^T\eta(s)\eta'(s)\dd s,
\end{aligned}
\]
which results in
\[
\int_0^T\eta(s)\eta'(s)\dd s=-\frac{\eta^2_{\max}}{2}.
\]
This result leads to the following inequality:
\[
\frac{\left(\frac{\eta^2_{\max}}{2}\right)^2}{\int_0^T\eta(s)^2\dd s}\leq\int_0^T\eta'(s)^2\dd s.
\]
Now, suppose Assumption \ref{asm:zero-mean} holds. For any fix $\delta\in(0,1)$, by Proposition \ref{prop:sigma_bound} and setting $t=\delta\Tr(\*\Sigma_g)$,  we have
\[
\begin{aligned}
&\mathbb{P}\left\{(1-\delta)\Tr(\*\Sigma_g)\leq\Tr(\*\Sigma(\*x))\leq(1+\delta)\Tr(\*\Sigma_g)\right\}\\
\geq&1-2\exp\left\{-\frac{D\delta^2(\Tr(\*\Sigma_g))^2}{4\Tr(\*\Sigma_g^2)+2\delta\Tr(\*\Sigma_g)\norm{\*\Sigma_g}_{\rm op}}\right\}.
\end{aligned}
\]
Denote the event $\mathcal{E}:=\{(1-\delta)\Tr(\*\Sigma_g)\leq\Tr(\*\Sigma(\*x))\leq(1+\delta)\Tr(\*\Sigma_g)\}$ and $\mathcal{A}:=\{\norm{\*X_T-\*x^*}^2 \leq \varepsilon\}$. By Proposition \ref{prop:P-SGD}, $\bb{E}[\*X_T]=\bar{\*x}_T=\*x^*$. Then, applying Lemma \ref{le:anti}, we get
\[
\begin{aligned}
&\bb P\left\{\norm{\*X_T-\*x^*}^2 \leq \varepsilon\big|\mathcal{E}\right\} \leq\sqrt{\frac{ e \varepsilon}{\Tr(\*P_t)}}
\leq\sqrt{\frac{e\varepsilon}{C\eta_0{\rm Tr}\qty(\*\Sigma)\qty(\frac{\eta_{\max}^2}{2})^2}\int_0^T\eta'(s)^2\dd s}\\
\leq&\sqrt{\frac{e\varepsilon}{C\eta_0(1-\delta){\rm Tr}\qty(\*\Sigma_g)\qty(\frac{\eta_{\max}^2}{2})^2}\int_0^T\eta'(s)^2\dd s}.
\end{aligned}
\]
Finally, we have
\[
\begin{aligned}
&\mathbb{P}\{\mathcal{A}\}=\mathbb{P}\{\mathcal{A}|\cE\}\bbP\{\cE\}+\bbP\{\cA|\cE^c\}\bbP\{\cE^c\}\leq\mathbb{P}\{\mathcal{A}|\cE\}+\bbP\{\cE^c\}\\
\leq&\sqrt{\frac{e\varepsilon}{C\eta_0(1-\delta){\rm Tr}\qty(\*\Sigma_g)\qty(\frac{\eta_{\max}^2}{2})^2}\int_0^T\eta'(s)^2\dd s}+2\exp\left\{-\frac{D\delta^2(\Tr(\*\Sigma_g))^2}{4\Tr(\*\Sigma_g^2)+2\delta\Tr(\*\Sigma_g)\norm{\*\Sigma_g}_{\rm op}}\right\}.
\end{aligned}
\]
Since $D\gg N$, the second term is sufficiently small compared with the first term. Finally, we derive the bound
\[
\bb P[\norm{\*X_T-\*x^*}^2 \leq \varepsilon ] = \order{\qty(\frac{\varepsilon  }{\eta^4_{\max} \sigma_g^2 }\int_0^T \eta^\prime(s)^2 \dd s)^{1/2}}.
\]
This completes the proof.
\end{proof}

\subsubsection{Proof of Theorem~\ref{thm:Adam-prob}}
\begin{proof}
The proof follows similarly to the proof of Theorem~\ref{thm:SGD-prob}, and hence we omit the details.
\end{proof}

\section{\blue{Limitations and Scope of Validity}}\label{sec:limitations}

\blue{The \abbr{} framework is developed for learning-rate schedules that start from zero, vary over time, and decay to zero. Below we describe the boundary conditions of the current formulation. First, for fixed learning-rate schedules the derivative terms $\eta'_t$ vanish, and the escape-related features reduce to zero. The law then contains only convergence and scale terms, which overlaps with the role of standard scaling laws. \abbr{} is therefore most informative for schedules with nontrivial warmup and cooldown phases. Second, the SDE escape analysis (Theorems~\ref{thm:SGD-prob} and~\ref{thm:Adam-prob}) assumes that the learning rate eventually decays to zero, which is standard in the SDE optimization literature. When the final learning rate is non-zero, the escape bounds require further extension. Generalizing the analysis to non-zero terminal LR is an interesting direction for future work. Third, the current empirical validation covers piecewise-linear, cosine, constant-plus-linear, and polygon schedule families. More complex schedule types such as cyclic schedules or warmup restarts have not been tested and remain a direction for future investigation.}

\vskip 0.2in
\bibliography{scaling}

\end{document}